\definecolor{darkblue}{rgb}{0.0, 0.0, 0.5}
\definecolor{darkred}{rgb}{0.6, 0.0, 0.0}
\theoremstyle{plain}
\newtheorem{theorem}{Theorem}[section]
\newtheorem{lemma}[theorem]{Lemma}
\theoremstyle{definition}
\newtheorem{assumption}[theorem]{Assumption}
\theoremstyle{remark}
\newtheorem{remark}[theorem]{Remark}
\newlength\myindent
\newcommand{\bbeta}{{\boldsymbol \beta}}
\newcommand{\bxi}{{\boldsymbol \xi}}
\newcommand{\bphi}{{\boldsymbol \phi}}
\newcommand{\by}{{\boldsymbol y}}
\newcommand{\bW}{{\boldsymbol W}}
\newcommand{\la}{\langle}
\newcommand{\dd}{\mathcal{\dagger}}
\newcommand{\ea}{\end{array}}
\newcommand{\ee}{\end{equation}}
\newcommand{\bea}{\begin{eqnarray}}
\newcommand{\eea}{\end{eqnarray}}
\newcommand{\beaa}{\begin{eqnarray*}}
\newcommand{\eeaa}{\end{eqnarray*}}
\def\Si{\Sigma}
\def\cE{{\cal E}}
\def\cL{{\cal L}}
\def\qq{\qquad}
\def\pa{\partial}
\def\cd{\cdot}
\def\by{{\bf y}}
\def\qed{ \hfill\qedsymbol}
\newcommand{\basa}{\begin{assumption}}
\newcommand{\easa}{\end{assumption}}
\newcommand{\bas}{\begin{assum}}
\newcommand{\eas}{\end{assum}}
\def\dd{\mathrm{d}}
\def\limP2{\,\mathop{\buildrel \Pi_2\over\longrightarrow\,}}
\def\pa{\partial}
 \def\cd{\cdot}
\def\diam{\hbox{\rm diam$\,$}}
\def\1{{\bf 1}}
\def\by{{\bf y}}
\def\:{\!:\!}
\renewenvironment{proof}[1][Proof]{\begin{trivlist}
\item[\hskip \labelsep {\bfseries #1}]}{\qed\end{trivlist}}
\def\cE{{\cal E}}
\def\cE{{\cal E}}
\def\cE{{\cal E}}
\newtheorem{assump}{Assumption}
\def\cE{{\cal E}}
\def\cL{{\cal L}}
\def\cE{{\cal E}}
\newcommand{\ba}{\begin{array}}
\newcommand{\be}{\begin{equation}}
\newcommand{\ra}{\rangle}
\icmltitlerunning{Reflected Replica Exchange Stochastic Gradient Langevin Dynamics}
\begin{document}

\twocolumn[
\icmltitle{Constrained Exploration via Reflected Replica Exchange\\ Stochastic Gradient Langevin Dynamics}



\icmlsetsymbol{equal}{*}

\begin{icmlauthorlist}
\icmlauthor{Haoyang Zheng}{purdue,equal}
\icmlauthor{Hengrong Du}{vandy,equal}
\icmlauthor{Qi Feng}{fsu,equal}
\icmlauthor{Wei Deng}{morgan}
\icmlauthor{Guang Lin}{purdue}
\end{icmlauthorlist}

\icmlaffiliation{purdue}{Purdue University, West Lafayette, IN}
\icmlaffiliation{morgan}{Machine Learning Research, Morgan Stanley, New York, NY}
\icmlaffiliation{vandy}{Vanderbilt University, Nashville, TN}
\icmlaffiliation{fsu}{Florida State University, Tallahassee, FL}

\icmlcorrespondingauthor{Wei Deng}{weideng056@gmail.com}
\icmlcorrespondingauthor{Guang Lin}{guanglin@purdue.edu}

\icmlkeywords{Machine Learning, ICML}
\vskip 0.3in
]



\printAffiliationsAndNotice{\icmlEqualContribution} 

\begin{abstract}
Replica exchange stochastic gradient Langevin dynamics (reSGLD) \cite{deng2020} is an effective sampler for non-convex learning in large-scale datasets. However, the simulation may encounter stagnation issues when the high-temperature chain delves too deeply into the distribution tails. To tackle this issue, we propose reflected reSGLD (r2SGLD): an algorithm tailored for constrained non-convex exploration by utilizing reflection steps within a bounded domain. Theoretically, we observe that reducing the diameter of the domain enhances mixing rates, exhibiting a \emph{quadratic} behavior. Empirically, we test its performance through extensive experiments, including identifying dynamical systems with physical constraints, simulations of constrained multi-modal distributions, and image classification tasks. The theoretical and empirical findings highlight the crucial role of constrained exploration in improving the simulation efficiency.
\end{abstract}

\section{Introduction}
Stochastic gradient Langevin dynamics (SGLD)~\cite{Welling11} is the go-to  Markov Chain Monte Carlo (MCMC) method in big data. The sampler smoothly transitions from stochastic optimization to sampling as the step size decreases and the injected noise enables exploration for posterior sampling. However, the simulation efficiency is severely affected by the pathological curvature of the energy landscape. To address these challenges, the  Quasi-Newton Langevin dynamics ~\cite{Ahn12, Simsekli2016, Li19} proposes to adapt to the varying curvature for more efficient exploration of the state space; Hamiltonian Monte Carlo (HMC) introduces auxiliary momentum variables and simulates the Hamiltonian dynamics to explore the periodic orbit~\cite{Neal2012,campbell2021gradient,zou2021convergence,wang2022accelerating}; Higher-order numerical techniques offer more efficient simulation by maintaining the stability using a larger stepsize~\cite{Chen15, Li19}.

Although the above samplers effectively address pathological curvatures, they are still susceptible to getting trapped in local regions during non-convex sampling. To circumvent this local trap phenomenon, importance samplers ~\cite{berg1992multicanonical, wang2001efficient, CSGLD, icsgld} inspired by statistical physics encourage the particles to explore the high energy tails of the distribution; simulated tempering~\cite{ST, Holden18} proposes to escape local optima by adjusting temperatures to explore various energy levels of the distribution, which offer viable solutions to these challenges. Building upon these, replica exchange Langevin dynamics (reLD)~\cite{PhysRevLett86, parallel_tempering05} and the big data extensions via reSGLD \cite{deng2020} utilize multiple diffusion processes at diverse temperatures and incorporate swapping during training, thereby enabling high-temperature processes to act as a bridge across various local modes. Its accelerated convergence has been both theoretically quantified \citep{Paul12, jingdong3} and empirically validated \cite{deng2020}.

\begin{wrapfigure}{r}{0.24\textwidth}
   \begin{center}
   \vskip -0.2in
     \includegraphics[width=0.24\textwidth]{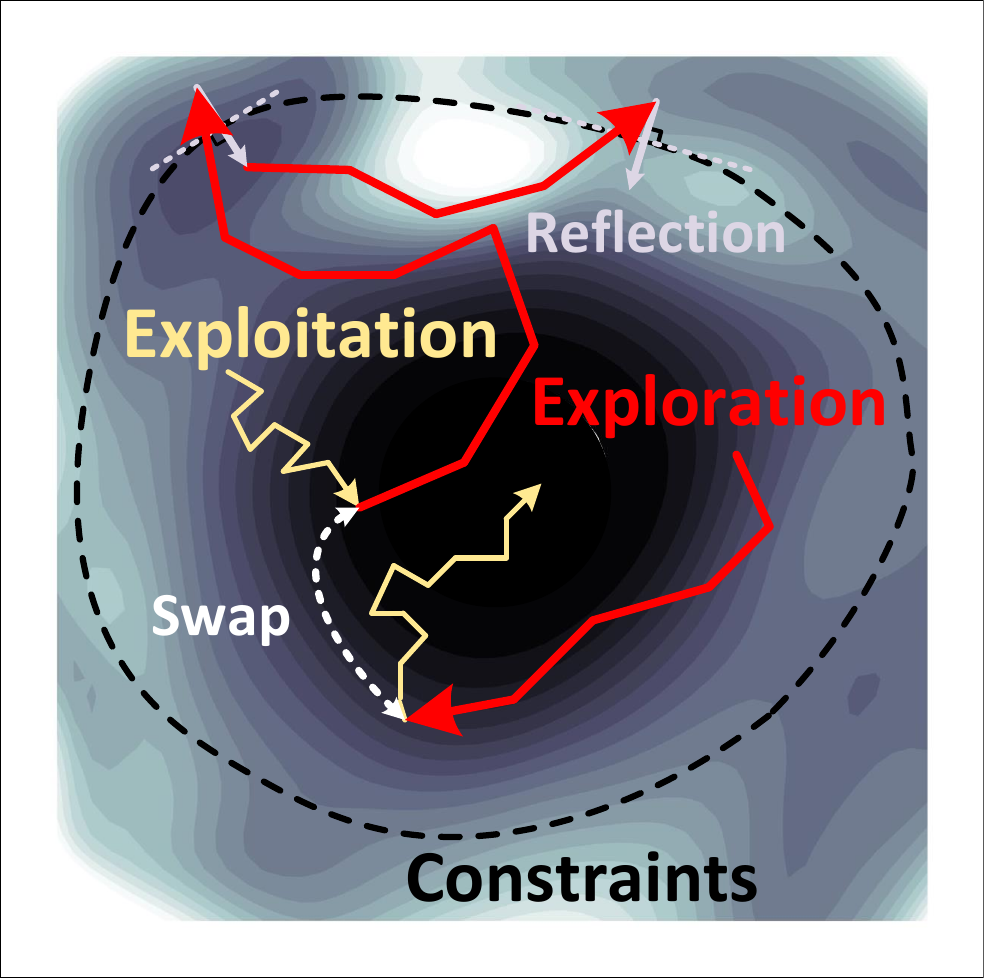}
   \end{center}
   \vskip -0.18in
   \caption{\footnotesize{Trajectory of r2SGLD.}}
   \label{path_demo}
   \vspace{-0.15 in}
\end{wrapfigure}


The na\"ive reSGLD algorithm, while effectively navigating non-convex landscapes, still faces over-exploration issues in high-temperature chains, which is often out of our interest in the optimization perspective. This is partly attributed to non-Arrhenius behavior \cite{zheng2007simulating, sindhikara2010exchange}, where increasing temperatures do not linearly translate to improved exploration efficiency. Specifically, over-exploration can result in either exploding or oscillating losses in deep learning training. This phenomenon can deteriorate the model's stability and optimization performance, and lead to poor predictions. Another interesting interpretation is from the perspective of Thompson sampling approximated by SGLD \cite{mazumdar2020approximate, zheng2024accelerating}, where excessive exploration of distribution tails often results in underestimating the optimal arm. This subsequently impedes achieving optimal regret bounds \cite{jin2021mots}.

To address this, constrained sampling techniques in MCMC play a pivotal role for different purposes in various forms, such as sampling on explicitly defined manifolds~\cite{lee2018convergence,wang2020fast}, implicitly defined manifolds~\cite{zappa2018monte,lelievre2019hybrid,kook2022sampling,zhang2022sampling,lelievre2023multiple}, and sampling with moment constraints~\cite{liu2021sampling}. 

\vspace{-0.03 in}
The proposed algorithm is further guided by constrained gradient Langevin dynamics with explicit form boundaries. A significant contribution to this field was first made by studying the convergence properties of Langevin Monte Carlo in bounded domains~\cite{sebastien_bubeck}, which uncovered a polynomial sample time for log-concave distributions. This research was further extended to non-convex settings later~\cite{Andrew_Lamperski_21_COLT}. Subsequently, Hamiltonian Monte Carlo techniques were employed to advance the field by exploring constrained sampling in the context of ill-conditioned and non-smooth distributions~\cite{kook2022sampling,noble2023unbiased}. Other notable works include proximal Langevin dynamics~\cite{brosse2017sampling} for log-concave distributions constrained to convex sets, reflected Schr\"odinger bridge for constrained generation~\cite{rSB}, and mirrored Langevin dynamics~\cite{hsieh2018mirrored, ahn2021efficient}, which focuses on convex settings. 

Based on the previous work, we propose an adapted sampler for constrained exploration---reflected replica exchange SGLD (r2SGLD)---to specifically address over-exploration in high-temperature chains and improve the mixing rates in challenging non-convex tasks. The r2SGLD algorithm employs parallel Langevin dynamics with swaps at varying temperatures, which achieves exploration through high-temperature chains and exploitation through low-temperature chains (Figure~\ref{path_demo}). Chains wandering outside the constrained domain are brought back by a reflection operation, which involves the construction of a tangent line at the nearest boundary for symmetrical reflection. By carefully constraining the exploration domain, we aim to harness the benefits of the high-temperature chain without succumbing to the detriments of over-exploration. A swapping mechanism between chains with different temperatures is subsequently used to balance between exploration and exploitation. We summarize the main contribution to this work as follows:

\vspace{-0.03 in}
(1) We contribute to the theoretical landscape by proving that r2SGLD outperforms the na\"ive reSGLD. \textcolor{black}{For the continuous-time analysis, we provide a quantitative mixing rate, marked by a refined decay rate of $\tilde O(1/\diam(\Omega)^2)$, where $\diam(\Omega)$ denotes the domain diameter.} 
\vspace{-0.055 in}

(2) This work introduces the novel use of constrained gradient Langevin dynamics in the identification of dynamical systems, which marks the first known application in this domain and broadens the methodological toolkit for dynamical system analysis. 
\vspace{-0.055 in}

(3) Extensive testing of r2SGLD against diverse baselines in both bounded 2D multi-modal distribution simulation and large-scale deep learning tasks further confirmed its superior efficacy. 

\begin{remark}
It should be noted that compared with the na\"ive reflected gradient Langevin dynamics, studying the quantitative value of the spectral gap in r2SGLD remains a fundamental challenge. For example, \citet{Bakry08} delves into a crude lower bound for the spectral gap concerning vanilla Langevin diffusion, where lower values imply faster convergence, particularly on sub-Gaussian distributions. Although these distributions are weaker than log-concavity, they still preclude the emergence of significant non-convexity.

The extension to Gaussian mixture distributions has posed a significant challenge. \citet{jingdong} investigates the considerable enhancement of the spectral gap with replica exchange Langevin diffusion compared to vanilla Langevin diffusion. We anticipate a similar acceleration would persist in bounded domains. However, given the methodological nature of our work, we recognize this topic and extensions to general non-convexity as beyond the current scope and defer it to future investigations. 
\end{remark}

\section{Methodology}\label{sec:method}

This section introduces the reflected reLD (r2LD), its infinitesimal generator, and the proposed r2SGLD algorithm. Our discussion centers on their contribution to efficient exploration in constrained non-convex sampling, which offers insights into its underlying mechanisms.


\subsection{Reflected Replica Exchange Langevin Diffusion}
The fundamental principle of the r2SGLD algorithm lies in the r2LD concept, which involves running multiple Langevin diffusions simultaneously, each at a different temperature. The system dynamics are described by the following stochastic differential equations:
\begin{equation}\label{eq:sde_2couple}
		\resizebox{0.91\hsize}{!}{$\begin{aligned}
    \dd \bbeta_t^{(1)} &= - \nabla U(\bbeta_t^{(1)}) \dd t+\sqrt{2\tau_1} \dd\bW_t^{(1)} + \nu(\bbeta_t^{(1)}) L^{(1)}(\dd t),\\
    \dd \bbeta_t^{(2)} &= - \nabla U(\bbeta_t^{(2)}) \dd t+\sqrt{2\tau_2} \dd\bW_t^{(2)} + \nu(\bbeta_t^{(2)}) L^{(2)}(\dd t),\\
		\end{aligned}$}
\end{equation}
where $\bbeta_t^{(1)}, \bbeta_t^{(2)}\in \mathbb R^d$ represent the parameter sets at respective temperatures $\tau_1$ and $\tau_2$. The term $\nabla U(\bbeta_t)$ denotes the gradient of the potential, and $\bW_t^{(1)}$ and $\bW_t^{(2)}$ are independent standard Wiener processes. The function $\nu(\bbeta_t)$ represents the inner unit normal vector at a point on the boundary $\partial\Omega$, while $L^{(1)}$ and $L^{(2)}$ denote independent local times with reference to $\partial \Omega$. 

Our focus is on restricting the process to a compact domain $\Omega\subset\mathbb{R}^d$ with a boundary $\partial\Omega$. We ensure reflected boundary conditions through the terms $\nu(\bbeta^{(1)}) L^{(1)}(\dd t)$ and $\nu(\bbeta^{(2)}) L^{(2)}(\dd t)$. We further demonstrate that the r2LD, as specified in \eqref{eq:sde_2couple}, converges to an invariant distribution $\pi(\cdot, \cdot)$:
\begin{equation}
\label{pt_density_main}
\begin{split}
  \dd\pi(x_1, x_2)=\frac{1}{Z} \underbrace{e^{-\frac{U(x_1)}{\tau_1}-\frac{U(x_2)}{\tau_2}}}_{p(x_1, x_2)}{\rm d}x_1{\rm d}x_2,
\end{split}
\end{equation}
where $Z$ is a normalizing constant: $$Z=\int_{\Omega\times\Omega}p(x_1, x_2)\dd x_1\dd x_2.$$ The density of this distribution, $p(\cdot, \cdot)$, applies only within the domain $\Omega\times\Omega$ and the density is zero outside this domain.

The swap between chains results in positional changes from $(\bbeta_t^{(1)},\bbeta_t^{(2)})$ to $(\bbeta_{t+dt}^{(2)},\bbeta_{t+dt}^{(1)})$, at a rate determined by $r (1\wedge S(\bbeta_t^{(1)},\bbeta_t^{(2)}))dt$, where $r\geq 0$ denotes the swapping intensity to regulate the frequency of swapping configurations between the two processes. The swap function $S(\cdot, \cdot)$ is given as follows:
\begin{equation}\label{S_exact}
\begin{split}
    S(\bbeta_t^{(1)}, \bbeta_t^{(2)}):=e^{ \left(\frac{1}{\tau_1}-\frac{1}{\tau_2}\right)\left(U(\bbeta_t^{(1)})-U(\bbeta_t^{(2)})\right)}.
\end{split}
\end{equation}
Similar to the Langevin diffusion in \citet{chen2018accelerating}, r2LD behaves as a reversible Markov jump process, which converges to the same invariant distribution (\ref{pt_density_main}).

\subsection{The Infinitesimal Generator}

The process $ \bbeta_t=(\bbeta_t^{(1)}, \bbeta_t^{(2)})$ follows a Markov diffusion process with infinitesimal generator $\cL$ to encompass both the r2LD \eqref{eq:sde_2couple} and swapping dynamics \eqref{S_exact}. For $(x_1, x_2)$ in the interior of $\Omega\times\Omega$, the generator $\cL$ applies to a smooth function $f$ and is structured into distinct components: $\cL^{(1)} f$ and $\cL^{(2)} f$ for the dynamics of $x_1$ and $x_2$, and $\cL^{(s)} f$ for swap dynamics:
\begin{equation}
    \begin{aligned}
    \cL f=&\underbrace{-\la\nabla_{x_1}f(x_1,x_2),\nabla U(x_1)\ra+\tau_1\Delta_{x_1}f(x_1,x_2)}_{\cL^{(1)} f}\\
&\underbrace{-\la \nabla_{x_2}f(x_1,x_2),\nabla U(x_2)\ra+\tau_2\Delta_{x_2}f(x_1,x_2)}_{\cL^{(2)} f}\\
&\underbrace{+rS(x_1, x_2)\cdot (f(x_2, x_1)-f(x_1, x_2))}_{\cL^{(s)} f}, 
\end{aligned}\label{eqn:generator}
\end{equation}
where $\nabla_{x_i}, \Delta_{x_i}$ denote the gradient and Laplacian operator with respect to $x_i$, $i=1, 2$,
and the entire formulation is subject to the Neumann boundary conditions 
\cite{Menaldi1985,Kang2014}, also \cite{wang2014analysis2} (Theorem 3.1.3):
\begin{equation}
  \left\{
  \begin{array}{ll}
    \nabla_{x_1}f(x_1, x_2)\cdot \nu(x_1) =0 & (x_1, x_2)\in \partial\Omega\times \Omega, \\
    \nabla_{x_2}f(x_1, x_2)\cdot \nu(x_2)=0 & (x_1, x_2)\in \Omega\times \partial \Omega.
  \end{array}
  \right.
  \label{eqn:bdy}
\end{equation}

For the invariant measure $\pi$ that is associated with a smooth potential function $U(\cdot)$ as in \eqref{pt_density_main}, the corresponding Dirichlet form that appears in the Poincare inequality \eqref{eqn:PoineIneq} and Logarithmic Sobolev (Log-Sobolev) inequality \eqref{eqn:LogSob} is given as follows: 
\bea\label{dirichlet form}
\cE(f)=\int \Big(\tau_1\|\nabla_{x_1}f\|^2+\tau_2\|\nabla_{x_2}f\|^2 \Big)d\pi(x_1,x_2).
\eea

Furthermore, as established in Theorem 3.3~\citep{chen2018accelerating}, the Dirichlet form $\mathcal{E}_S$ linked to the infinitesimal generator $\cL^{(s)}$ includes an additional term representing acceleration:
\begin{equation*}
  \begin{array}{ll}
        &\cE_{S}(f)=\cE(f)+\\
        &\underbrace{\frac{r}{2}\int S(x_1,x_2)\cd (f(x_2,x_1)-f(x_1,x_2))^2d\pi(x_1,x_2)}_{\text{acceleration term}},
  \end{array}
\end{equation*}
which induces positive acceleration under mild conditions, and it is vital for rapid convergence in the 2-Wasserstein ($\mathcal{W}_2$) distance. Notably, the acceleration's effectiveness depends on the swap function \eqref{S_exact}. 

\subsection{The Proposed Algorithm}
\label{tilde S}
r2SGLD is the practical interpretation of the continuous process described by the infinitesimal generator in \eqref{eq:sde_2couple}. Following Algorithm~\ref{alg}, it starts with updating the parameters $\widetilde \bbeta^{(1)}$ and $\widetilde \bbeta^{(2)}$ for the diffusion process using stochastic gradients and noises, followed by reflection operations $\mathcal{R}(\cdot)$ to reflect out-of-boundary samples back to the bounded domain $\Omega$. The gradient,
\begin{equation*}
    \nabla {\widetilde U}(\widetilde \bbeta^{(i)}_k) = \frac{N}{\left|\mathbb B_k\right|}\sum_{\mathcal D_j\in \mathbb B_k}\nabla U(\mathcal D_j|\widetilde \bbeta^{(i)}_k),\ \ \ \ i=1,2,
\end{equation*}
is estimated by a mini-batch data $\mathbb B_k$, and $\{\mathcal D_j\}_{j=1}^N$ is the dataset. Next, the algorithm determines whether to swap the chains by comparing a uniformly generated random number $u$ against the corrected swapping intensity $\widetilde S$:
 \begin{equation}\label{eq:swap_correct}
		\resizebox{0.91\hsize}{!}{$\begin{aligned}
		      \widetilde S(\widetilde \bbeta_{k}^{(1)}, \widetilde \bbeta_{k}^{(2)}) = e^{ \left(\frac{1}{\tau_1}-\frac{1}{\tau_2}\right)\left( \widetilde U(\widetilde \bbeta_{k}^{(1)})- \widetilde U(\widetilde \bbeta_{k}^{(2)})-\left(\frac{1}{\tau_1}-\frac{1}{\tau_2}\right)\frac{\widetilde \sigma^2}{\mathcal C} \right)},
		\end{aligned}$}
\end{equation}
where $\widetilde \sigma^2$ approximates the variance of $\widetilde U(\widetilde \bbeta_{k+1}^{(1)})- \widetilde U(\widetilde \bbeta_{k+1}^{(2)})$ and $\mathcal C$ acts as an adjustment to balance acceleration and bias. The algorithm proceeds iteratively until a specified number of iterations is reached. The parameter sets $\{\widetilde \bbeta^{(1)}_{k}\}_{k=1}^{K+1}$ are produced as outputs for analytical purposes.
\begin{algorithm}[tb]
   \caption{The r2SGLD Algorithm. 
   }
   \label{alg}
{\textbf{Input} Initial parameters $\widetilde \bbeta^{(1)}_{1}$, $\widetilde \bbeta^{(2)}_{1}$.}\\
{\textbf{Input} Number of iterations $K$.}\\
{\textbf{Input} Temperatures $\tau_1$, $\tau_2$.}\\
{\textbf{Input} Correction factor $\mathcal C$.}\\
{\textbf{Input} Learn rate $\eta$.}

\begin{algorithmic}
\FOR{$k=1,2,\cdots, K$}
   \STATE{\textbf{Sampling Step}}
   \vspace{-0.1 in}
   \begin{equation*}
       \begin{split}
           \widetilde \bbeta^{(1)}_{k+1} &= \mathcal{R}\bigg(\widetilde \bbeta^{(1)}_{k}- \eta \nabla {\color{black}\widetilde U}(\widetilde \bbeta^{(1)}_k)+\sqrt{2\eta\tau_1} \bxi_k^{(1)}\bigg),\\
    \widetilde \bbeta^{(2)}_{k+1} &= \mathcal{R}\bigg(\widetilde \bbeta^{(2)}_{k} - \eta\nabla {\color{black}\widetilde U}(\widetilde \bbeta^{(2)}_k)+\sqrt{2\eta\tau_2} \bxi_k^{(2)}\bigg).\\
       \end{split}
   \end{equation*}
   \STATE{\textbf{Swapping Step}}
   \STATE{\ \ \ \ \ \ \ \ Generate a uniform random number $u\in [0,1]$.}
   \STATE{\ \ \ \ \ \ \ \ Compute $\widetilde S$ follows \eqref{eq:swap_correct}.}
   \STATE{ \ \ \ \ \ \ \ \   \textbf{if} {$u<{\widetilde S}$} \textbf{then}}
   \STATE{ \ \ \ \  \ \ \ \ \ \ \ \  \ \ \ \   Swap $\widetilde \bbeta_{k+1}^{(1)}$ and $\widetilde \bbeta_{k+1}^{(2)}$.}
   \STATE{ \ \ \ \  \ \ \ \  \textbf{end if} }
\ENDFOR
    \vskip -1 in
\end{algorithmic}
{\textbf{Output} Parameters $\{\widetilde \bbeta^{(1)}_{k}\}_{k=1}^{K+1}$.}
\end{algorithm}

\section{Theoretical Analysis}
In this section, we outline the convergence analysis of continuous-time r2LD under the $\chi^2$-divergence and $\mathcal{W}_2$-distance, which extends the previous analysis \cite{chen2018accelerating,deng2020} within a constrained domain. We further highlight the acceleration benefits achieved through parameter swapping and discuss how the convergence rate is influenced by the diameter of $\Omega$. Lastly, our analysis includes an evaluation of discretization error in the 1-Wasserstein ($\mathcal{W}_1$) distance. Throughout the analysis, we adopt the following assumptions. 
\begin{assump}\label{assu:domainLip}
    $\Omega$ is a compact domain with a boundary $\partial\Omega$  whose \emph{second fundamental form} is bounded below by some constant $\kappa\le 0$ . 
\end{assump} 

\begin{remark}
For any two tangent vectors $v_1, v_2$ at $x\in \partial \Omega$, the \emph{second fundamental form} of $\partial \Omega$ is defined by $\mathbb{I}(v_1, v_2)(x)=-\langle \nabla_{v_1}\nu(x), v_2\rangle$, where $\nu(x)$ is the normal vector field, $\nabla_{v_1}\nu$ denotes the directional derivative of $\nu(x)$ along $v_1$. Note that if $\mathbb{I}(v_1, v_2)(x)\ge \kappa$ for some $\kappa\le 0$, the second fundamental form of the boundary $\partial \Omega$ is bounded below by $\kappa$. It is also worth mentioning that if $\kappa=0$, $\Omega$ is convex.
\end{remark}

\begin{assump}\label{assu:ULip} The function $U\in C^{2}(\Omega)$. Since $\Omega$ is compact, there exists an $L>0$ such that for all $x, y\in\Omega$, 
\begin{equation}
    \|\nabla U(x)-\nabla U(y)\|\le L \|x-y\|. 
\end{equation}
\end{assump}
Unless specified otherwise, the theoretical results in this work are based on Assumptions \ref{assu:domainLip} and \ref{assu:ULip}. 

\subsection{Convergence in $\chi^2$-Divergence}
Our analysis begins by identifying the invariant distribution of r2LD, which delves into the basic properties of the generator $\cL$ subject to the boundary condition \eqref{eqn:bdy}:
\begin{lemma}\label{lemma:reversibility}
    $\{\bbeta_t\}_{t\ge 0}$ is reversible and its invariant distribution $\pi$ is given by \eqref{pt_density_main}.
\end{lemma}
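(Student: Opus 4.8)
The plan is to verify reversibility by checking the detailed-balance identity $\int f\,(\mathcal L g)\,d\pi = \int g\,(\mathcal L f)\,d\pi$ for all smooth $f,g$ satisfying the Neumann conditions \eqref{eqn:bdy}, and then to conclude that $\pi$ is invariant by taking $g\equiv 1$. I would split $\mathcal L = \mathcal L^{(1)} + \mathcal L^{(2)} + \mathcal L^{(s)}$ and treat the diffusion part and the swap part separately.

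For the diffusion part, first note that $p(x_1,x_2) = e^{-U(x_1)/\tau_1 - U(x_2)/\tau_2}$ satisfies $\tau_1 \nabla_{x_1} p = -p\,\nabla U(x_1)$, so that $\mathcal L^{(1)} f = \tau_1 e^{U(x_1)/\tau_1}\,\mathrm{div}_{x_1}\!\big(e^{-U(x_1)/\tau_1}\nabla_{x_1} f\big)$, i.e. $\mathcal L^{(1)}$ is in divergence form with respect to the weight $e^{-U(x_1)/\tau_1}$. Then
\begin{equation*}
\int f\,(\mathcal L^{(1)} g)\,d\pi = \frac{\tau_1}{Z}\int f\,\mathrm{div}_{x_1}\!\big(e^{-U(x_1)/\tau_1}\nabla_{x_1} g\big)\,e^{-U(x_2)/\tau_2}\,dx_1\,dx_2.
\end{equation*}
Integrating by parts in $x_1$ over $\Omega$, the boundary term is $\int_{\partial\Omega} f\, e^{-U(x_1)/\tau_1}\,(\nabla_{x_1} g\cdot\nu)\,dS$, which vanishes by the Neumann condition \eqref{eqn:bdy}; what remains is the symmetric expression $-\frac{\tau_1}{Z}\int \nabla_{x_1} f\cdot\nabla_{x_1} g\, e^{-U(x_1)/\tau_1-U(x_2)/\tau_2}\,dx_1\,dx_2$, which is exactly $-\int \tau_1\langle\nabla_{x_1}f,\nabla_{x_1}g\rangle\,d\pi$, manifestly symmetric in $f$ and $g$. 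The same argument applies verbatim to $\mathcal L^{(2)}$. Summing, $\int f\,(\mathcal L^{(1)}g+\mathcal L^{(2)}g)\,d\pi = -\mathcal E(f,g)$ with $\mathcal E$ the Dirichlet form \eqref{dirichlet form}, which is symmetric.

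For the swap part, the key algebraic fact is the symmetry $S(x_1,x_2)\,p(x_1,x_2) = S(x_2,x_1)\,p(x_2,x_1)$; indeed $S(x_1,x_2)p(x_1,x_2) = e^{-U(x_1)/\tau_2 - U(x_2)/\tau_1}$ is symmetric under swapping coordinates. Using this and the change of variables $(x_1,x_2)\mapsto(x_2,x_1)$,
\begin{equation*}
\int f\,(\mathcal L^{(s)}g)\,d\pi = \frac{r}{Z}\int f(x_1,x_2)\big(g(x_2,x_1)-g(x_1,x_2)\big)S(x_1,x_2)p(x_1,x_2)\,dx_1\,dx_2,
\end{equation*}
and symmetrizing shows this equals $-\frac{r}{2Z}\int \big(f(x_2,x_1)-f(x_1,x_2)\big)\big(g(x_2,x_1)-g(x_1,x_2)\big)S(x_1,x_2)p(x_1,x_2)\,dx_1\,dx_2$, which is symmetric in $f,g$; this also recovers the acceleration term in $\mathcal E_S$. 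Adding the two parts gives $\int f(\mathcal Lg)\,d\pi = \int g(\mathcal Lf)\,d\pi$, so $\{\bbeta_t\}$ is reversible with respect to $\pi$; taking $g\equiv 1$ (which trivially satisfies the Neumann condition and gives $\mathcal Lg=0$) yields $\int \mathcal Lf\,d\pi = 0$ for all admissible $f$, i.e. $\pi$ is invariant. I expect the only genuinely delicate point to be justifying the integration by parts rigorously — that is, making sure the domain of the generator consists of functions for which the boundary terms actually make sense and vanish (smoothness up to $\partial\Omega$, which is where Assumption \ref{assu:domainLip} on the boundary regularity enters); the algebra itself is routine once the divergence-form rewriting and the symmetry of $S\cdot p$ are in hand.
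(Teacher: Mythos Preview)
Your approach is essentially the same as the paper's: split $\mathcal L=\mathcal L^{(1)}+\mathcal L^{(2)}+\mathcal L^{(s)}$, rewrite each diffusion piece in divergence form, integrate by parts using the Neumann condition \eqref{eqn:bdy} to obtain the symmetric Dirichlet form, handle the swap piece by a detailed-balance identity, and finally set $g\equiv 1$ to conclude invariance. The paper does exactly this, citing \citet{chen2018accelerating} (Lemma 3.2) for the swap part rather than writing it out.

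There is, however, one genuine slip in your swap computation. You assert that $S(x_1,x_2)\,p(x_1,x_2)=e^{-U(x_1)/\tau_2-U(x_2)/\tau_1}$ is symmetric under $(x_1,x_2)\mapsto(x_2,x_1)$, but it is not: swapping gives $e^{-U(x_2)/\tau_2-U(x_1)/\tau_1}=p(x_1,x_2)$, which differs from $p(x_2,x_1)$ whenever $\tau_1\neq\tau_2$. What you have actually shown is $S(x_1,x_2)p(x_1,x_2)=p(x_2,x_1)$, and hence $S(x_2,x_1)p(x_2,x_1)=p(x_1,x_2)$; these two quantities are \emph{not} equal, so the raw $S$ does \emph{not} satisfy detailed balance. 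The correct statement uses the actual swap rate $1\wedge S$ from Section~\ref{sec:method}: then
\[
\big(1\wedge S(x_1,x_2)\big)\,p(x_1,x_2)=\min\big(p(x_1,x_2),\,p(x_2,x_1)\big),
\]
which \emph{is} symmetric, and your symmetrization argument for $\mathcal L^{(s)}$ goes through verbatim with $S$ replaced by $1\wedge S$. (The paper's generator \eqref{eqn:generator} writes $S$ as shorthand, but the rate that makes the process reversible is $r(1\wedge S)$.) With this correction your proof is complete and matches the paper's.
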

We define the $\chi^2$-divergence as
\begin{equation}
    \chi^2(\mu_t\|\pi)=\int_{\Omega\times\Omega}\left(\frac{\dd\mu_t}{\dd \pi}-1\right)^2 \dd\pi,
\end{equation}
where $\dd\mu_t/\dd\pi$ is the Radon--Nikodym derivative between $\mu_t$ and $\pi$. The convergence rate of $\mu_t$ in $\chi^2$-divergence is related to the Poincar\'{e} inequality:
\begin{lemma}\label{lemma:PoincareIne}
    For any probability measure $\mu\ll\pi$ where the Radon--Nikodym derivative $\dd\mu/\dd \pi$ satisfies \eqref{eqn:bdy}, the following inequality holds:
    \begin{equation}
        \chi^2(\mu\|\pi)\le C_{\rm P}\mathcal{E}\left(\frac{\dd \mu}{\dd\pi}\right),\label{eqn:PoineIneq}
    \end{equation}
    where the best constant $C_{\rm P}$ is called Poincar\'{e} constant. 
\end{lemma}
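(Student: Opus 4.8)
The plan is to establish that the invariant measure $\pi$ satisfies a Poincar\'{e} inequality with a \emph{finite} constant, from which \eqref{eqn:PoineIneq} follows at once; the swap generator $\cL^{(s)}$ is irrelevant for this bound, as it only enlarges the Dirichlet form. Write $g:=\dd\mu/\dd\pi$. Because $\mu$ is a probability measure, $\int g\,\dd\pi=1$, so $\chi^2(\mu\|\pi)=\int(g-1)^2\,\dd\pi=\var_\pi(g)$. If $\mathcal{E}(g)=\infty$ there is nothing to prove; otherwise a routine truncation-and-Fatou argument (apply the bound to $g_k:=\min(g,k)$, using $\|\nabla g_k\|\le\|\nabla g\|$ pointwise and $g_k\uparrow g$ in $L^1(\pi)$) reduces the general case to $g\in L^\infty\cap H^1(\Omega\times\Omega)$, so we may assume this. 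We also stress that \eqref{eqn:PoineIneq} only asserts finiteness of $C_{\rm P}$; its sharpened, $\diam(\Omega)$-dependent form promised in the abstract is extracted in the later estimates and is not treated here.

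The main idea is to exploit the product structure $\pi=\pi_1\otimes\pi_2$, where $\pi_i$ denotes the probability measure on $\Omega$ with density proportional to $e^{-U/\tau_i}$; this factorization is immediate from \eqref{pt_density_main} (the normalizing constant factors as well). For a single chain, Assumption~\ref{assu:ULip} ($U\in C^2(\Omega)$) together with compactness of $\Omega$ makes the density of $\pi_i$ bounded above and below by positive constants, so $\pi_i$ is comparable to the normalized Lebesgue measure $\lambda_\Omega$ on $\Omega$. Since Assumption~\ref{assu:domainLip} forces $\partial\Omega$ to be a $C^{1,1}$ hypersurface, $\Omega$ is a bounded connected Lipschitz (hence Sobolev extension) domain, so the classical Neumann--Poincar\'{e} / Poincar\'{e}--Wirtinger inequality $\var_{\lambda_\Omega}(h)\le C_\Omega\int\|\nabla h\|^2\,\dd\lambda_\Omega$ holds for every $h\in H^1(\Omega)$; transferring through the density comparison produces a finite $C_i$ with $\var_{\pi_i}(h)\le C_i\int\|\nabla h\|^2\,\dd\pi_i$ for every $h\in H^1(\Omega)$. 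No boundary condition is imposed on the test functions, so restricting to $g$ satisfying \eqref{eqn:bdy} as in the statement is harmless (it only shrinks the admissible class).

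It then remains to tensorize. By the subadditivity of variance over the product $\pi_1\otimes\pi_2$, then the single-chain inequality in each coordinate, and finally $\mathcal{E}(g)\ge\min(\tau_1,\tau_2)\int(\|\nabla_{x_1}g\|^2+\|\nabla_{x_2}g\|^2)\,\dd\pi$,
\[
\var_\pi(g)\le\int\var_{\pi_1}\big(g(\cdot,x_2)\big)\,\dd\pi_2(x_2)+\int\var_{\pi_2}\big(g(x_1,\cdot)\big)\,\dd\pi_1(x_1)\le C_1\!\int\|\nabla_{x_1}g\|^2\,\dd\pi+C_2\!\int\|\nabla_{x_2}g\|^2\,\dd\pi\le\frac{\max(C_1,C_2)}{\min(\tau_1,\tau_2)}\,\mathcal{E}(g),
\]
which is \eqref{eqn:PoineIneq} with $C_{\rm P}\le\max(C_1,C_2)/\min(\tau_1,\tau_2)$. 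I expect the only genuine obstacle to be the single-chain Neumann--Poincar\'{e} inequality on $\Omega$: that is exactly where the regularity of $\partial\Omega$ from Assumption~\ref{assu:domainLip} enters, and one must check that $\Omega$---though possibly non-convex---is regular enough (a Lipschitz, hence extension, domain) for Poincar\'{e}--Wirtinger to apply. Working through the single-chain marginals and tensorizing, rather than directly on $\Omega\times\Omega$, has the added benefit of avoiding the mild corner/edge irregularity of the product domain.
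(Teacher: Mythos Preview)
Your argument is correct and takes a genuinely different route from the paper's. The paper constructs an explicit Lyapunov function $V(x_1,x_2)=\exp[\rho(x_1)^2+\rho(x_2)^2]$ (with $\rho$ a regularized distance to $\partial\Omega$) satisfying a drift condition $\cL^{(0)}V\le -\lambda V+b$ with Neumann boundary values, and then runs the Bakry--Cattiaux--Guillin machinery: multiply the drift inequality by $(f-u)^2/(\lambda V)$, integrate against $\pi$, and use integration by parts together with the boundary condition on $V$ to absorb the $-\cL^{(0)}V/V$ term into the Dirichlet form. The residual $b$-term is then handled by invoking a (local) Poincar\'{e} constant $c_{\rm P}$ on the compact domain.

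You instead exploit three structural features directly: the product form $\pi=\pi_1\otimes\pi_2$, the comparability of each $\pi_i$ to Lebesgue measure (since $U\in C^2$ on compact $\Omega$ makes the density two-sided bounded), and the classical Poincar\'{e}--Wirtinger inequality on a Lipschitz domain, followed by tensorization of variance. This is shorter and more transparent here: the Lyapunov apparatus is designed to control tails at infinity, which are absent on a compact domain, and indeed the paper's proof ultimately still needs a local Poincar\'{e} constant to close the argument---precisely the ingredient you supply directly. What the paper's route buys is a template that generalizes to unbounded domains or non-compact settings; what yours buys is a self-contained proof with no auxiliary construction, and the tensorization also sidesteps any corner regularity issues for $\Omega\times\Omega$. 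Both yield only \emph{some} finite $C_{\rm P}$; the quantitative $\diam(\Omega)^{-2}$ bound is deferred to Lemma~\ref{lemma:cPestimate} in either case.
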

Here is our primary convergence result:
\begin{theorem} Given any initial measure $\mu_0$ for which $\dd \mu_0/\dd \pi$ satisfies \eqref{eqn:bdy}, the $\chi^2$-divergence to the invariant distribution $\pi$ decays exponentially according to:
    \begin{equation}   \chi^2(\mu_t\|\pi)\le \chi^2(\mu_0\|\pi)\exp\left(-2t(1+\eta_S)C_{\rm P}^{-1}\right),   \label{eqn:chiconvergencerate}
    \end{equation}
        where $\displaystyle\eta_S:=\inf_{t>0}\frac{\mathcal{E}_S\left( \frac{\dd \mu_t}{\dd \pi} \right)}{\mathcal{E}\left( \frac{\dd \mu_t}{\dd \pi} \right)}-1$ is the \emph{acceleration effect} in $\chi^2$-divergence.\label{thm:chiconvergence}
\end{theorem}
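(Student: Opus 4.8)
The plan is to differentiate the $\chi^2$-divergence along the flow and express the time derivative in terms of the Dirichlet form $\mathcal{E}_S$ associated with the full generator $\mathcal{L} = \mathcal{L}^{(1)} + \mathcal{L}^{(2)} + \mathcal{L}^{(s)}$. First I would write $h_t := \dd\mu_t/\dd\pi$ and recall that, by Lemma~\ref{lemma:reversibility}, $\pi$ is reversible for $\mathcal{L}$, so $h_t$ solves the Kolmogorov forward equation which (by reversibility) coincides with $\partial_t h_t = \mathcal{L} h_t$ in the appropriate weak sense, with the Neumann boundary conditions \eqref{eqn:bdy} preserved along the flow since they are exactly the conditions under which $\mathcal{L}$ is self-adjoint on $L^2(\pi)$. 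Then
\begin{equation*}
\frac{\dd}{\dd t}\chi^2(\mu_t\|\pi) = \frac{\dd}{\dd t}\int (h_t-1)^2\,\dd\pi = 2\int (h_t-1)\,\mathcal{L} h_t\,\dd\pi = -2\,\mathcal{E}_S(h_t),
\end{equation*}
where the last equality is the integration-by-parts identity defining the Dirichlet form of $\mathcal{L}$ (including the swap part $\mathcal{L}^{(s)}$, whose carré-du-champ contributes the acceleration term in $\mathcal{E}_S$), and the boundary terms from $\mathcal{L}^{(1)},\mathcal{L}^{(2)}$ vanish by \eqref{eqn:bdy}. Here I use that $\int (h_t-1)\mathcal{L} h_t\,\dd\pi = \int h_t \mathcal{L} h_t\,\dd\pi$ because $\mathcal{L} 1 = 0$ and $\int \mathcal{L} h_t\,\dd\pi = 0$.

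Next I would introduce the acceleration factor. By the definition of $\eta_S$ in the statement, for every $t>0$ we have $\mathcal{E}_S(h_t) \ge (1+\eta_S)\,\mathcal{E}(h_t)$. Combining this with the Poincaré inequality of Lemma~\ref{lemma:PoincareIne} applied to $\mu_t$ (whose Radon--Nikodym derivative $h_t$ satisfies \eqref{eqn:bdy}, as noted above), namely $\mathcal{E}(h_t) \ge C_{\rm P}^{-1}\chi^2(\mu_t\|\pi)$, yields the differential inequality
\begin{equation*}
\frac{\dd}{\dd t}\chi^2(\mu_t\|\pi) = -2\,\mathcal{E}_S(h_t) \le -2(1+\eta_S)\,\mathcal{E}(h_t) \le -2(1+\eta_S)C_{\rm P}^{-1}\,\chi^2(\mu_t\|\pi).
\end{equation*}
Grönwall's lemma then gives $\chi^2(\mu_t\|\pi) \le \chi^2(\mu_0\|\pi)\exp(-2t(1+\eta_S)C_{\rm P}^{-1})$, which is \eqref{eqn:chiconvergencerate}.

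\textbf{Main obstacle.} The routine part is the Grönwall step; the delicate part is justifying the identity $\frac{\dd}{\dd t}\chi^2(\mu_t\|\pi) = -2\mathcal{E}_S(h_t)$ rigorously — i.e., that $\mu_t$ has a density $h_t$ with enough regularity (in the Neumann-Sobolev sense) to differentiate under the integral, that the boundary integrals from the two diffusion generators genuinely vanish under \eqref{eqn:bdy}, and that the jump part $\mathcal{L}^{(s)}$ is handled correctly so its contribution to $-\int h_t\mathcal{L} h_t\,\dd\pi$ is exactly the symmetrized acceleration term $\frac{r}{2}\int S(x_1,x_2)(h_t(x_2,x_1)-h_t(x_1,x_2))^2\,\dd\pi$. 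I would address this by invoking the standard well-posedness and smoothing theory for reflected diffusions with Neumann boundary conditions (as cited via \cite{Menaldi1985,Kang2014,wang2014analysis2}), which guarantees $h_t \in C^\infty$ in the interior and satisfies the Neumann condition for $t>0$, and by using the reversibility of the swap kernel with respect to $\pi$ (a change of variables $(x_1,x_2)\mapsto(x_2,x_1)$ together with $S(x_1,x_2)p(x_1,x_2) = S(x_2,x_1)p(x_2,x_1)$, which follows directly from \eqref{S_exact} and \eqref{pt_density_main}) to symmetrize the jump contribution. A secondary point is well-definedness of $\eta_S$: one should check $\eta_S > -1$ so the rate is genuinely a contraction, which holds since $\mathcal{E}_S \ge \mathcal{E} > 0$ whenever $h_t$ is nonconstant, and the case $\chi^2(\mu_0\|\pi)=0$ is trivial.
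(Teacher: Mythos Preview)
Your proposal is correct and follows essentially the same route as the paper: differentiate $\chi^2(\mu_t\|\pi)$ using $\partial_t h_t=\mathcal{L}h_t$ and integration by parts to obtain $-2\mathcal{E}_S(h_t)$, then chain the definition of $\eta_S$ with the Poincar\'e inequality of Lemma~\ref{lemma:PoincareIne} and apply Gr\"onwall. The paper's version is slightly terser (it writes $\chi^2$ as $\int h_t^2\,\dd\pi$ up to a constant rather than $\int(h_t-1)^2\,\dd\pi$, and does not spell out the regularity and swap-symmetrization issues you flag in your ``main obstacle'' paragraph), but the logical skeleton is identical.
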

In addition to the observed acceleration effect, our findings highlight the significance of the Poincar\'{e} constant $C_{\rm P}$ in convergence, as shown in \eqref{eqn:chiconvergencerate}, where its lower bound directly influences the convergence rate.
In the literature, $C_{\rm P}$ is often referred to the \emph{first Neumann eigenvalue} of $\cL$ or \emph{spectral gap}, denoted by $\lambda_1(\cL)$.  For a non-convex domain that satisfies Assumption~\ref{assu:domainLip}, it has been shown in \citet{wang2014analysis2} (Corollary 3.5.2) that 
\begin{lemma}\label{lemma:cPestimate} The Poincar\'{e} constant $C_{\rm P}$, defined in Lemma \ref{lemma:PoincareIne}, satisfies the inequality:
    \begin{equation}
        C_{\rm P}=\lambda_1(\cL)\ge C_1\left(\frac{\pi^2}{\diam(\Omega)^2}+C_2\right),\label{eqn:Cpbound}
    \end{equation}
    where $C_1$ and $C_2$ are constants depending on $U$ and $\kappa$, in particular, when $\Omega$ is convex, we have $C_1=1$ and $C_2=0$, i.e.,
    \begin{equation*}
        \lambda_1(\cL)\ge \frac{\pi^2}{\diam(\Omega)^2}.
    \end{equation*}
\end{lemma}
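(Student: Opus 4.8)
The strategy is to strip the two‑particle operator down to a single reflected Langevin diffusion, for which \eqref{eqn:Cpbound} is exactly the content of Corollary~3.5.2 of \citet{wang2014analysis2}, and then to rebuild the two‑particle constant by tensorization. By Lemma~\ref{lemma:reversibility} the process is $\pi$‑reversible, so $\lambda_1(\cL)$ is the bottom of the non‑trivial spectrum of $-\cL$, i.e. $\lambda_1(\cL)=\inf\big\{\mathcal{E}_S(f)/\!\int f^2\,\dd\pi:\ \int f\,\dd\pi=0,\ f\not\equiv 0\big\}$, where $\mathcal{E}_S=\mathcal{E}+\frac{r}{2}\int S(x_1,x_2)\,(f(x_2,x_1)-f(x_1,x_2))^2\,\dd\pi$ is the Dirichlet form of $\cL$. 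Since the swap term is nonnegative, $\mathcal{E}_S\ge\mathcal{E}$, so it suffices to lower bound $\inf\big\{\mathcal{E}(f)/\!\int f^2\,\dd\pi\big\}$, the spectral gap of the pure‑diffusion generator $\cL^{(1)}+\cL^{(2)}$; the extra gain from swapping is separately carried by $\eta_S$ in Theorem~\ref{thm:chiconvergence}.

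Next, write $\pi=\pi_1\otimes\pi_2$ with $\dd\pi_i\propto e^{-U/\tau_i}\mathbf{1}_\Omega\,\dd x$, and observe from \eqref{dirichlet form} that $\mathcal{E}(f)=\mathcal{E}_1(f)+\mathcal{E}_2(f)$ with $\mathcal{E}_i$ depending only on $\nabla_{x_i}f$. Because $\cL^{(1)}+\cL^{(2)}$ generates two \emph{independent} reflected diffusions on the product $\Omega\times\Omega$, the tensorization property of the Poincar\'e inequality yields that its spectral gap equals $\min\{\lambda_1(\cL^{(1)}),\lambda_1(\cL^{(2)})\}$, where $\lambda_1(\cL^{(i)})$ is the first nonzero Neumann eigenvalue of the one‑particle operator $\tau_i\Delta-\nabla U\cdot\nabla$ on $\Omega$ with boundary condition \eqref{eqn:bdy} and invariant law $\pi_i$. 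Rescaling time, $\lambda_1(\cL^{(i)})=\tau_i\,\lambda_1(\Delta-\tau_i^{-1}\nabla U\cdot\nabla)$, and Assumptions~\ref{assu:domainLip}--\ref{assu:ULip} place us exactly in the hypotheses of Corollary~3.5.2 of \citet{wang2014analysis2}: the boundary's second fundamental form is bounded below by $\kappa\le0$, and $U/\tau_i\in C^2(\Omega)$ has all relevant derivatives uniformly bounded on the compact $\Omega$. That corollary gives $\lambda_1(\Delta-\tau_i^{-1}\nabla U\cdot\nabla)\ge C_1^{(i)}\big(\pi^2/\diam(\Omega)^2+C_2^{(i)}\big)$ with $C_1^{(i)},C_2^{(i)}$ depending only on the dimension, on $\kappa$, and on the said bounds on $U/\tau_i$; taking the minimum over $i=1,2$ and folding the factors $\tau_i$ into the constants produces \eqref{eqn:Cpbound}. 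When $\kappa=0$ the domain is convex, the boundary and curvature corrections drop, and the estimate collapses to the Payne--Weinberger bound $\lambda_1\ge\pi^2/\diam(\Omega)^2$ for the Neumann Laplacian on a convex body, so $C_1=1$ and $C_2=0$.

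\textbf{Main obstacle.} The reversibility input, the nonnegativity of the swap term, and the tensorization identity for product Poincar\'e constants are routine. The substance — and the principal obstacle — is the one‑particle estimate imported in the last step: Corollary~3.5.2 of \citet{wang2014analysis2} is genuinely nontrivial, resting on curvature--dimension / coupling (or Reilly‑type) arguments in which the lower bound $\kappa$ on the second fundamental form enters the boundary terms, and one must check that the Neumann conditions \eqref{eqn:bdy} are precisely those under which it applies and that $\dd\mu/\dd\pi$ lies in the associated form domain. A secondary subtlety is the dependence of $C_1,C_2$ on $\kappa$ and on the temperatures $\tau_1,\tau_2$: when $\kappa<0$ the constant $C_2$ may be negative, so the bound is informative only once $\diam(\Omega)$ is small enough — exactly the regime this paper exploits to obtain the $\tilde O(1/\diam(\Omega)^2)$ improvement in mixing.
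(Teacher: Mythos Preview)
Your proposal is correct and follows the same approach as the paper, which simply cites Corollary~3.5.2 of \citet{wang2014analysis2} without further argument. You actually supply the reduction the paper leaves implicit: since $C_{\rm P}$ in Lemma~\ref{lemma:PoincareIne} is defined via $\mathcal{E}$ (not $\mathcal{E}_S$), the swap step is superfluous, but your tensorization argument---reducing the product operator $\cL^{(1)}+\cL^{(2)}$ on $\Omega\times\Omega$ to the minimum of two single-particle Neumann gaps on $\Omega$---is precisely what is needed to invoke Wang's one-domain result, and the paper does not spell this out.
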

\begin{remark}\label{rmk:nonconvex}
For a non-convex domain $\Omega$ whose second fundamental form is bounded below by $\kappa$, applying a conformal change of the Euclidean metric (as outlined in \citet{wang2007estimates}, Lemma 2.1) allows for a simplification to a convex domain case, which results in a less explicit bound specified in \eqref{eqn:Cpbound}.
However, it should be noted that $C_{\rm P}=\mathcal{O}(1/{\rm diam}(\Omega)^2)$ still holds true when $\diam(\Omega)\ll 1$.
\end{remark}
\subsection{Convergence in 2-Wasserstein Distance}
We introduce the $p$-Wasserstein distance between two Borel probability measures $\mu$ and $\pi$ on $\mathbb{R}^{2d}$
$$\mathcal{W}_p(\mu, \pi):=\inf_{\gamma\in \Gamma(\mu, \pi)}\left(\int_{\mathbb{R}^{2d}\times\mathbb{R}^{2d}}\|x-y\|^p d\gamma(x, y)\right)^{1/p},$$
where $\Gamma(\mu, \pi)$ denotes all joint coupled distributions $\gamma$ with marginal distributions $\mu$ and $\pi$. Furthermore, we define the Kullback--Leibler (KL) divergence as
\begin{equation}
    D(\mu\|\pi):=\int_{\Omega\times\Omega} \left(\frac{\dd \mu}{\dd \pi}\right)\ln\left(\frac{\dd \mu}{\dd \pi}\right)\dd \pi.\label{eqn:KLdivergence}
\end{equation}
The convergence rate under $\mathcal{W}_2$-distance is dependent on the \emph{Log-Sobolev} inequality which controls the entropy throughout the Fisher information. The proof of the following lemma is referred to \citet{wang2014analysis2} (Corollary 3.5.3). 
\begin{lemma}\label{lemma:LogSob} 
    For any probability measure $\mu\ll \pi$ and satisfying $\dd\mu/\dd\pi\ge 0$ and \eqref{eqn:bdy}, the following Logarithmic Sobolev (Log-Sobolev) inequality holds:
    \begin{equation}
        D(\mu\|\pi)\le C_{\rm LS}\mathcal{E}\left(\sqrt{\frac{\dd \mu}{\dd\pi}}\right).\label{eqn:LogSob}
    \end{equation}
     where the best constant $C_{\rm LS}$ is called Log-Sobolev constant. 
\end{lemma}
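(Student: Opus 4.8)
The plan is to exploit the product structure of $\pi$ together with classical facts about Log-Sobolev inequalities (LSI) on compact domains. Write $\pi=\pi_1\otimes\pi_2$, where $\pi_i$ is the probability measure on $\Omega$ with density proportional to $e^{-U(x)/\tau_i}$ (this is the invariant measure identified in Lemma \ref{lemma:reversibility}, restricted to one factor), and note that the Dirichlet form $\mathcal{E}$ decomposes as $\mathcal{E}(f)=\mathcal{E}_1(f)+\mathcal{E}_2(f)$ with $\mathcal{E}_i(g)=\tau_i\int\|\nabla_{x_i}g\|^2\,d\pi$, while the boundary condition \eqref{eqn:bdy} is exactly the pair of Neumann conditions for the two factors. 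By the tensorization property of the LSI, it then suffices to prove that each $\pi_i$ satisfies an LSI on $\Omega$ with carr\'e du champ $g\mapsto\tau_i\|\nabla g\|^2$ and some constant $C_i$; \eqref{eqn:LogSob} follows with $C_{\rm LS}=\max(C_1,C_2)$. Since $\mathcal{E}_S\ge\mathcal{E}$, no use of the swap term is needed here.

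For a single factor I would proceed in two steps. First, establish an LSI for the uniform (normalized Lebesgue) measure $\pi_0$ on $\Omega$ with the Neumann Dirichlet form $g\mapsto\int\|\nabla g\|^2\,d\pi_0$. When $\Omega$ is convex ($\kappa=0$ in Assumption \ref{assu:domainLip}), the second fundamental form of $\partial\Omega$ is nonnegative, so the boundary contribution in the integrated Bochner identity for the Neumann heat semigroup has the favorable sign; combining the resulting $CD(0,\infty)$-type gradient estimate with the Neumann Poincar\'e inequality (whose constant is $O(\diam(\Omega)^2)$, cf.\ Lemma \ref{lemma:cPestimate}) and Rothaus' lemma yields an LSI with constant of order $\diam(\Omega)^2$. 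When $\partial\Omega$ is only assumed to have second fundamental form bounded below by $\kappa\le 0$, I would apply the conformal change of the Euclidean metric of \citet{wang2007estimates} (Lemma 2.1), which turns $\Omega$ into a convex domain at the cost of a bounded, explicit distortion of the metric and the reference density; pulling the convex-case LSI back through this change gives an LSI for $\pi_0$ with a constant depending on $\diam(\Omega)$ and $\kappa$ --- precisely \citet{wang2014analysis2} (Corollary 3.5.3). Second, pass from $\pi_0$ to $\pi_i$ by the Holley--Stroock perturbation principle: by Assumption \ref{assu:ULip}, $U\in C^2(\Omega)$ on the compact set $\Omega$, so $e^{-U/\tau_i}$ is bounded above and below by positive constants, the density $d\pi_i/d\pi_0$ has finite oscillation, and $\pi_i$ inherits an LSI with constant inflated by $e^{\operatorname{osc}_\Omega(U)/\tau_i}$ (the $\tau_i$ weight on the gradient being absorbed into $C_i$).

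The main obstacle is the non-convexity of $\Omega$: on a convex domain the Neumann boundary term helps, but once $\kappa<0$ it appears with the wrong sign and must be dominated, which is exactly what the conformal-change device of Wang accomplishes; this is the technically delicate point and the reason the constant is only implicit in $\kappa$, rather than the clean $\pi^2/\diam(\Omega)^2$ of the convex case (mirroring the situation for $C_{\rm P}$ in \eqref{eqn:Cpbound}). The remaining ingredients --- tensorization of LSI and Holley--Stroock --- are routine, so the essential work is the single-factor Neumann LSI on a possibly non-convex compact domain, which we may cite from \citet{wang2014analysis2}.
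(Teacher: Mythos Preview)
Your proposal is correct, but it is considerably more detailed than what the paper actually does: the paper simply states the lemma and refers the reader to \citet{wang2014analysis2} (Corollary 3.5.3) for the proof, with no further argument. Your route---tensorization of the LSI across the product $\pi=\pi_1\otimes\pi_2$, then a single-factor LSI on $\Omega$ obtained from Wang's conformal-change technique for non-convex domains, followed by Holley--Stroock to absorb the weight $e^{-U/\tau_i}$---is a sound and standard way to unpack that citation, and in fact makes explicit why the constant depends only on $\kappa$, $U$, and $\diam(\Omega)$. The paper's approach buys brevity; yours buys transparency about the product structure and how each assumption (compactness of $\Omega$, smoothness of $U$, lower bound on the second fundamental form) enters the constant. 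Either way the substantive analytic work is outsourced to the same reference.
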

\begin{figure*}[t]
\centering
\includegraphics[width=1.8\columnwidth]{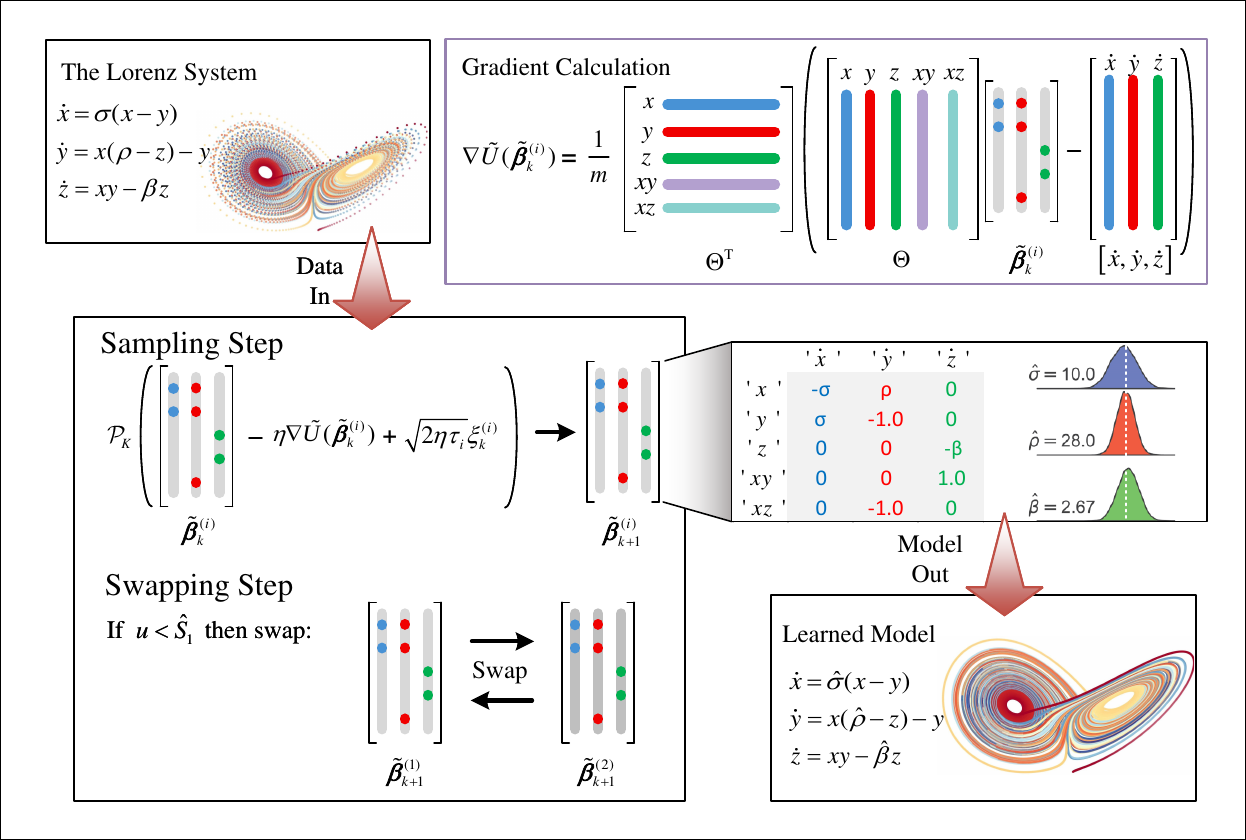}\vspace{-0.1 in}
\caption{Schematic of the r2SGLD algorithm, demonstrated on identifying the Lorenz system.}\label{fig:lorenz_framework}\vspace{-0.18 in}
\end{figure*}
\begin{theorem}\label{thm:W2conv}
 Given any initial measure $\mu_0$ for which $\dd\mu_0/\dd \pi\ge0$ and satisfying \eqref{eqn:bdy}, the 2-Wasserstein distance between $\mu_t$ and $\pi$ satisfies the following accelerated exponential decay estimate:
  \begin{equation}
    \mathcal{W}_2(\mu_t, \pi)\le \sqrt{2C_{\rm LS}D(\mu_0\|\pi)}\exp\left( -t(1+\delta_S)C_{\rm LS}^{-1} \right),
    \label{eqn:W2convergence}
  \end{equation}
  where $\displaystyle\delta_S:=\inf_{t>0}\frac{\mathcal{E}_S\left(\sqrt{ \frac{\dd \mu_t}{\dd \pi} }\right)}{\mathcal{E}\left( \sqrt{\frac{\dd \mu_t}{\dd \pi}} \right)}-1$ is the \emph{acceleration effect} in $\mathcal{W}_2$-distance.
\end{theorem}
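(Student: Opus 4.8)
`Theorem`~\ref{thm:W2conv} is the Wasserstein analogue of `Theorem`~\ref{thm:chiconvergence}, and I would prove it by the standard entropy/Fisher-information route. The plan is to first establish the dissipation identity for the relative entropy along the flow: writing $h_t = \dd\mu_t/\dd\pi$ for the density of $\mu_t$ relative to the reversible invariant measure $\pi$, I would show
\begin{equation*}
\frac{\dd}{\dd t} D(\mu_t\|\pi) = -\mathcal{E}_S\!\left(\sqrt{h_t}\right),
\end{equation*}
where $\mathcal{E}_S$ is the swap-augmented Dirichlet form introduced before `Theorem`~\ref{thm:chiconvergence}. This is the de Bruijn-type identity: differentiate $\int h_t \ln h_t\, \dd\pi$, use the Kolmogorov forward equation $\partial_t h_t = \cL^* h_t = \cL h_t$ (reversibility from `Lemma`~\ref{lemma:reversibility}), integrate by parts against the Neumann boundary condition \eqref{eqn:bdy} so that the boundary terms vanish, and collect the diffusion part into the Dirichlet form $\mathcal{E}(\sqrt{h_t})$ while the swap generator $\cL^{(s)}$ contributes exactly the extra $\tfrac{r}{2}\int S (\sqrt{h_t(x_2,x_1)}-\sqrt{h_t(x_1,x_2)})^2 \dd\pi$ term by the elementary inequality $(a-b)(\ln a - \ln b)\ge (\sqrt a-\sqrt b)^2 \cdot$(something)---more precisely, one uses that $\cL^{(s)}$ is a reversible jump part and the Dirichlet-form decomposition of `Theorem` 3.3 of \citet{chen2018accelerating} carries over verbatim on the compact domain.

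The second step is to convert this into an exponential decay. Using the definition of $\delta_S$ we have $\mathcal{E}_S(\sqrt{h_t}) \ge (1+\delta_S)\,\mathcal{E}(\sqrt{h_t})$, and then the Log-Sobolev inequality of `Lemma`~\ref{lemma:LogSob} gives $\mathcal{E}(\sqrt{h_t}) \ge C_{\rm LS}^{-1} D(\mu_t\|\pi)$, so that
\begin{equation*}
\frac{\dd}{\dd t} D(\mu_t\|\pi) \le -(1+\delta_S)\,C_{\rm LS}^{-1}\, D(\mu_t\|\pi),
\end{equation*}
whence Gr\"onwall yields $D(\mu_t\|\pi)\le D(\mu_0\|\pi)\exp(-2(1+\delta_S)C_{\rm LS}^{-1}t)$. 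Finally I would invoke the Otto--Villani theorem (the Talagrand $T_2$ transportation inequality implied by Log-Sobolev with the same constant, valid here since $\pi$ is a smooth density on the compact Riemannian manifold-with-boundary $\Omega\times\Omega$): $\mathcal{W}_2(\mu_t,\pi)^2 \le 2 C_{\rm LS}\, D(\mu_t\|\pi)$. Combining the two displays gives $\mathcal{W}_2(\mu_t,\pi)\le \sqrt{2C_{\rm LS} D(\mu_0\|\pi)}\,\exp(-(1+\delta_S)C_{\rm LS}^{-1}t)$, which is \eqref{eqn:W2convergence}.

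The step I expect to be the main obstacle is the rigorous justification of the entropy dissipation identity with the reflecting boundary, together with ensuring $h_t$ stays regular enough (positive, smooth up to the boundary, satisfying the Neumann condition for all $t>0$) so that the integration by parts and the differentiation under the integral sign are legitimate; one typically needs a short regularization/approximation argument (e.g.\ approximating $\mu_0$ by measures with smooth bounded densities bounded away from $0$, or working with the semigroup $P_t$ and spectral calculus) and a remark that the local-time reflection terms in \eqref{eq:sde_2couple} translate precisely into the Neumann condition \eqref{eqn:bdy} on the generator's domain. A secondary technical point is checking that the Otto--Villani implication genuinely holds on a bounded domain with reflecting boundary rather than on $\mathbb{R}^d$; this follows because $\Omega\times\Omega$ with the reflecting structure can be treated as a compact weighted manifold with boundary for which the HWI/Otto--Villani machinery applies, but it deserves an explicit citation. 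The swap-term bookkeeping---that $\cL^{(s)}$ contributes a nonnegative quantity to the Fisher information and hence $\delta_S\ge 0$---I would simply cite from \citet{chen2018accelerating}, noting the compact-domain setting changes nothing in that algebraic computation.
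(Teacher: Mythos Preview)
Your proposal is essentially the same as the paper's proof: differentiate the relative entropy along the semigroup, bound it by $-\mathcal{E}_S(\sqrt{f_t})$ via integration by parts under the Neumann condition, invoke the Log--Sobolev inequality of Lemma~\ref{lemma:LogSob} together with the definition of $\delta_S$, apply Gr\"onwall, and finish with Otto--Villani. Two small bookkeeping points: the de~Bruijn step is an \emph{inequality}, not an identity (as you yourself note when invoking $(a-b)(\ln a-\ln b)\ge c(\sqrt a-\sqrt b)^2$ for the swap part), and the paper records it as $\tfrac{\dd}{\dd t}D(\mu_t\|\pi)\le -2\,\mathcal{E}_S(\sqrt{f_t})$; your displayed Gr\"onwall line acquires a factor of $2$ that is not produced by the preceding differential inequality you wrote, so you should track the constant once carefully when writing it up.
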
 
Now we apply the estimate for the Log-Sovolev constant for non-convex domain \cite{wang2007estimates} (Corollary 3.5.3):
\begin{lemma} The Log-Sobolev constant, defined in Lemma \ref{lemma:LogSob}, satisfies the following inequality:
\begin{equation}
    C_{\rm LS}\ge \frac{C}{\diam(\Omega)^2},\label{eqn:Clsbound}
\end{equation}
\end{lemma}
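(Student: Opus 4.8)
The goal is to establish the lower bound $C_{\rm LS}\ge C/\diam(\Omega)^2$ for the Log-Sobolev constant of the generator $\cL$ subject to the Neumann boundary conditions \eqref{eqn:bdy}, mirroring the Poincaré-constant estimate of Lemma~\ref{lemma:cPestimate}. The plan is to reduce the problem to the single-chain reflected diffusion on $\Omega$ and then invoke the curvature-dimension / diameter estimates for the Log-Sobolev constant of Neumann diffusions on bounded domains. First I would note that, because $\mathcal{E}_S(f)\ge \mathcal{E}(f)$ (the acceleration term is nonnegative), it suffices to prove the Log-Sobolev inequality with the plain Dirichlet form $\mathcal{E}$ from \eqref{dirichlet form}; the swap dynamics only help, so the constant for $\cL$ is at least that for the pure diffusion part $\cL^{(1)}+\cL^{(2)}$.

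Next, since the invariant measure $\pi$ factorizes as a product $\pi(x_1,x_2)\propto e^{-U(x_1)/\tau_1}e^{-U(x_2)/\tau_2}$ over $\Omega\times\Omega$ and the Dirichlet form $\mathcal{E}$ splits as $\tau_1\int\|\nabla_{x_1}f\|^2 d\pi+\tau_2\int\|\nabla_{x_2}f\|^2 d\pi$, I would use the tensorization property of the Log-Sobolev inequality: if each factor measure $\pi_i\propto e^{-U/\tau_i}\mathbf{1}_\Omega$ satisfies a Log-Sobolev inequality with constant $C_i$ on $\Omega$ (with respect to $\tau_i$-weighted Dirichlet energy and Neumann boundary data), then the product satisfies one with constant $\max\{C_1,C_2\}$. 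So the problem reduces to bounding the Log-Sobolev constant of a single reflected Langevin diffusion $-\nabla U\,dt+\sqrt{2\tau_i}\,dW+\nu L(dt)$ on the compact domain $\Omega$ whose boundary has second fundamental form bounded below by $\kappa\le 0$ (Assumption~\ref{assu:domainLip}), and $U\in C^2(\Omega)$ with Lipschitz gradient (Assumption~\ref{assu:ULip}).

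For that single-chain estimate I would cite the diameter-dependent Log-Sobolev bound for Neumann diffusion semigroups on bounded manifolds/domains — precisely the result of \citet{wang2007estimates} (and its statement via \citet{wang2014analysis2}, Corollary~3.5.3) referenced just above — which yields $C_{\rm LS}\ge C/\diam(\Omega)^2$ where $C$ depends only on $\tau_i$, the Lipschitz/$C^2$ bounds on $U$, the lower curvature bound $\kappa$ of $\partial\Omega$, and the dimension $d$. In the non-convex case ($\kappa<0$) this uses the conformal-change-of-metric trick of \citet{wang2007estimates} (Lemma~2.1) to reduce to a convex domain, as already invoked in Remark~\ref{rmk:nonconvex}; the scaling in $\diam(\Omega)^{-2}$ is preserved under this reduction for $\diam(\Omega)\ll 1$, so the displayed bound holds with a constant $C$ absorbing all $U,\kappa,d$ dependence. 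Taking $C:=\min\{C_1,C_2\}$ over the two temperatures then gives \eqref{eqn:Clsbound}.

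The main obstacle is not the tensorization step (which is standard once the boundary conditions are respected, and here the Neumann conditions \eqref{eqn:bdy} are coordinate-wise so they tensorize cleanly), but rather making rigorous that the admissible class of test functions in the product Log-Sobolev inequality — those $f$ with $\sqrt{d\mu/d\pi}$ satisfying \eqref{eqn:bdy} — is exactly the class obtained by tensorizing the single-chain admissible classes, i.e. that one may restrict to (sums of) product functions by a density/approximation argument, and that the conformal change of metric in the non-convex case does not distort the $\diam(\Omega)^{-2}$ rate. I would handle the first point by the usual argument that product functions are dense in $L^2(\pi)$ together with the closability of the Dirichlet form, and the second by carefully tracking the bi-Lipschitz constants of the conformal factor (bounded in terms of $\kappa$ and $\diam(\Omega)$) through Wang's estimate, exactly as in the Poincaré case of Lemma~\ref{lemma:cPestimate}.
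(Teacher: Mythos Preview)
Your proposal is correct and aligns with the paper's treatment: the paper does not give an independent proof of this lemma but simply invokes the diameter estimate for the Neumann Log-Sobolev constant from \citet{wang2007estimates}/\citet{wang2014analysis2} (Corollary~3.5.3), together with the conformal-change reduction for non-convex $\Omega$ already noted in Remark~\ref{rmk:nonconvex}. Your tensorization step (reducing from $\cL$ on $\Omega\times\Omega$ to the single-chain reflected diffusion on $\Omega$) makes explicit a passage the paper leaves entirely implicit; since $C_{\rm LS}$ in Lemma~\ref{lemma:LogSob} is defined relative to the plain Dirichlet form $\mathcal{E}$ rather than $\mathcal{E}_S$, your preliminary observation about the swap term is unnecessary (though harmless).
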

where $C$ is a constant depending on $\kappa$ and $U$. In particular, when $\Omega$ is convex, 
we have 
\begin{equation}
    C_{\rm LS}\ge \frac{\sqrt{1+4\pi^2}-1}{2\cdot {\rm diam}(\Omega)^2}.
\end{equation}
\begin{remark}
   For non-convex domain $\Omega$, as highlighted in Remark \ref{rmk:nonconvex}, the bound presented in \eqref{eqn:Clsbound} is less explicit.  
\end{remark}
For detailed derivations, readers may refer to Appendix \ref{sec:appendix_continuous}.

\vspace{-0.1 in}
\subsection{Discretization Analysis}
Due to the lack of higher-order local time estimates in the reflection term, our study on discrete-time dynamics of r2SGLD reveals its unique aspects compared to na\"ive reSGLD. Following the techniques in \citet{tanaka1979, sebastien_bubeck}, we successfully derive the upper bound for the discretization error within the $\mathcal{W}_1$-distance framework.
\begin{theorem}[Discretization error]\label{theorem:discretization}
Assume that the domain $\Omega$ is convex, and Assumptions \ref{assu:domainLip}, \ref{assu:ULip} hold true, then 
\begin{equation*}
\begin{split}
\mathcal W_1(\mu_T, \widetilde \mu_T)&\le \mathcal{\tilde O}\Big( \eta^{1/4} +  \sqrt{\max_{k}\mathbb E[\|\bphi_k\|^2]}\\
& \qq\qq + \sqrt{\eta^{-1/2}\max_{k}\sqrt{\mathbb E\left[|\psi_{k}|^2\right]}}\Big).
\end{split}
\end{equation*}
where $\widetilde\mu_T$ denotes the distribution of  $\widetilde \bbeta^{\eta}_T$, which is the continuous-time interpolation for r2SGLD, $\bphi_k:=\nabla \widetilde U-\nabla U$ is the noise in the stochastic gradient, and $\psi_k:=\widetilde S-S$ is the noise in the stochastic swapping rate. 
\end{theorem}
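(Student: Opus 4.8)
The plan is to bound $\mathcal{W}_1(\mu_T, \widetilde\mu_T)$ by constructing an explicit coupling between the continuous-time r2LD process $\bbeta_t$ and the continuous-time interpolation $\widetilde\bbeta^\eta_t$ of r2SGLD, driven by the same Brownian motions, and then to estimate the expected distance between the two trajectories via a Gr\"onwall-type argument. The three error sources to be tracked separately are: (i) the reflection/local-time terms, which cannot be controlled by higher-order Taylor expansions since $L^{(i)}$ has only bounded variation and no higher-order moment estimates — here I would use the Tanaka-type contraction property of the Skorokhod reflection map (following \citet{tanaka1979, sebastien_bubeck}), namely that reflection onto a convex $\Omega$ is $1$-Lipschitz, so it never increases the pairwise distance; (ii) the stochastic-gradient noise $\bphi_k = \nabla\widetilde U - \nabla U$, which enters additively and contributes a term of order $\sqrt{\max_k \mathbb{E}[\|\bphi_k\|^2]}$; and (iii) the stochastic-swapping noise $\psi_k = \widetilde S - S$, which perturbs the jump rate of the swap mechanism and contributes the last term. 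The exact-gradient, exact-swap discretization bias contributes the $\eta^{1/4}$ term.

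First I would set up the synchronous coupling: run $\bbeta_t$ with generator $\cL$ and run $\widetilde\bbeta^\eta_t$ as the piecewise-constant-drift interpolation, both reflected at $\partial\Omega$ via the same normal directions and using the same Wiener increments; couple the swap Poisson clocks as tightly as possible (e.g. via a common uniform random variable at each step, which is what links $\psi_k$ to the swap discrepancy). Between grid points the drift of $\widetilde\bbeta^\eta$ is frozen at $\nabla\widetilde U(\widetilde\bbeta^\eta_{k\eta})$ while the diffusion's drift is $\nabla U(\bbeta_t)$; writing $e_t := \|\bbeta_t - \widetilde\bbeta^\eta_t\|$, I would apply It\^o's formula to $e_t^2$ (or to a smoothed norm), observe that the local-time terms give a nonpositive contribution by convexity of $\Omega$ and the Tanaka estimate, use Assumption \ref{assu:ULip} ($L$-Lipschitz gradient) to control $\|\nabla U(\bbeta_t) - \nabla\widetilde U(\widetilde\bbeta^\eta_{k\eta})\|$ by $L e_t + L\|\widetilde\bbeta^\eta_t - \widetilde\bbeta^\eta_{k\eta}\| + \|\bphi_k\|$, and bound the one-step displacement $\mathbb{E}\|\widetilde\bbeta^\eta_t - \widetilde\bbeta^\eta_{k\eta}\|^2 = O(\eta)$ using boundedness of $\Omega$ and the Gaussian increment. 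The swap term is handled by noting that on the event where the two coupled chains make the same swap decision the contribution vanishes, and the probability of a mismatch is controlled by $\mathbb{E}|\psi_k|$ together with the density of the uniform variable; since each mismatch can move the state by at most $\diam(\Omega)$, this yields the $\eta^{-1/2}\sqrt{\mathbb{E}[|\psi_k|^2]}$ scaling after accumulating over $O(1/\eta)$ steps and taking a square root (the half-powers come from converting an $L^2$ accumulation into an $\mathcal{W}_1 = L^1$-type bound via Cauchy--Schwarz). Collecting terms gives a differential inequality $\tfrac{d}{dt}\mathbb{E}[e_t^2] \le C\,\mathbb{E}[e_t^2] + (\text{error sources})$, and Gr\"onwall on $[0,T]$ followed by $\mathcal{W}_1 \le \sqrt{\mathbb{E}[e_T^2]}$ produces the stated bound with the $\tilde O$ absorbing $T$- and $L$-dependent constants.

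The main obstacle I anticipate is the reflection term: unlike standard Euler--Maruyama analyses, one cannot Taylor-expand past first order because $L^{(i)}(\dd t)$ lacks the $O(\eta^{3/2})$-type control that the Brownian and drift terms enjoy, so the usual route to an $O(\sqrt{\eta})$ or better strong error is blocked — this is precisely why the bound degrades to $\eta^{1/4}$. The technical crux is to show that the discrete reflection map $\mathcal{R}(\cdot)$ used in Algorithm \ref{alg} and the continuous Skorokhod reflection stay close in a way compatible with the coupling: one needs that $\mathcal{R}$ is nonexpansive on the convex $\Omega$ (true) and that the mismatch between "reflect once per step" and "reflect continuously" accumulates only like $\eta^{1/4}$, which is the content of the Tanaka-type estimates and is where the convexity assumption on $\Omega$ is genuinely used. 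A secondary subtlety is making the coupling of the two swap clocks precise enough that the jump-discrepancy contribution is genuinely governed by $\mathbb{E}|\psi_k|$ rather than by a cruder $O(1)$ bound; handling the interplay between a swap in one process and no swap in the other (which momentarily doubles the effective distance) requires bounding the total expected number of such mismatches over $[0,T]$, again using $\diam(\Omega) < \infty$.
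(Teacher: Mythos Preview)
Your core plan---synchronous coupling, It\^o on $e_t^2=\|\bbeta_t-\widetilde\bbeta^\eta_t\|^2$, and discarding the local-time cross terms as nonpositive by convexity---is sound, and indeed simpler than what the paper does. The gap is that you have misidentified the source of the $\eta^{1/4}$. In this theorem $\widetilde\bbeta^\eta_t$ is \emph{defined} as the Skorokhod solution with piecewise-constant drift (see \eqref{reSGLD1}), not as the once-per-step discrete map $\mathcal{R}$ of Algorithm~\ref{alg}; there is no ``discrete reflection vs.\ continuous Skorokhod'' mismatch in the statement at all. If your Gr\"onwall argument on $\mathbb{E}[e_t^2]$ with both local-time contributions $\le 0$ goes through, it yields $\mathcal{W}_1=\tilde O(\sqrt{\eta}+\ldots)$, strictly sharper than the claim---so your sketch does not actually reproduce the stated $\eta^{1/4}$, and the paragraph explaining why the rate ``degrades'' is based on a step that is not present in the problem.

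The paper takes a genuinely different route. It does \emph{not} apply It\^o directly to $\|\bbeta_t-\widetilde\bbeta^\eta_t\|^2$; instead it invokes Tanaka's Lemma~2.2 to write $\|\bbeta_t-\widetilde\bbeta^\eta_t\|^2\le\|\by_t-\widetilde\by^\eta_t\|^2+\mathcal{K}$, where $\by,\widetilde\by$ are the \emph{unreflected} driving paths and $\mathcal{K}$ is a local-time cross term. After taking $\sup_t$ and expectations, $\sqrt{\mathcal{K}}$ is bounded via the support function $h_{\Omega\times\Omega}$ following \citet{sebastien_bubeck}, giving roughly $\sqrt{\|\by-\widetilde\by\|\cdot(\text{total local time})}$. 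A Young split $\sqrt{ab}\le\tfrac12(\eta^{-1/4}a+\eta^{1/4}b)$ then manufactures the $\eta^{1/4}$: the integrated local time is only $O(1)$, and the $\eta^{-1/4}$ prefactor multiplies the $O(\sqrt{\eta})$ part of $\mathbb{E}\|\by-\widetilde\by\|$. So the $\eta^{1/4}$ in the paper is an artifact of this particular decomposition and balancing, not of any reflection mechanism.

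Your swap treatment also differs from the paper's. The paper does not couple jump clocks or count mismatches that ``move the state by $\diam(\Omega)$''; a swap is encoded purely as a switch of the diffusion matrix $\Sigma(\alpha_t)$, so $\bbeta_t$ never jumps. The noise $\psi_k$ then enters through $\mathbb{E}\|\Sigma_s-\widetilde\Sigma^\eta_{k\eta}\|^2\le Cd\,\tilde\delta_2(\tau_1,\tau_2)\bigl(\eta+\max_k\sqrt{\mathbb{E}|\psi_k|^2}\bigr)$, borrowed from \citet{deng2020}; combined with the $\eta^{-1/4}$ prefactor above, this is where the term $\sqrt{\eta^{-1/2}\max_k\sqrt{\mathbb{E}|\psi_k|^2}}$ comes from. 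Your Poisson-clock-mismatch argument could be developed into an alternative, but it is not the paper's mechanism and you would need to redo the accounting to hit (or beat) the stated exponent on $\psi_k$.
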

\vspace{-0.1 in}
Interested readers can refer to Appendix \ref{sec:appendix_discrete} for details.

\vspace{-0.1 in}
\section{Experiments}\label{section_experiments}
To validate the r2SGLD algorithm\footnote{Code is available at \href{https://github.com/haoyangzheng1996/r2SGLD}{github.com/haoyangzheng1996/r2SGLD}}, we begin by applying it to dynamical system identification (Section~\ref{section:exp_lorenz}), where the physical constraints are inherent in the dynamical system. Subsequently, its effectiveness in multi-mode distribution simulation is detailed in Section~\ref{section:exp_synthetic}. Lastly, Section~\ref{section:exp_class} illustrates the algorithm's performance in deep learning tasks.

\vspace{-0.1 in}
\subsection{Identifying the Lorenz Systems}\label{section:exp_lorenz}
The Lorenz system is a system of ordinary differential equations that underscores that chaotic systems can be completely deterministic and yet still be inherently unpredictable over long periods of time:
\begin{equation}
    \begin{aligned}
        \dot x(t) &=\sigma (y-x),\\
        \dot y(t) &=x(\rho -z)-y,\\
        \dot z(t) &=xy-\beta z,
    \end{aligned}
\end{equation}
where the system dynamics is determined by three unknown parameters: \(\sigma\) (the Prandtl number), \(\rho\) (the Rayleigh number), and \(\beta\) (the aspect ratio)\footnote{Following traditional conventions in the Lorenz system, we use $\beta$ to differentiate it from the parameters $\bbeta$ as defined in \eqref{eq:sde_2couple}.}. 
A prevalent approach to learning the Lorenz system involves the Sparse Identification of Nonlinear Dynamics \cite{brunton2016discovering, desilva2020, Kaptanoglu2022}. This method employs thresholding least squares using the position and its time derivative for $x(t)$, $y(t)$, and $z(t)$ at specific times:
\begin{equation}
    \mathbf{\dot X} = \Theta(\mathbf{X})\bbeta,
\end{equation}
where $\mathbf{\dot X}$ is the concatenation of multiple state velocities $[\dot x(t_m), \dot y(t_m), \dot z(t_m)]$ sampled at several times $t_1, t_2, \cdots, t_m, \cdots$. $\Theta(\mathbf{X})$ is the candidate basis from the sampled states, and $\bbeta$ is a sparse matrix to determine which candidate basis is active. In our work, this method is tailored by simplifying the sparse coefficient matrix, which targets the learning of the unknown parameters $\sigma$, $\rho$, and $\beta$. Another critical aspect is that our approach relies solely on gradient information of the potentials according to state velocities, rather than having access to the sampled states:
\begin{equation}
    \nabla \widetilde U(\widetilde \bbeta_{k}) = \frac{1}{m} \Theta^\intercal \left(\Theta\widetilde \bbeta_{k}-\mathbf{\dot X}\right),
\end{equation}
where the matrix $\Theta$ from $\mathcal R^{m\times 5}$ here encompasses five candidate basis functions associated with $x(t_m)$, $y(t_m)$, $z(t_m)$, $x(t_m)y(t_m)$, and $x(t_m)z(t_m)$. Additionally, the matrix $\widetilde \bbeta \in \mathcal R^{5\times 3}$, an approximation of $\bbeta$ containing parameters $\sigma, \rho, \beta$, interacts with $\mathbf{\dot X}$ from $\mathcal R^{m\times 3}$, representing $m$ sampled state velocities from $t_1$ to $t_m$. 

A general framework to identify the Lorenz system with r2SGLD algorithm is shown in Figure~\ref{fig:lorenz_framework}. The data for this work is generated using a six-stage, fifth-order Runge-Kutta method~\cite{atkinson1991introduction}. This method is applied to compute states from time 0 to 100 at intervals of 0.01, followed by calculating the velocity of $\mathbf{\dot X}$ using elementary matrix operations. Once the sampling and swapping steps are complete in each iteration, the Runge-Kutta method is used with the latest sampled model parameters, which aims to derive both the approximate states matrix $\widetilde{\mathbf{X}}$ and the candidate basis $\Theta(\widetilde{\mathbf{X}})$ for the relevant time stamp. Upon convergence of the empirical distribution to its stationary counterpart or upon satisfying certain stopping conditions, the sampling and swapping steps are halted. The outputs are the empirical posterior modes of the unknown parameters, which are used to recover the target dynamical systems.

\begin{figure*}[!ht]
\centering
\subfigure[Truth]{
\centering
\includegraphics[width=.55\columnwidth]{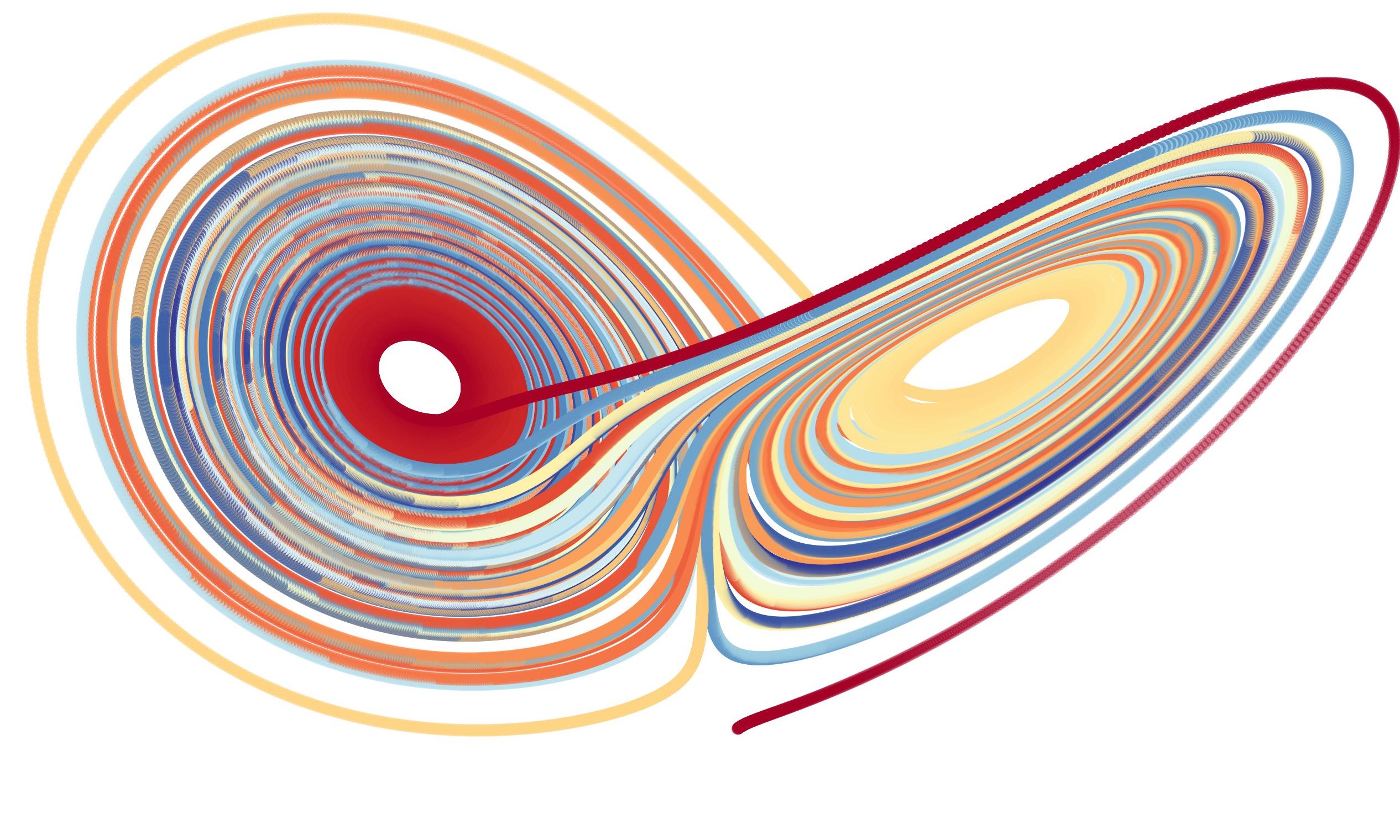}\label{fig:lorenz_sim_truth}
}
\subfigure[reSGLD]
{
\centering
\includegraphics[width=.55\columnwidth]{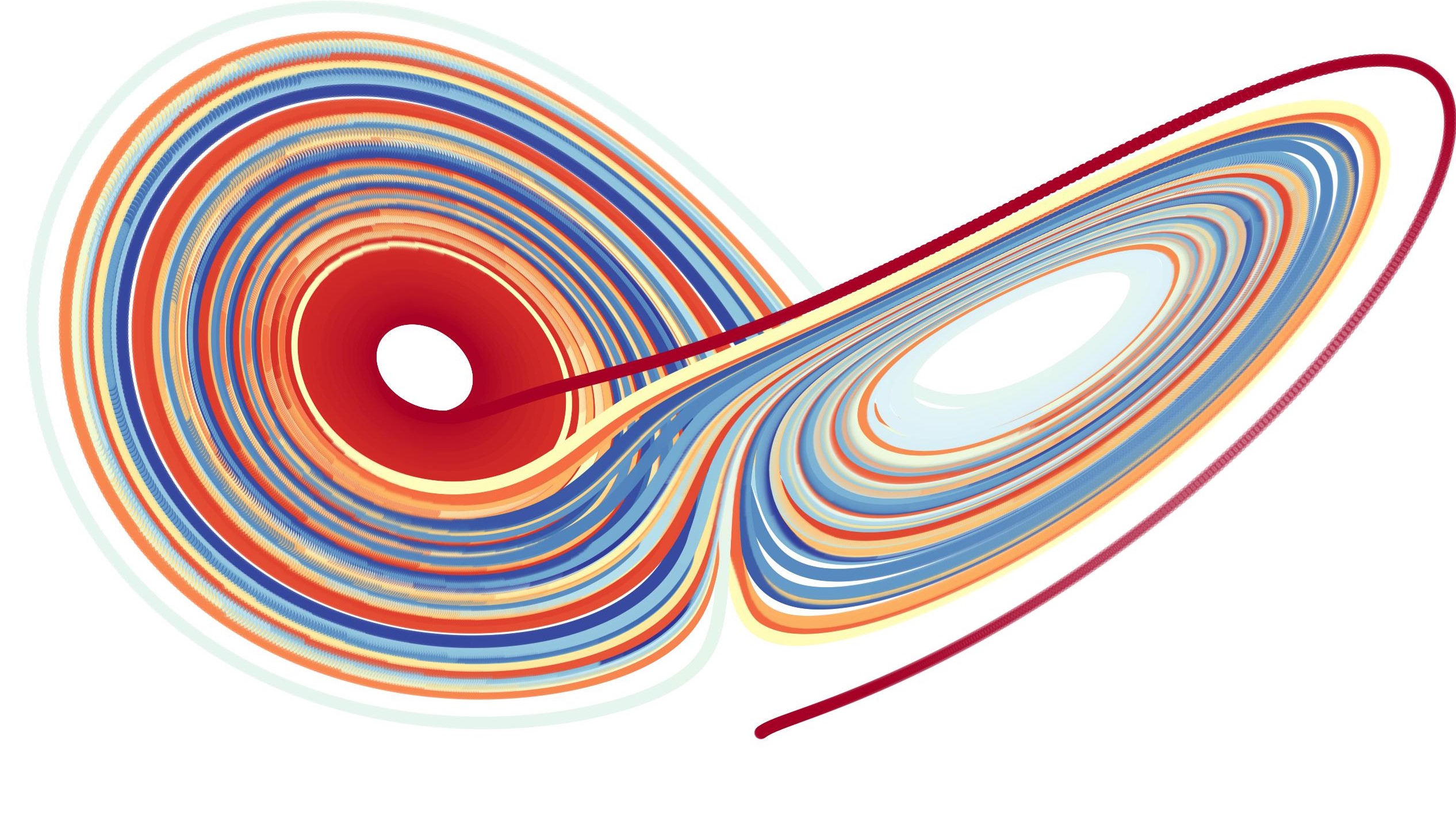}\label{fig:lorenz_sim_resgld}
}
\subfigure[r2SGLD]
{
\centering
\includegraphics[width=.55\columnwidth]{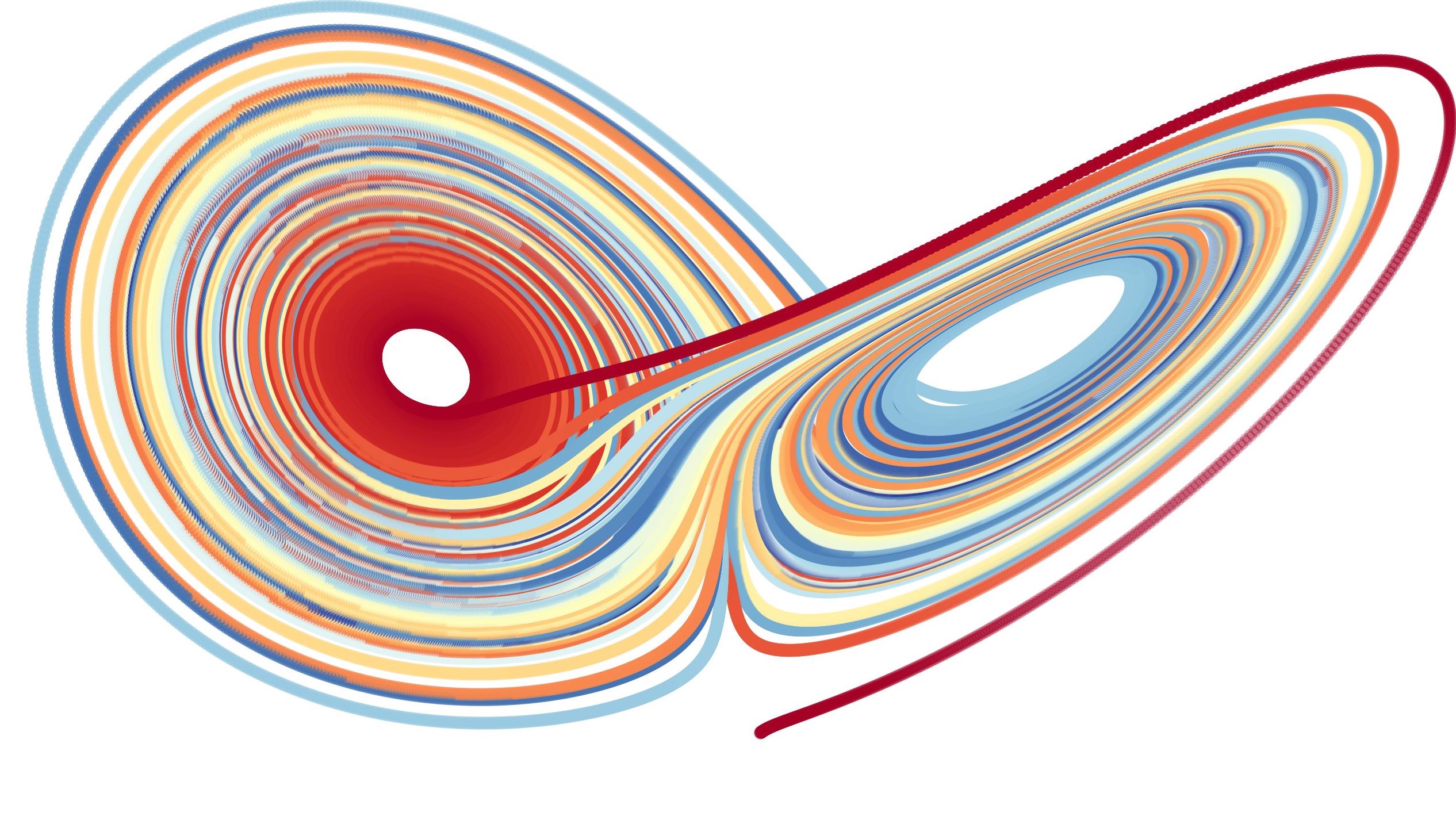}\label{fig:lorenz_sim_r2SGLD}
}
\vspace{-0.15 in}
\caption{Simulation of the Lorenz system based on the empirical posterior modes of model parameters.}
\label{fig:lorenz_sim}\vspace{-0.1 in}
\end{figure*}

We apply dual-chain reSGLD and r2SGLD for sampling within the parameter space to estimate the posterior mode of unknown parameters ($\hat\sigma$, $\hat\rho$, and $\hat\beta$). For r2SGLD, its sampling process adheres to physical constraints requiring the positivity of parameters ($\sigma, \rho, \beta>0$). To ensure global stability within the system, two additional constraints are implemented: one mandates a high rate of viscous dissipation to inhibit amplification of minor perturbations in the system's velocity field ($\sigma > 1 + \beta$), and the other requires strong thermal forcing to counterbalance the stabilizing effects of viscous dissipation ($\rho > 1$). To streamline the algorithm's performance, we enforce accurate values for model parameters associated with the less critical candidate basis, which directs the algorithm's focus primarily toward identifying the essential model parameters.

\begin{figure}[htbp]
\centering
\subfigure[reSGLD]{
\centering
\includegraphics[width=.4\columnwidth]{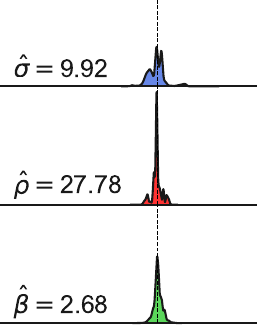}\label{fig:lorenz_posterior_no}
}\hspace{-0mm}
\subfigure[r2SGLD]{
\centering
\includegraphics[width=.4\columnwidth]{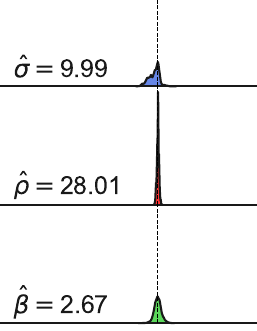}\label{fig:lorenz_posterior_with}
}\vspace{-0.1 in}
\caption{Posterior distributions of the identified model parameters.}
\label{fig:lorenz_posterior}\vspace{-0.1 in}
\end{figure}

\begin{figure*}
  \centering
  \begin{tabular}{ c c }
    \includegraphics[width=.165\linewidth]{./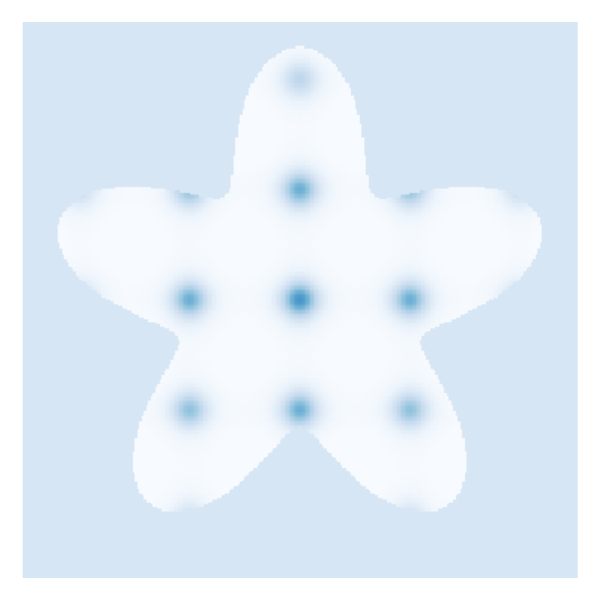} &
      \includegraphics[width=.165\linewidth]{./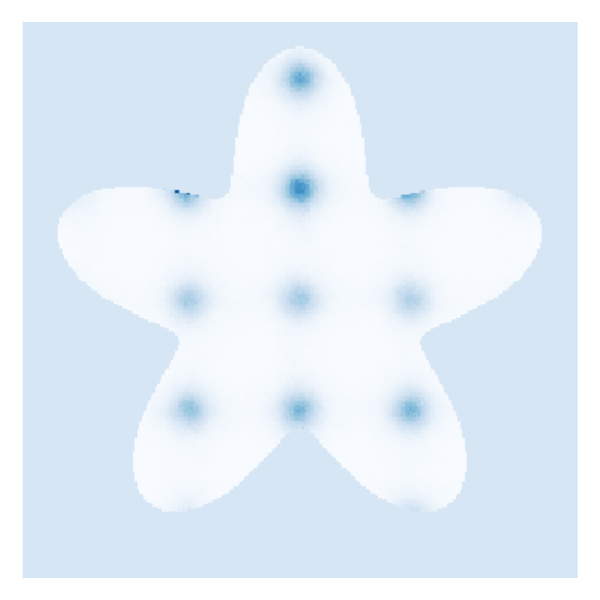} \\
      (a) Truth & (b) R-SGLD \\
    \includegraphics[width=.165\linewidth]{./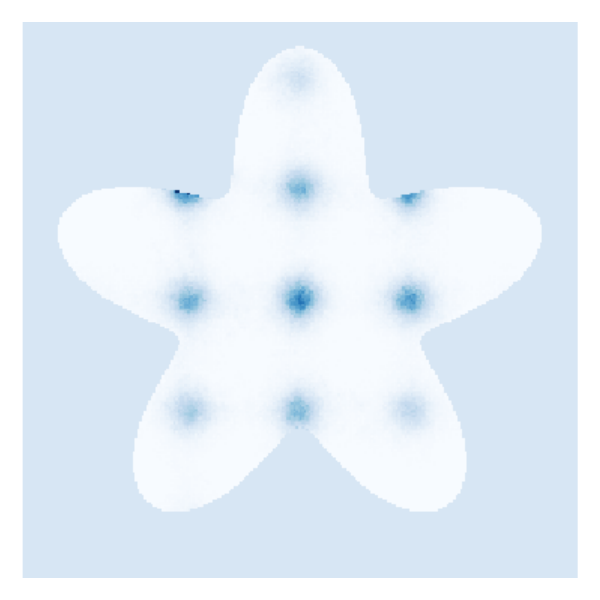} &
      \includegraphics[width=.165\linewidth]{./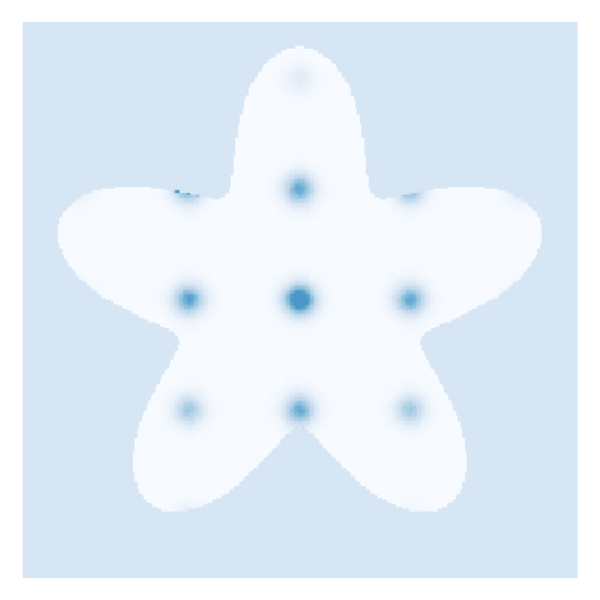} \\
      (c) R-cycSGLD & (d) r2SGLD \\
  \end{tabular}\hspace{-.1in}
  \begin{tabular}{ c }\vspace{.18 in}
    \includegraphics[width=.5144\linewidth]{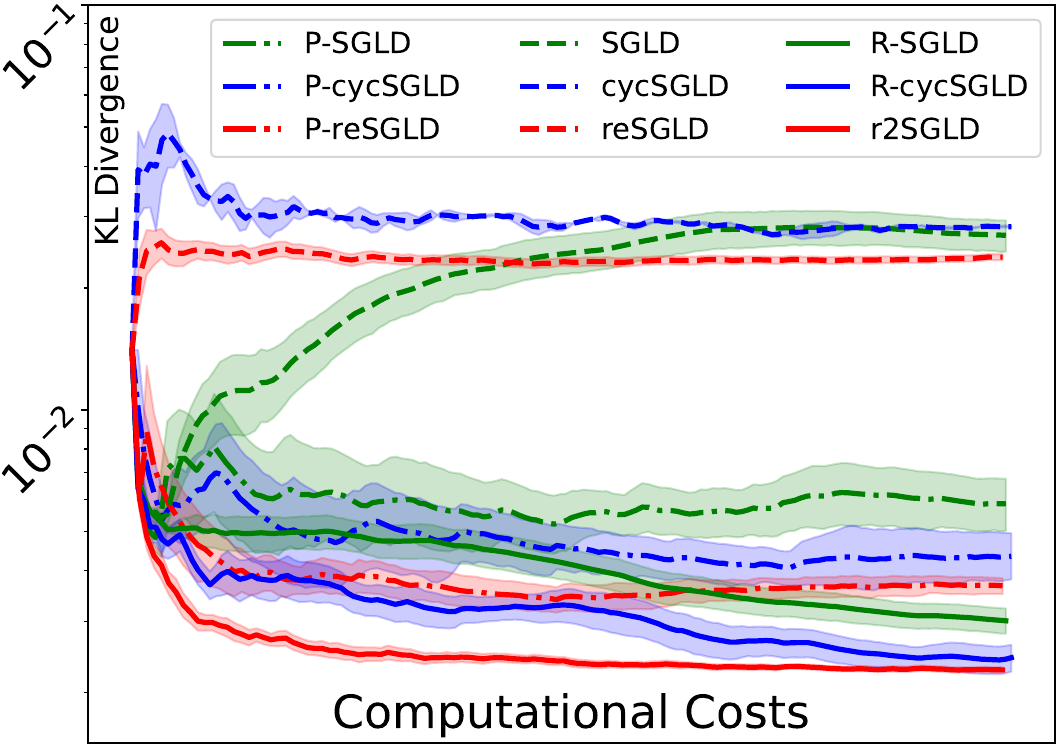} \vspace{-.18 in}\\
     (e) Kullback–Leibler Divergence \\
  \end{tabular}%
  \vspace{-.05 in}
  \caption{Empirical behavior on multi-mode distributions with flower-shaped boundaries.}\label{fig:sim_multimode}
\vspace{-.15 in}
\end{figure*}

Upon adhering to the specified conditions, we evaluate the effectiveness of reflection in reSGLD by contrasting the empirical posteriors with reflection (Figure~\ref{fig:lorenz_posterior_with}) against those without reflection (Figure~\ref{fig:lorenz_posterior_no}). The presented figures illustrate estimated values of the empirical posterior modes of $\hat\sigma$, $\hat\rho$, and $\hat\beta$, as generated by the reSGLD and r2SGLD methods. Notably, the posterior distributions displayed here are normalized to facilitate a more straightforward and clear comparison. The result reveals that the posterior modes derived from r2SGLD align more closely with the true parameters ($\sigma=10.0$, $\rho=28.0$, $\beta=\frac{8}{3}$) compared to those from reSGLD. Notably, the empirical posterior $\hat\sigma$ from reSGLD exhibits dual peaks, whereas r2SGLD's posterior demonstrates a singular peak, which indicates a more reasonable result. For simulating the dynamics of the Lorentz system, the learned posterior modes are utilized as model parameters, with the simulations presented in Figure~\ref{fig:lorenz_sim}. While the simulations from reSGLD and r2SGLD are closely matched, r2SGLD demonstrates slightly improved results over extended simulation periods. This implies that r2SGLD yields more reliable outcomes. Further details and results, including baseline comparisons and exploration of identifying the Lotka-Volterra model, are available in \ref{sec:lorenz_sup} and \ref{sec:lotka_volterra_sup}.

\subsection{Constrained Multi-modal Simulations}\label{section:exp_synthetic}

In this study, we examine the enhanced sample efficiency of r2SGLD through simulating bounded multi-modal distributions. Our proposed algorithm (with two chains), alongside standard baselines such as (reflected) SGLD (SGLD and R-SGLD), (reflected) cyclical SGLD (cycSGLD and R-cycSGLD) \cite{ruqi2020}, and reSGLD, employs gradient information to generate samples and approximate posterior distributions. This comparative analysis aims to demonstrate the acceleration effect offered by the r2SGLD.

Our study focuses on a multi-modal density characterized by a 2D mixture of 25 Gaussian distributions within a flower-shaped boundary, as depicted in Figure~\ref{fig:sim_multimode}(a). The boundary is mathematically defined as:
\begin{equation*}
    r=\sin (2 \pi p t)+m, \quad x=r \cos (2 \pi t), \quad y=r \sin (2 \pi t),
\end{equation*}
where $r$ denotes the radius, $p$ the number of petals, $m$ the extent of radial displacement, and $x$, $y$ are the horizontal and vertical coordinates, respectively. For the target distribution, we set the boundary with parameters $p=5$ and $m=3$.

The empirical posterior distributions generated by constrained sampling methods (R-SGLD, R-cycSGLD, and r2SGLD) are exhibited in Figure~\ref{fig:sim_multimode}. In this context, R-SGLD (Figure~\ref{fig:sim_multimode}(b)) exhibits the least effective performance, which notably fails to quantify the weights of each mode accurately. The performance is moderately improved with R-cycSGLD (Figure~\ref{fig:sim_multimode}(c)), but it remains improvement spaces. In contrast, the results given by r2SGLD (Figure~\ref{fig:sim_multimode}(d)) show commendable performance. 

Additionally, we consider a penalty-based approach as a baseline for simulating constrained multi-modal distributions. We incorporated an L2 regularization term in the objective function to penalize samples that exit the designated region, defined as:
\begin{equation*}
    \nabla \widetilde{U}_{\text{reg}}(\widetilde{\bbeta}_k) = \nabla \widetilde{U}(\widetilde{\bbeta}_k) + \xi \widetilde{\bbeta}_k
\end{equation*}
where $\xi$ represents the shrinkage coefficient. We alternate between the unpenalized objective $\widetilde{U}(\widetilde{\bbeta}_k)$ within the region and the penalized $\widetilde{U}_{\text{reg}}(\widetilde{\bbeta}_k)$ when out-of-bounds. This approach is evaluated using three penalized variants: penalized SGLD (P-SGLD), penalized cyclic SGLD (P-cycSGLD), and penalized reSGLD (P-reSGLD). Figure~\ref{fig:sim_multimode} (e) presents a comparative analysis of various sampling methods, including SGLD, cycSGLD, reSGLD, P-SGLD, P-cycSGLD, P-reSGLD, R-SGLD, R-cycSGLD, and r2SGLD, focusing on their KL divergence under the same computational cost. This analysis incorporates 95\% confidence intervals derived from 10 runs under different initial conditions. 

From the result, the implementation of a reflection operation markedly reduces KL divergence by improving sample efficiency through the redirection of out-of-bound samples. While the L2 regularization helps maintain samples within the boundary, it confirms that the reflection-based algorithm outperforms these penalized variants. The comparative inefficiency of the penalized approaches stems from their acceptance of some out-of-boundary samples, which compromises the sample efficiency. This further highlights the superiority of the r2SGLD algorithm in avoiding over-exploration and improving sample efficiency. Among the reflection-based sampling methods, R-SGLD reports the least favorable KL divergence, possibly due to its tendency to remain in local modes. R-cycSGLD outperforms R-SGLD owing to its adaptive learning rate, which facilitates both exploration and exploitation. The most effective performance is observed in r2SGLD, where the dual-chain structure allows for more efficient exploration. 

To empirically substantiate our theoretical assertion that reducing the domain's diameter enhances mixing rates with quadratic dependency, we conducted further experiments with r2SGLD. These experiments explored the impact of domain diameters, set between 1.5 and 3.0, on mixing rates within convex bounded domains versus a non-convex distribution with 25 Gaussian modes. For detailed experimental setups and further discussions, please see Section~\ref{sec:multimode_sup}.

\subsection{Non-convex Optimization for Image Classifications}\label{section:exp_class}

We further extend the testing to CIFAR100 benchmarks, which utilize 20 and 56-layer residual networks (ResNet20 and ResNet56, respectively) for training and testing. Our performance evaluation considers modified baseline algorithms with momentum terms, which is crucial for intricate tasks like image classification. This integration of gradient and curvature information enables a more effective exploration of the parameter space to improve model performance. Therefore, our algorithmic baseline includes stochastic gradient with momentum (SGDM), its reflected variant (R-SGDM), (reflected) stochastic gradient Hamiltonian Monte Carlo (SGHMC and R-SGHMC), (reflected) cyclical SGHMC (cycSGHMC and R-cycSGHMC) featuring a cyclical learning rate. To highlight the performance of the proposed algorithm, we include (reflected) replica exchange SGHMC (reSGHMC and r2SGHMC) for the test.

Our study on CIFAR 100 with the proposed algorithm first unveiled a notable advantage: employing reflection operation enables larger learning rates\footnote{In empirical Deep Neural Network (DNN) training using SGD, a larger learning rate not only increases discretization errors but also effectively raises the temperature of the resulting Markov chain \cite{Mandt}. We specify that ``large learning rates" are primarily employed during the initial training stages to explore the landscape, and are gradually decayed to implement simulated annealing for enhanced global optimization.}, which is fundamental for detailed model exploration. Algorithms without reflected projection often exhibit failure (test accuracy under 10\% or infinite training loss), particularly at initial learning rates over 10.0\footnote{The initial learning rate for CIFAR 100 is scaled to 2e-4 when accounting for the training data size of 50,000.}. Conversely, setting reflection operation between [-4.0, 4.0] not only stabilizes the model but also improves test accuracy to over 70\%. This highlights the critical role of reflected projection in ensuring successful learning in complex tasks. Moreover, it indicates the potential for state-of-the-art results given sufficient computational resources for model exploration.
\begin{table}[!htbp]
\begin{sc}
\vspace{-0.1in}
\caption[Table caption text]{Uncertainty estimation comparison between algorithms with reflection and without reflection. } \label{tab:cifar_record}
\vspace{-0.1in}
\small
\begin{center} 
        \begin{tabular}{c|cccc}
        \hline
        \multirow{2}{*}{\scriptsize Methods} & \multicolumn{3}{c}{Metrics (ResNet20)}                   \\ \cline{2-4} 
                                & ACC (\%) $\uparrow$ & NLL $\downarrow$     & Brier $(\mbox{\textperthousand})$ $\downarrow$     \\ \hline \hline
        \scriptsize{SGDM} & $72.13\pm0.60$      & $9667\pm108$      & $2.78\pm0.05$      \\ 
        \scriptsize{SGHMC}            & $72.47\pm0.45$      & $9543\pm157$      & $2.75\pm0.05$      \\ 
        \scriptsize{\upshape{cyc}SGHMC}  & {$73.49\pm0.17$} & {$8913\pm 76$} & {$2.65\pm0.02$} \\ 
        \scriptsize{\upshape{re}SGHMC}  & {$75.01\pm0.14$} & {$8552\pm 69$} & {$2.50\pm0.01$} \\  \hline   
        \scriptsize{r-SGDM} & $72.43\pm0.35$      & $9626\pm 94$      & $2.75\pm0.03$      \\ 
        \scriptsize{r-SGHMC}    & {$72.85\pm0.51$} & {$9501\pm167$} & {$2.73\pm0.05$} \\ 
        \scriptsize{r-\upshape{cyc}SGHMC} & {$73.77\pm0.22$} & {$8953\pm52$} & {$2.62\pm0.02$} \\ 
        \textbf{\scriptsize{\upshape{r}2SGHMC}}  & \bm{$75.38\pm0.17$} & \bm{$8489\pm66$} & \bm{$2.46\pm0.02$} \\  \hline
        \end{tabular}
\end{center} 
\begin{center} 
        \begin{tabular}{c|cccc}
        \hline
        \multirow{2}{*}{\scriptsize Methods} & \multicolumn{3}{c}{Metrics (ResNet56)}                   \\ \cline{2-4} 
                                & ACC (\%) $\uparrow$ & NLL $\downarrow$     & Brier $(\mbox{\textperthousand})$ $\downarrow$     \\ \hline \hline
        \scriptsize{SGDM} & {$74.40\pm0.71$} & {$9724\pm169$} & {$3.59\pm0.23$} \\
        \scriptsize{SGHMC} & {$74.22\pm0.66$} & {$9723\pm214$} & {$3.23\pm0.21$} \\
        \scriptsize{\upshape{cyc}SGHMC} & {$77.98\pm0.61$} & {$8303\pm161$} & {$3.19\pm0.20$} \\
        \scriptsize{\upshape{re}SGHMC} & {$78.87\pm0.44$} & {$7406\pm130$} & {$2.94\pm0.06$} \\ \hline       
        \scriptsize{r-SGDM} & {$74.70\pm0.68$} & {$9507\pm106$} & {$3.53\pm0.18$} \\
        \scriptsize{r-SGHMC }    & {$75.10\pm0.55$} & {$9232\pm158$} & {$3.36\pm0.23$} \\ 
        \scriptsize{r-\upshape{cyc}SGHMC } & {$78.41\pm0.67$} & {$7711\pm144$} & {$3.12\pm0.11$} \\
        \textbf{\scriptsize{\upshape{r}2SGHMC}} & \bm{$79.39\pm0.30$} & \bm{$7155\pm91$} & \bm{$2.89\pm0.02$} \\  \hline
        \end{tabular}
\end{center} 
\end{sc}
\vspace{-0.3in}
\end{table}

Although a large learning rate has the potential to offer better model performance, we do not directly use a learning rate of more than 10.0 due to computational resource limitations. Instead, we investigate the optimal learning rate across 1,000 epochs for our proposed algorithms and baselines (see \ref{subsec:learn_rate} for details). In our study, cycSGHMC and R-cycSGHMC are implemented with a triple-cycle cosine learning rate schedule, while reSGHMC and r2SGHMC employ four chains per model to facilitate exhaustive exploration. 
It should be noted that incorporating advanced methodologies for chain swapping is essential in this task, and we have strategically adopted more sophisticated schemes for chain swap in both the reSGLD and r2SGLD frameworks. This adaptation is crucial as we engage with multiple chains instead of the conventional two. For an in-depth exploration of these enhanced chain swap mechanisms, readers are encouraged to refer to \ref{sec:classification_sup}. 
The batch size is 2,048 across all methods. We repeat experiments for each algorithm ten times to record the mean and two standard deviations for metrics such as Bayesian model averaging (BMA), negative log-likelihoods (NLL), and Brier scores (Brier). 

Upon comparative evaluation of various algorithms shown in Table~\ref{tab:cifar_record}, we observe distinct performance patterns. Algorithms with reflection, particularly r2SGHMC, consistently outperform their non-reflective counterparts across all metrics. Specifically, r2SGHMC achieves the highest accuracy (BMA), NLL, and Brier score, both in ResNet20 and ResNet56 models. While standard SGDM and SGHMC show moderate performance, their reflected versions (R-SGDM and R-SGHMC) exhibit improvements, which indicates the effectiveness of reflection in model optimization. Notably, cycSGHMC and R-cycSGHMC demonstrate significant performance boosts, attributed to their cyclical learning rate schedules. Our proposed method, r2SGHMC, characterized by its sophisticated chain-swapping mechanism, has proven to be highly effective, which makes it well-suited for complex deep-learning tasks.

\section{Conclusion and Discussion}

By avoiding unnatural samples and enabling efficient exploration within bounded domains, the r2SGLD algorithm marks a significant advancement in the non-convex exploration of reSGLD, particularly in high-temperature chains. This improvement is substantiated through both theoretical analysis and empirical validation.

Our theoretical investigation comprises two parts: analyzing convergence in continuous scenarios and evaluating the discretization errors of our algorithm. For continuous-time diffusion, we demonstrate its convergence in both $\chi^2$-divergence and $\mathcal{W}_2$-distance under non-convex bounded domains. This analysis reveals a quantitative correlation between the constraint radius and the convergence rate, where a smaller radius facilitates accelerating the mixing rate, with rates decaying as quadratic. For discrete-time dynamics, we further provide analysis of the discretization error measured by the $\mathcal{W}_1$-distance under convex bounded domains. 

Experimentally, we establish the versatility of our algorithm in diverse applications. In the realm of dynamical systems, our method pioneers the use of constrained sampling for parameter identification. We offer robust posterior distributions to demonstrate the importance of reflection operations. In sampling from constrained multi-modal distributions, r2SGLD outperforms other methods, as evidenced both in empirical distribution representation and rapid KL divergence convergence. In non-convex optimization tasks within deep learning, r2SGLD successfully handles large learning rates, a task where na\"ive reSGLD struggles, due to the absence of a reflection mechanism. Despite computational limitations necessitating a reduced learning rate, our algorithm still outperforms others across various metrics. It is noteworthy that these tests were conducted within 1,000 epochs. We anticipate even better results with higher learning rates and with sufficient training epochs.

\section*{Acknowledgements}
We thank the anonymous reviewers for their helpful comments. Q. Feng is partially supported by the National Science Foundation (DMS-2306769). G. Lin acknowledges the support of the National Science Foundation (DMS-2053746, DMS-2134209, ECCS-2328241, and OAC-2311848), the U.S. Department of Energy (DOE) Office of Science Advanced Scientific Computing Research (DE-SC0023161), and DOE–Fusion Energy Science (DE-SC0024583).

\clearpage

\section*{Impact Statement}
This paper presents work whose goal is to advance the field of Machine Learning. There are many potential societal consequences of our work, none of which we feel must be specifically highlighted here.
\bibliography{mybib,mybib2}

\begin{thebibliography}{66}
\providecommand{\natexlab}[1]{#1}
\providecommand{\url}[1]{\texttt{#1}}
\expandafter\ifx\csname urlstyle\endcsname\relax
  \providecommand{\doi}[1]{doi: #1}\else
  \providecommand{\doi}{doi: \begingroup \urlstyle{rm}\Url}\fi

\bibitem[Ahn \& Chewi(2021)Ahn and Chewi]{ahn2021efficient}
Ahn, K. and Chewi, S.
\newblock {Efficient Constrained Sampling via the Mirror-Langevin Algorithm}.
\newblock \emph{Advances in Neural Information Processing Systems (NeurIPS)}, 34:\penalty0 28405--28418, 2021.

\bibitem[Ahn et~al.(2012)Ahn, Korattikara, and Welling]{Ahn12}
Ahn, S., Korattikara, A., and Welling, M.
\newblock {B}ayesian {P}osterior {S}ampling via {S}tochastic {G}radient {F}isher {S}coring.
\newblock In \emph{Proc. of the International Conference on Machine Learning (ICML)}, 2012.

\bibitem[Atkinson(1991)]{atkinson1991introduction}
Atkinson, K.
\newblock \emph{{An Introduction to Numerical Analysis}}.
\newblock John wiley \& sons, 1991.

\bibitem[Bakry et~al.(2008)Bakry, F.~Barthe, and Guillin]{Bakry08}
Bakry, D., F.~Barthe, P.~C., and Guillin, A.
\newblock A {S}imple {P}roof of the {P}oincaré {I}nequality for {A} {L}arge {C}lass of {P}robability {M}easures.
\newblock \emph{Electron. Comm. Probab.}, 13:\penalty0 60--66, 2008.

\bibitem[Bakry et~al.(2014)Bakry, Gentil, and Ledoux]{Bakry20142}
Bakry, D., Gentil, I., and Ledoux, M.
\newblock {A}nalysis and {G}eometry of {M}arkov {D}iffusion {O}perators.
\newblock \emph{Springer}, 2014.

\bibitem[Berg \& Neuhaus(1992)Berg and Neuhaus]{berg1992multicanonical}
Berg, B.~A. and Neuhaus, T.
\newblock {Multicanonical Ensemble: A New Approach to Simulate First-Order Phase Transitions}.
\newblock \emph{{Physical Review Letters}}, 68\penalty0 (1):\penalty0 9, 1992.

\bibitem[Brenner et~al.(2007)Brenner, Sweet, VonHandorf, and Izaguirre]{Brenner07}
Brenner, P., Sweet, C.~R., VonHandorf, D., and Izaguirre, J.~A.
\newblock Accelerating the {R}eplica {E}xchange {M}ethod through an {E}fficient {A}ll-pairs {E}xchange.
\newblock \emph{The Journal of Chemical Physics}, 126:\penalty0 074103, 2007.

\bibitem[Brosse et~al.(2017)Brosse, Durmus, Moulines, and Pereyra]{brosse2017sampling}
Brosse, N., Durmus, A., Moulines, {\'E}., and Pereyra, M.
\newblock {Sampling from a Log-Concave Distribution with Compact Support with Proximal Langevin Monte Carlo}.
\newblock In \emph{Proc. of Conference on Learning Theory (COLT)}, pp.\  319--342. PMLR, 2017.

\bibitem[Brunton et~al.(2016)Brunton, Proctor, and Kutz]{brunton2016discovering}
Brunton, S.~L., Proctor, J.~L., and Kutz, J.~N.
\newblock {Discovering Governing Equations from Data by Sparse Identification of Nonlinear Dynamical Systems}.
\newblock \emph{{Proceedings of the National Academy of Sciences}}, 113\penalty0 (15):\penalty0 3932--3937, 2016.

\bibitem[Bubeck et~al.(2018)Bubeck, Eldan, and Lehec]{sebastien_bubeck}
Bubeck, S., Eldan, R., and Lehec, J.
\newblock {Sampling from a Log-Concave Distribution with Projected Langevin Monte Carlo}.
\newblock \emph{Discrete Comput Geom}, pp.\  757–783, 2018.

\bibitem[Campbell et~al.(2021)Campbell, Chen, Stimper, Hernandez-Lobato, and Zhang]{campbell2021gradient}
Campbell, A., Chen, W., Stimper, V., Hernandez-Lobato, J.~M., and Zhang, Y.
\newblock {A Gradient Based Strategy for Hamiltonian Monte Carlo Hyperparameter Optimization}.
\newblock In \emph{Proc. of the International Conference on Machine Learning (ICML)}, pp.\  1238--1248. PMLR, 2021.

\bibitem[Cattiaux et~al.(2010)Cattiaux, Guillin, and Wu]{Cattiaux2010notes}
Cattiaux, P., Guillin, A., and Wu, L.-M.
\newblock A {N}ote on {T}alagrand’s {T}ransportation {I}nequality and {L}ogarithmic {S}obolev {I}nequality.
\newblock \emph{Probability Theory and Related Fields}, 148:\penalty0 285--334, 2010.

\bibitem[Chen et~al.(2015)Chen, Ding, and Carin]{Chen15}
Chen, C., Ding, N., and Carin, L.
\newblock On the {C}onvergence of {S}tochastic {G}radient {MCMC} {A}lgorithms with {H}igh-order {I}ntegrators.
\newblock In \emph{Advances in Neural Information Processing Systems (NeurIPS)}, pp.\  2278--2286, 2015.

\bibitem[Chen et~al.(2019)Chen, Chen, Dong, Peng, and Wang]{chen2018accelerating}
Chen, Y., Chen, J., Dong, J., Peng, J., and Wang, Z.
\newblock {A}ccelerating {N}onconvex {L}earning via {R}eplica {E}xchange {L}angevin {D}iffusion.
\newblock In \emph{Proc. of the International Conference on Learning Representation (ICLR)}, 2019.

\bibitem[\c{S}im\c{s}ekli et~al.(2016)\c{S}im\c{s}ekli, Badeau, Cemgil, and Richard]{Simsekli2016}
\c{S}im\c{s}ekli, U., Badeau, R., Cemgil, A.~T., and Richard, G.
\newblock Stochastic {Q}uasi-{N}ewton {L}angevin {M}onte {C}arlo.
\newblock In \emph{Proc. of the International Conference on Machine Learning (ICML)}, pp.\  642--651, 2016.

\bibitem[de~Silva et~al.(2020)de~Silva, Champion, Quade, Loiseau, Kutz, and Brunton]{desilva2020}
de~Silva, B., Champion, K., Quade, M., Loiseau, J.-C., Kutz, J., and Brunton, S.
\newblock {Pysindy: A Python Package for the Sparse Identification of Nonlinear Dynamical Systems from Data}.
\newblock \emph{Journal of Open Source Software}, 5\penalty0 (49):\penalty0 2104, 2020.

\bibitem[Deng et~al.(2020{\natexlab{a}})Deng, Feng, Gao, Liang, and Lin]{deng2020}
Deng, W., Feng, Q., Gao, L., Liang, F., and Lin, G.
\newblock Non-{C}onvex {L}earning via {R}eplica {E}xchange {S}tochastic {G}radient {MCMC}.
\newblock In \emph{Proc. of the International Conference on Machine Learning (ICML)}, 2020{\natexlab{a}}.

\bibitem[Deng et~al.(2020{\natexlab{b}})Deng, Lin, and Liang]{CSGLD}
Deng, W., Lin, G., and Liang, F.
\newblock A {C}ontour {S}tochastic {G}radient {L}angevin {D}ynamics {A}lgorithm for {S}imulations of {M}ulti-modal {D}istributions.
\newblock In \emph{Advances in Neural Information Processing Systems (NeurIPS)}, 2020{\natexlab{b}}.

\bibitem[Deng et~al.(2022)Deng, Liang, Hao, Lin, and Liang]{icsgld}
Deng, W., Liang, S., Hao, B., Lin, G., and Liang, F.
\newblock Interacting contour stochastic gradient langevin dynamics.
\newblock In \emph{Proc. of the International Conference on Learning Representation (ICLR)}, 2022.

\bibitem[Deng et~al.(2023)Deng, Zhang, Feng, Liang, and Lin]{deng2023non}
Deng, W., Zhang, Q., Feng, Q., Liang, F., and Lin, G.
\newblock {Non-reversible Parallel Tempering for Deep Posterior Approximation}.
\newblock In \emph{Proc. of the National Conference on Artificial Intelligence (AAAI)}, volume~37, pp.\  7332--7339, 2023.

\bibitem[Deng et~al.(2024)Deng, Chen, Yang, Du, Feng, and Chen]{rSB}
Deng, W., Chen, Y., Yang, N., Du, H., Feng, Q., and Chen, R. T.~Q.
\newblock {Reflected Schr\"odinger Bridge for Constrained Generative Modeling}.
\newblock In \emph{Proc. of the Conference on Uncertainty in Artificial Intelligence (UAI)}, 2024.

\bibitem[Dong \& Tong(2020)Dong and Tong]{jingdong}
Dong, J. and Tong, X.~T.
\newblock Spectral {G}ap of {R}eplica {E}xchange {L}angevin {D}iffusion on {M}ixture {D}istributions.
\newblock \emph{ArXiv 2006.16193v2}, July 2020.

\bibitem[Dong \& Tong(2022)Dong and Tong]{jingdong3}
Dong, J. and Tong, X.~T.
\newblock Spectral {G}ap of {R}eplica {E}xchange {L}angevin {D}iffusion on {M}ixture {D}istributions.
\newblock \emph{Stochastic Processes and their Applications}, 151:\penalty0 451--489, 2022.

\bibitem[Dupuis et~al.(2012)Dupuis, Liu, Plattner, and Doll]{Paul12}
Dupuis, P., Liu, Y., Plattner, N., and Doll, J.~D.
\newblock On the {I}nfinite {S}wapping {L}imit for {P}arallel {T}empering.
\newblock \emph{SIAM J. Multiscale Modeling \& Simulation}, 10, 2012.

\bibitem[Earl \& Deem(2005)Earl and Deem]{parallel_tempering05}
Earl, D.~J. and Deem, M.~W.
\newblock Parallel {T}empering: {T}heory, {A}pplications, and {N}ew {P}erspectives.
\newblock \emph{Phys. Chem. Chem. Phys.}, 7:\penalty0 3910--3916, 2005.

\bibitem[Hindmarsh(1983)]{hindmarsh1983odepack}
Hindmarsh, A.
\newblock {Odepack, A Systemized Collection of Ode Solvers}.
\newblock \emph{Scientific Computing}, 1983.

\bibitem[Hsieh et~al.(2018)Hsieh, Kavis, Rolland, and Cevher]{hsieh2018mirrored}
Hsieh, Y.-P., Kavis, A., Rolland, P., and Cevher, V.
\newblock {Mirrored Langevin Dynamics}.
\newblock \emph{Advances in Neural Information Processing Systems (NeurIPS)}, 31, 2018.

\bibitem[Jin et~al.(2021)Jin, Xu, Shi, Xiao, and Gu]{jin2021mots}
Jin, T., Xu, P., Shi, J., Xiao, X., and Gu, Q.
\newblock {Mots: Minimax Optimal Thompson Sampling}.
\newblock In \emph{International Conference on Machine Learning}, pp.\  5074--5083. PMLR, 2021.

\bibitem[Kang \& Ramanan(2014)Kang and Ramanan]{Kang2014}
Kang, W. and Ramanan, K.
\newblock {Characterization of Stationary Distributions of Reflected Diffusions}.
\newblock \emph{The Annals of Applied Probability}, 24\penalty0 (4):\penalty0 1329 -- 1374, 2014.

\bibitem[Kaptanoglu et~al.(2022)Kaptanoglu, de~Silva, Fasel, Kaheman, Goldschmidt, Callaham, Delahunt, Nicolaou, Champion, Loiseau, Kutz, and Brunton]{Kaptanoglu2022}
Kaptanoglu, A.~A., de~Silva, B.~M., Fasel, U., Kaheman, K., Goldschmidt, A.~J., Callaham, J., Delahunt, C.~B., Nicolaou, Z.~G., Champion, K., Loiseau, J.-C., Kutz, J.~N., and Brunton, S.~L.
\newblock {Pysindy: A Comprehensive Python Package for Robust Sparse System Identification}.
\newblock \emph{Journal of Open Source Software}, 7\penalty0 (69):\penalty0 3994, 2022.

\bibitem[Kook et~al.(2022)Kook, Lee, Shen, and Vempala]{kook2022sampling}
Kook, Y., Lee, Y.-T., Shen, R., and Vempala, S.
\newblock {Sampling with Riemannian Hamiltonian Monte Carlo in a Constrained Space}.
\newblock \emph{Advances in Neural Information Processing Systems (NeurIPS)}, 35:\penalty0 31684--31696, 2022.

\bibitem[Kufner(1985)]{Kufner1985weightedSobolev}
Kufner, A.
\newblock \emph{Weighted Sobolev spaces}.
\newblock Wiley, licensed ed., 1985.

\bibitem[Lamperski(2021)]{Andrew_Lamperski_21_COLT}
Lamperski, A.
\newblock {Projected Stochastic Gradient Langevin Algorithms for Constrained Sampling and Non-Convex Learning}.
\newblock In \emph{Proc. of Conference on Learning Theory (COLT)}, 2021.

\bibitem[Lee et~al.(2018)Lee, Risteski, and Ge]{Holden18}
Lee, H., Risteski, A., and Ge, R.
\newblock Beyond {L}og-concavity: {P}rovable {G}uarantees for {S}ampling {M}ulti-modal {D}istributions using {S}imulated {T}empering {L}angevin {M}onte {C}arlo.
\newblock In \emph{Advances in Neural Information Processing Systems (NeurIPS)}, 2018.

\bibitem[Lee \& Vempala(2018)Lee and Vempala]{lee2018convergence}
Lee, Y.~T. and Vempala, S.~S.
\newblock {Convergence Rate of Riemannian Hamiltonian Monte Carlo and Faster Polytope Volume Computation}.
\newblock In \emph{Proceedings of the 50th Annual ACM SIGACT Symposium on Theory of Computing}, pp.\  1115--1121, 2018.

\bibitem[Lelievre et~al.(2019)Lelievre, Rousset, and Stoltz]{lelievre2019hybrid}
Lelievre, T., Rousset, M., and Stoltz, G.
\newblock {Hybrid Monte Carlo Methods for Sampling Probability Measures on Submanifolds}.
\newblock \emph{Numerische Mathematik}, 143:\penalty0 379--421, 2019.

\bibitem[Leli{\`e}vre et~al.(2023)Leli{\`e}vre, Stoltz, and Zhang]{lelievre2023multiple}
Leli{\`e}vre, T., Stoltz, G., and Zhang, W.
\newblock {Multiple Projection Markov Chain Monte Carlo Algorithms on Submanifolds}.
\newblock \emph{IMA Journal of Numerical Analysis}, 43\penalty0 (2):\penalty0 737--788, 2023.

\bibitem[Li et~al.(2019)Li, Wu, Mackey, and Erdogdu]{Li19}
Li, X., Wu, D., Mackey, L., and Erdogdu, M.~A.
\newblock Stochastic {R}unge-{K}utta {A}ccelerates {L}angevin {M}onte {C}arlo and {B}eyond.
\newblock In \emph{Advances in Neural Information Processing Systems (NeurIPS)}, pp.\  7746--7758, 2019.

\bibitem[Lieberman(1985)]{lieberman1985regularized}
Lieberman, G.
\newblock Regularized distance and its applications.
\newblock \emph{Pacific journal of Mathematics}, 117\penalty0 (2):\penalty0 329--352, 1985.

\bibitem[Lingenheil et~al.(2009)Lingenheil, Denschlag, Mathias, and Tavan]{Martin09}
Lingenheil, M., Denschlag, R., Mathias, G., and Tavan, P.
\newblock Efficiency of {E}xchange {S}chemes in {R}eplica {E}xchange.
\newblock \emph{Chemical Physics Letters}, 478:\penalty0 80--84, 2009.

\bibitem[Liu et~al.(2021)Liu, Tong, and Liu]{liu2021sampling}
Liu, X., Tong, X., and Liu, Q.
\newblock {Sampling with Trusthworthy Constraints: A Variational Gradient Framework}.
\newblock \emph{Advances in Neural Information Processing Systems (NeurIPS)}, 34:\penalty0 23557--23568, 2021.

\bibitem[Mandt et~al.(2017)Mandt, Hoffman, and Blei]{Mandt}
Mandt, S., Hoffman, M.~D., and Blei, D.~M.
\newblock Stochastic {G}radient {D}escent as {A}pproximate {B}ayesian {I}nference.
\newblock \emph{Journal of Machine Learning Research}, 18:\penalty0 1--35, 2017.

\bibitem[Marinari \& Parisi(1992)Marinari and Parisi]{ST}
Marinari, E. and Parisi, G.
\newblock Simulated {T}empering: {A} {N}ew {M}onte {C}arlo {S}cheme.
\newblock \emph{Europhysics Letters ({EPL})}, 19\penalty0 (6):\penalty0 451--458, 1992.

\bibitem[Mazumdar et~al.(2020)Mazumdar, Pacchiano, Ma, Jordan, and Bartlett]{mazumdar2020approximate}
Mazumdar, E., Pacchiano, A., Ma, Y., Jordan, M., and Bartlett, P.
\newblock {On Approximate Thompson Sampling with Langevin Algorithms}.
\newblock In \emph{International Conference on Machine Learning}, pp.\  6797--6807. PMLR, 2020.

\bibitem[Menaldi \& Robin(1985)Menaldi and Robin]{Menaldi1985}
Menaldi, J.-L. and Robin, M.
\newblock {Reflected Diffusion Processes with Jumps}.
\newblock \emph{The Annals of Probability}, pp.\  319--341, 1985.

\bibitem[Neal(2012)]{Neal2012}
Neal, R.~M.
\newblock {MCMC} using {Hamiltonian} {D}ynamics.
\newblock In \emph{Handbook of Markov Chain {M}onte {C}arlo}, volume~54, pp.\  113--162. Chapman and Hall/CRC, 2012.

\bibitem[Noble et~al.(2023)Noble, De~Bortoli, and Durmus]{noble2023unbiased}
Noble, M., De~Bortoli, V., and Durmus, A.
\newblock {Unbiased Constrained Sampling with Self-Concordant Barrier Hamiltonian Monte Carlo}.
\newblock In \emph{Advances in Neural Information Processing Systems (NeurIPS)}, 2023.

\bibitem[Okabe et~al.(2001)Okabe, Kawata, Okamoto, and Mikami]{DEO}
Okabe, T., Kawata, M., Okamoto, Y., and Mikami, M.
\newblock Replica {E}xchange {M}onte {C}arlo {M}ethod for the {I}sobaric–isothermal {E}nsemble.
\newblock \emph{Chemical Physics Letters}, 335:\penalty0 435–439, 2001.

\bibitem[Petzold(1983)]{linda_automatic}
Petzold, L.
\newblock {Automatic Selection of Methods for Solving Stiff and Nonstiff Systems of Ordinary Differential Equations}.
\newblock \emph{SIAM Journal on Scientific and Statistical Computing}, 4\penalty0 (1):\penalty0 136--148, 1983.
\newblock \doi{10.1137/0904010}.

\bibitem[Qi et~al.(2018)Qi, Wei, Ma, and Nussinov]{qi2018replica}
Qi, R., Wei, G., Ma, B., and Nussinov, R.
\newblock {Replica Exchange Molecular Dynamics: A Practical Application Protocol with Solutions to Common Problems and a Peptide Aggregation and Self-Assembly Example}.
\newblock \emph{{Peptide Self-Assembly: Methods and Protocols}}, pp.\  101--119, 2018.

\bibitem[Sindhikara et~al.(2010)Sindhikara, Emerson, and Roitberg]{sindhikara2010exchange}
Sindhikara, D.~J., Emerson, D.~J., and Roitberg, A.~E.
\newblock {Exchange Often and Properly in Replica Exchange Molecular Dynamics}.
\newblock \emph{Journal of Chemical Theory and Computation}, 6\penalty0 (9):\penalty0 2804--2808, 2010.

\bibitem[Swendsen \& Wang(1986)Swendsen and Wang]{PhysRevLett86}
Swendsen, R.~H. and Wang, J.-S.
\newblock Replica {M}onte {C}arlo {S}imulation of {S}pin-{G}lasses.
\newblock \emph{Physical Review Letters}, 57:\penalty0 2607--2609, 1986.

\bibitem[Syed et~al.(2022)Syed, Bouchard-C{\^o}t{\'e}, Deligiannidis, and Doucet]{syed2022non}
Syed, S., Bouchard-C{\^o}t{\'e}, A., Deligiannidis, G., and Doucet, A.
\newblock {Non-reversible Parallel Tempering: A Scalable Highly Parallel Mcmc Scheme}.
\newblock \emph{Journal of the Royal Statistical Society Series B: Statistical Methodology}, 84\penalty0 (2):\penalty0 321--350, 2022.

\bibitem[Tanaka(1979)]{tanaka1979}
Tanaka, H.
\newblock {Stochastic Differential Equations with Reflecting}.
\newblock \emph{Stochastic Processes: Selected Papers of Hiroshi Tanaka}, 9:\penalty0 157, 1979.

\bibitem[Wang \& Landau(2001)Wang and Landau]{wang2001efficient}
Wang, F. and Landau, D.~P.
\newblock {Efficient, Multiple-Range Random Walk Algorithm to Calculate the Density of States}.
\newblock \emph{{Physical Review Letters}}, 86\penalty0 (10):\penalty0 2050, 2001.

\bibitem[Wang(2007)]{wang2007estimates}
Wang, F.-Y.
\newblock {Estimates of the First Neumann Eigenvalue and the Log-Sobolev Constant on Non-convex Manifolds}.
\newblock \emph{{Mathematische Nachrichten}}, 280\penalty0 (12):\penalty0 1431--1439, 2007.

\bibitem[Wang(2014)]{wang2014analysis2}
Wang, F.-Y.
\newblock \emph{{Analysis for Diffusion Processes on Riemannian Manifolds}}, volume~18.
\newblock World Scientific, 2014.

\bibitem[Wang \& Wibisono(2022)Wang and Wibisono]{wang2022accelerating}
Wang, J.-K. and Wibisono, A.
\newblock {Accelerating Hamiltonian Monte Carlo via Chebyshev Integration Time}.
\newblock In \emph{Proc. of the International Conference on Learning Representation (ICLR)}, 2022.

\bibitem[Wang et~al.(2020)Wang, Lei, and Panageas]{wang2020fast}
Wang, X., Lei, Q., and Panageas, I.
\newblock {Fast Convergence of Langevin Dynamics on Manifold: Geodesics Meet Log-Sobolev}.
\newblock \emph{Advances in Neural Information Processing Systems (NeurIPS)}, 33:\penalty0 18894--18904, 2020.

\bibitem[Welling \& Teh(2011)Welling and Teh]{Welling11}
Welling, M. and Teh, Y.~W.
\newblock {B}ayesian {L}earning via {S}tochastic {G}radient {L}angevin {D}ynamics.
\newblock In \emph{Proc. of the International Conference on Machine Learning (ICML)}, pp.\  681--688, 2011.

\bibitem[Zappa et~al.(2018)Zappa, Holmes-Cerfon, and Goodman]{zappa2018monte}
Zappa, E., Holmes-Cerfon, M., and Goodman, J.
\newblock {Monte Carlo on Manifolds: Sampling Densities and Integrating Functions}.
\newblock \emph{{Communications on Pure and Applied Mathematics}}, 71\penalty0 (12):\penalty0 2609--2647, 2018.

\bibitem[Zhang et~al.(2020)Zhang, Li, Zhang, Chen, and Wilson]{ruqi2020}
Zhang, R., Li, C., Zhang, J., Chen, C., and Wilson, A.~G.
\newblock Cyclical {S}tochastic {G}radient {MCMC} for {B}ayesian {D}eep {L}earning.
\newblock In \emph{Proc. of the International Conference on Learning Representation (ICLR)}, 2020.

\bibitem[Zhang et~al.(2022)Zhang, Liu, and Tong]{zhang2022sampling}
Zhang, R., Liu, Q., and Tong, X.
\newblock {Sampling in Constrained Domains with Orthogonal-Space Variational Gradient Descent}.
\newblock \emph{Advances in Neural Information Processing Systems (NeurIPS)}, 35:\penalty0 37108--37120, 2022.

\bibitem[Zheng et~al.(2024)Zheng, Deng, Moya, and Lin]{zheng2024accelerating}
Zheng, H., Deng, W., Moya, C., and Lin, G.
\newblock {Accelerating Approximate Thompson Sampling with Underdamped Langevin Monte Carlo}.
\newblock In \emph{Proceedings of the International Workshop on Artificial Intelligence and Statistics}, 2024.

\bibitem[Zheng et~al.(2007)Zheng, Andrec, Gallicchio, and Levy]{zheng2007simulating}
Zheng, W., Andrec, M., Gallicchio, E., and Levy, R.~M.
\newblock {Simulating Replica Exchange Simulations of Protein Folding with a Kinetic Network Model}.
\newblock \emph{{Proceedings of the National Academy of Sciences}}, 104\penalty0 (39):\penalty0 15340--15345, 2007.

\bibitem[Zou \& Gu(2021)Zou and Gu]{zou2021convergence}
Zou, D. and Gu, Q.
\newblock {On the Convergence of Hamiltonian Monte Carlo with Stochastic Gradients}.
\newblock In \emph{Proc. of the International Conference on Machine Learning (ICML)}, pp.\  13012--13022. PMLR, 2021.

\end{thebibliography}
\bibliographystyle{icml2024}

\newpage
\appendix
\onecolumn

\icmltitle{Supplementary Materials}

\section{Experimental Details}

This section supplements the main text by detailing additional experimental procedures. Our initial focus is on the Lorenz system identification task, including a comparative analysis with four baseline methods. Subsequently, we explore the Lotka-Volterra system identification through constrained sampling, which compares both reflection-based methods and baselines without reflection operations. Furthermore, the implementation specifics of multi-mode sampling within bounded domains are discussed. This includes an examination of hyper-parameter selections and the application of these methods across various domain settings to assess the robustness of our proposed algorithm. An additional experiment is included to explore the relationship between the diameters of constrained domains and the mixing rates. Lastly, we outline hyper-parameters essential for executing large-scale image classification tasks.

\subsection{The Lorenz System}\label{sec:lorenz_sup}

For the experiments identifying the Lorenz system, we incorporate baselines such as SGLD, R-SGLD, cycSGLD, and r-cycSGLD. Across all tasks, including the main text (Section~\ref{section:exp_lorenz}) and baselines, we conduct 60,000 iterations with the first 20,000 iterations as burn-in samples. For SGLD and R-SGLD, the learning rate commences at 5e-6, decaying at a rate of 0.9999 per iteration after the first 10,000 iterations. For cycSGLD and R-cycSGLD, the initial learning rate is set at 2e-5, utilizing a 10-cycle cosine learning rate schedule. The learning rates for reSGLD and r2SGLD, set at 2e-5 and 2e-6 for the two chains, decaying post 10,000 iterations at a rate of 0.9999. We adjust correction terms to ensure a swap rate of 5\%-20\%.

\begin{figure*}[htbp]
\centering
\subfigure[SGLD]{
\centering
\includegraphics[width=.22\columnwidth]{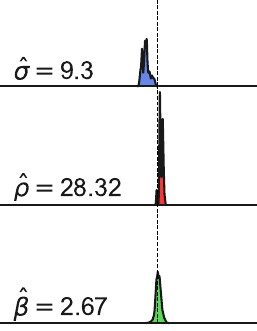}\label{fig:lorenz_posterior_sgld}
}\hspace{-0mm}
\subfigure[R-SGLD]{
\centering
\includegraphics[width=.22\columnwidth]{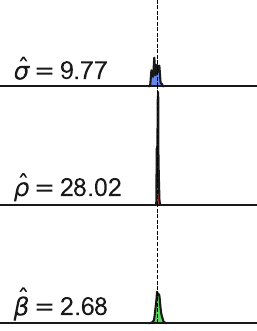}\label{fig:lorenz_posterior_rsgld}
}\hspace{-0mm}
\subfigure[CycSGLD]{
\centering
\includegraphics[width=.22\columnwidth]{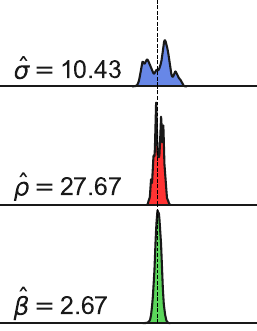}\label{fig:lorenz_posterior_cycsgld}
}
\subfigure[R-cycSGLD]{
\centering
\includegraphics[width=.22\columnwidth]{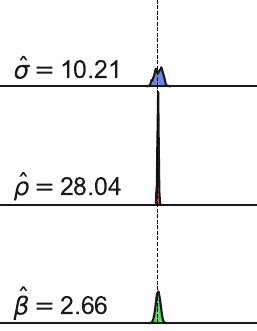}\label{fig:lorenz_posterior_rcycsgld}
}
\caption{Posterior distributions of the identified model parameters.}
\label{fig:lorenz_posterior_appendix}
\end{figure*}

\begin{figure*}[htbp]
\centering

\subfigure[SGLD]{
\centering
\includegraphics[width=.22\columnwidth]{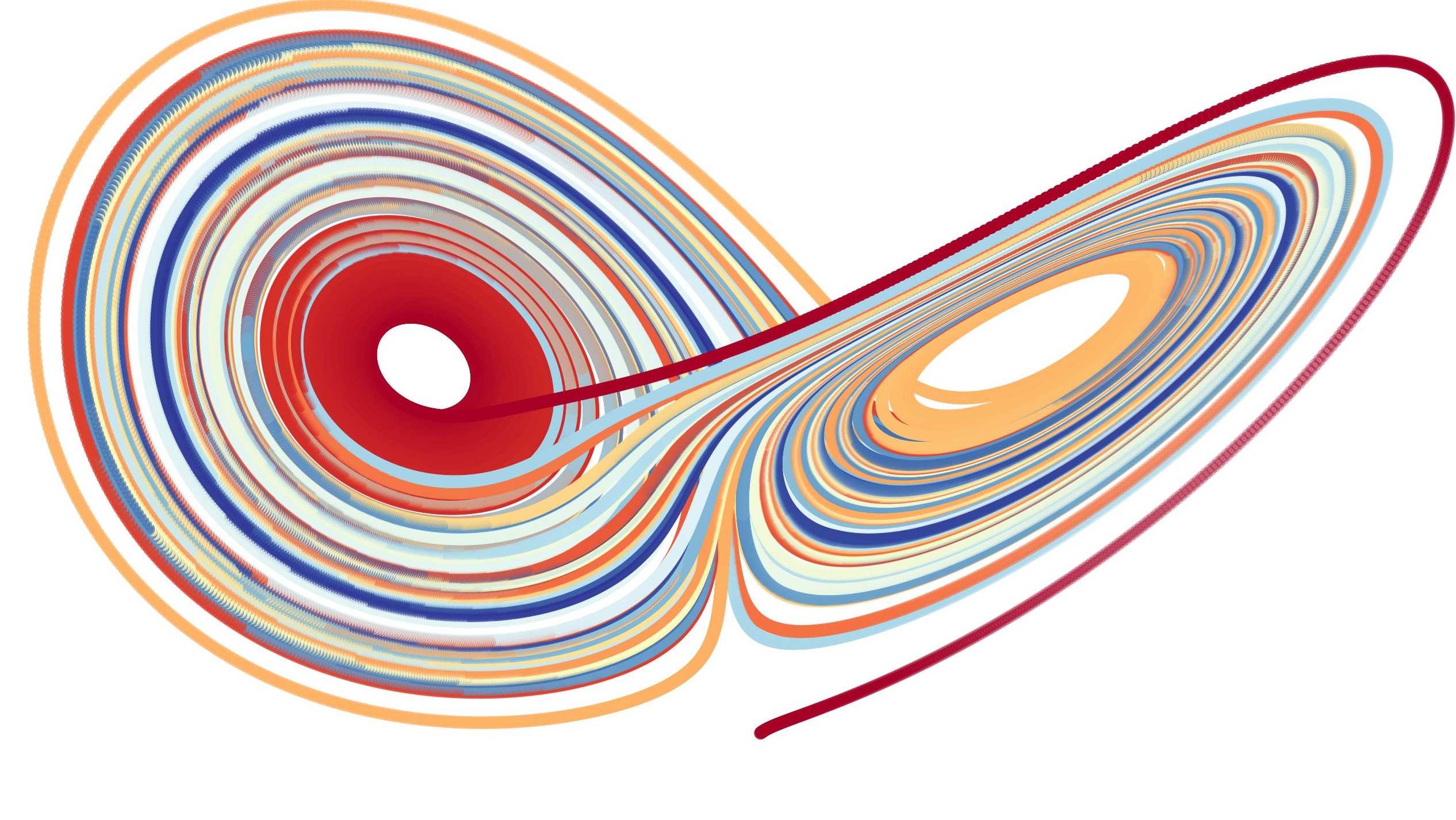}\label{fig:lorenz_sim_sgld}
}\hspace{-0mm}
\subfigure[R-SGLD]{
\centering
\includegraphics[width=.22\columnwidth]{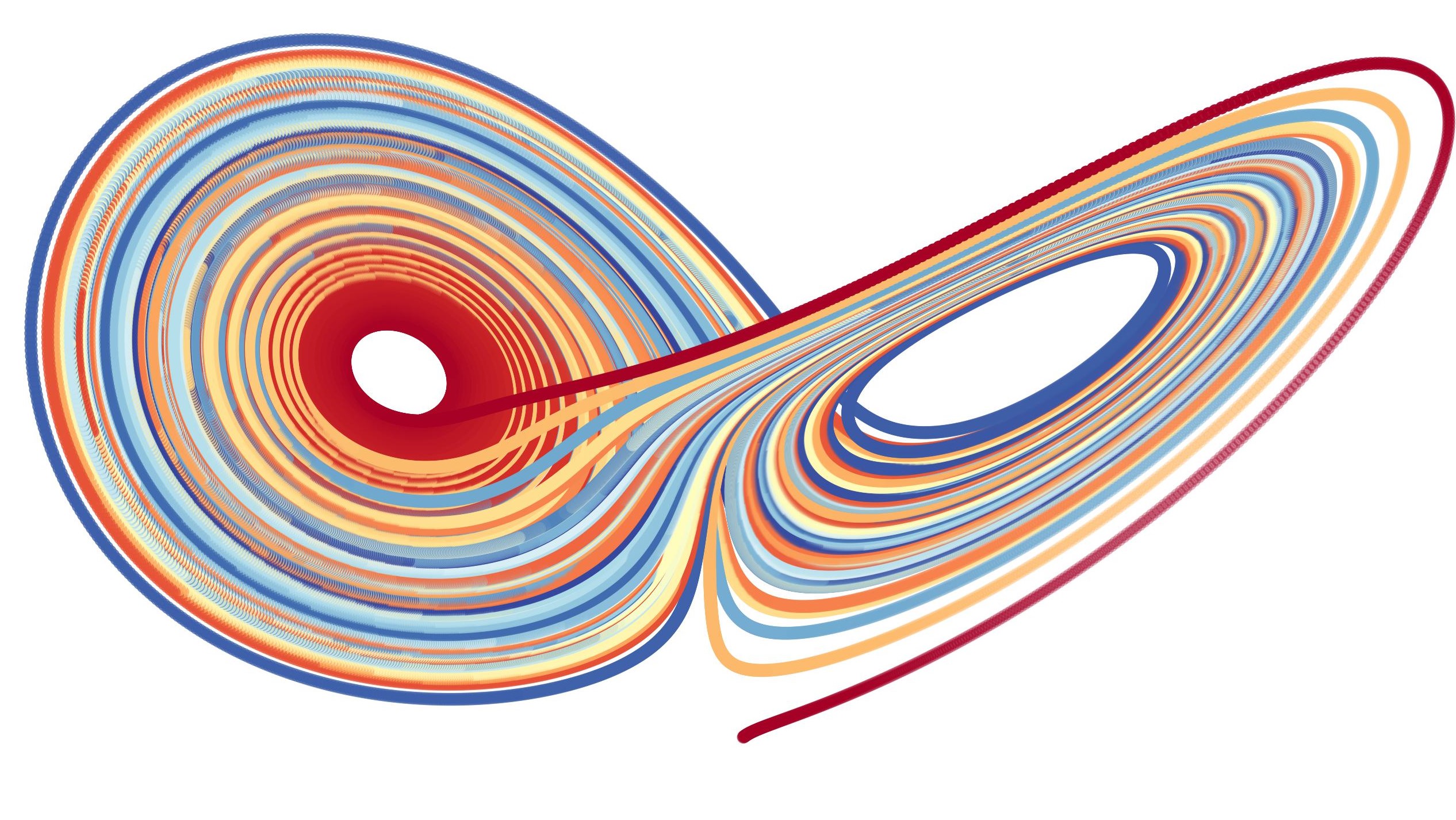}\label{fig:lorenz_sim_rsgld}
}\hspace{-0mm}
\subfigure[CycSGLD]{
\centering
\includegraphics[width=.22\columnwidth]{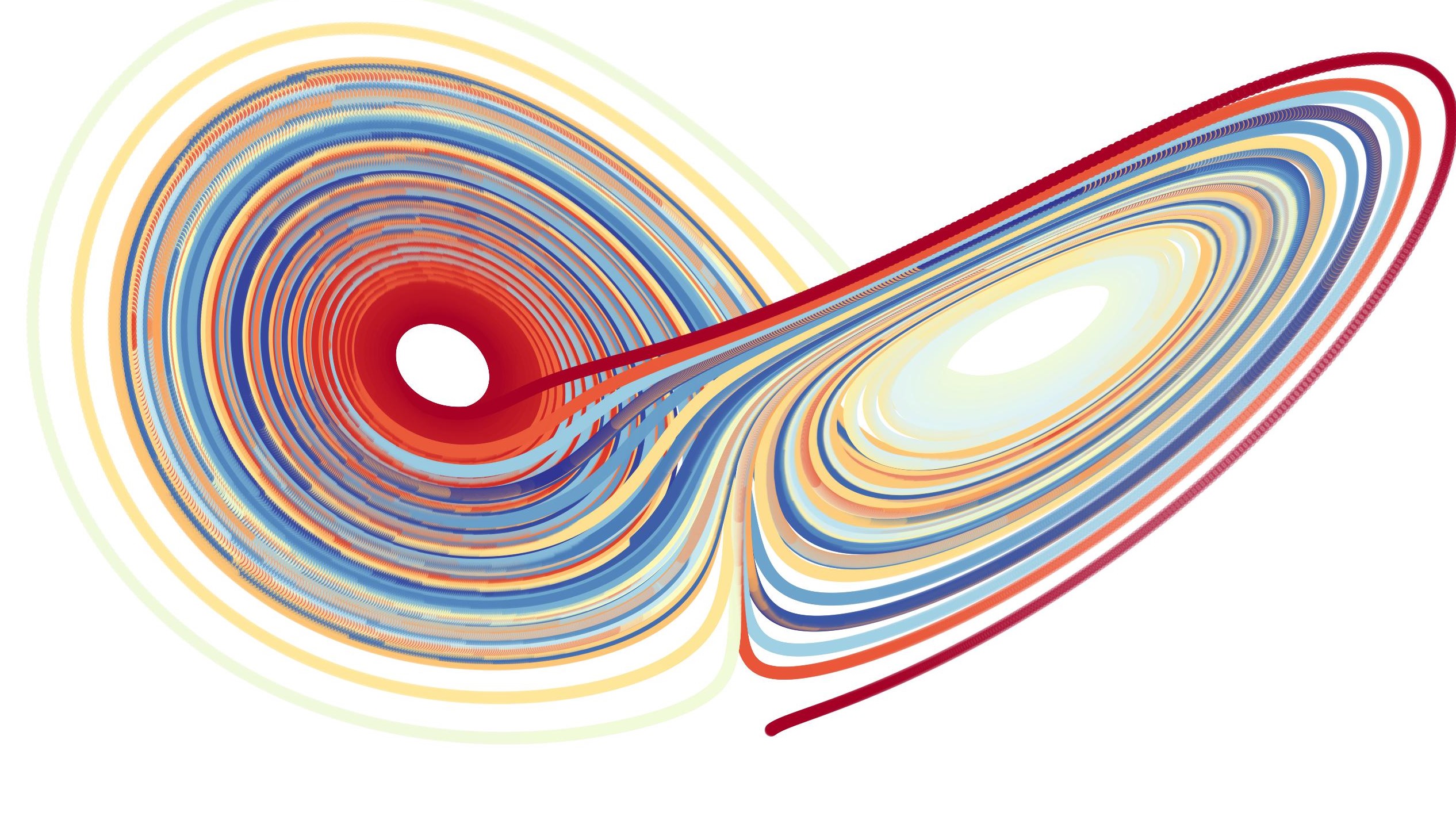}\label{fig:lorenz_sim_cycsgld}
}
\subfigure[R-cycSGLD]{
\centering
\includegraphics[width=.22\columnwidth]{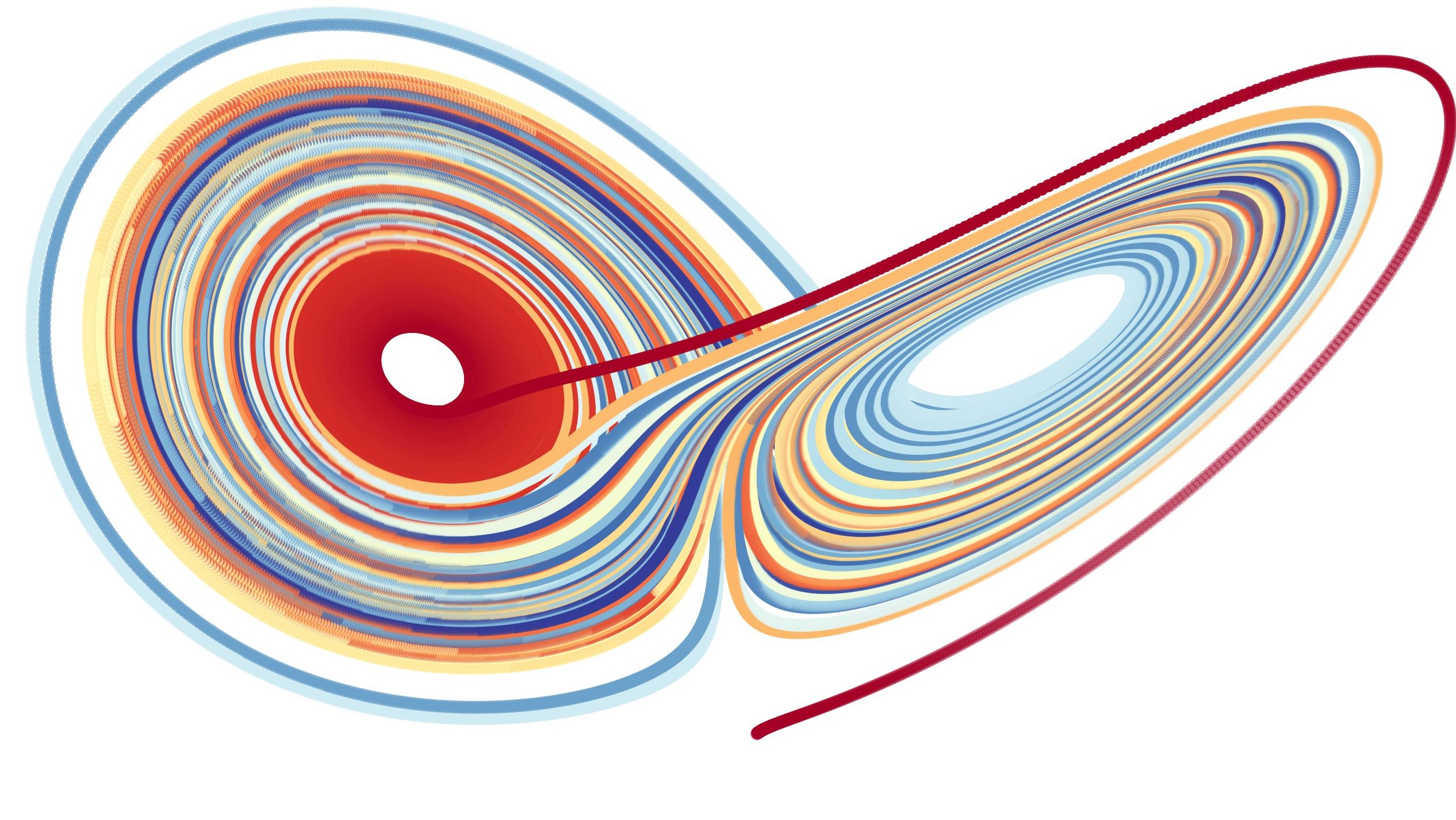}\label{fig:lorenz_sim_rcycsgld}
}
\caption{Simulation results of the identified Lorenz system.}
\label{fig:lorenz_sim_appendix}
\end{figure*}

As depicted in Figures~\ref{fig:lorenz_posterior} and~\ref{fig:lorenz_posterior_appendix}, sampling with physical constraints and reflection significantly improves the model performance, in terms of the posterior modes and the empirical posteriors. In a comparative analysis of three reflection-based sampling algorithms, R-SGLD falls behind in performance, while R-cycSGLD better concentrates the parameter posterior near the true parameters. Our proposed algorithm outperforms the others in accuracy. Figure~\ref{fig:lorenz_sim_appendix} presents the simulation results in the use of the learned parameter according to the baselines. Although all simulations capture the chaotic nature of the Lorenz system, r2SGLD (Figure~\ref{fig:lorenz_sim_r2SGLD}) most closely replicates the true system dynamics (Figure~\ref{fig:lorenz_sim_truth}).

\subsection{Lotka-Volterra Model}\label{sec:lotka_volterra_sup}

The Lotka-Volterra model (the predator-prey model) is a pair of first-order, non-linear, differential equations frequently used to describe the dynamics of biological systems in which two species interact, one as a predator and the other as prey. The equations are as follows:

\begin{equation}
\left\{
\begin{aligned}
    \dot x & = \alpha x - \beta xy,\\
    \dot y & = -\delta y + \gamma xy.
\end{aligned}
\right.
\end{equation}

Here, $x$ and $y$ represent the number of prey and predators, respectively, and $\dot x$ and $\dot y$ represent the corresponding time derivatives. The model is characterized by four key parameters: \textbf{Prey Growth Rate} $\alpha$ represents the rate of growth of the prey population in the absence of predators. It is a positive value, indicating that the prey population grows exponentially when unchallenged. \textbf{Predation Rate Coefficient} $\beta$ governs the rate at which predators destroy prey\footnote{Following traditional conventions in the Lotka-Volterra model, we use $\beta$ to differentiate it from the parameters $\bbeta$ as defined in \eqref{eq:sde_2couple}.}. It modulates the interaction between the prey and predators, which signifies the efficiency of predators in consuming prey. \textbf{Predator Mortality Rate} $\delta$ represents the rate of decline of the predator population in the absence of prey. It signifies the mortality or decay rate of predators when there is no food (prey) available. \textbf{Predator Reproduction Rate} $\gamma$ is tied to the rate at which the predator population increases due to the consumption of prey. It indicates that the growth of the predator population is dependent on the availability of prey.

The Lotka-Volterra model is a simplistic representation and makes several assumptions, such as unlimited food supply for prey and constant rates of predation, which may not always hold true in real ecosystems. However, it provides a foundational framework for understanding the dynamic interactions between predator and prey populations.

In this study, we employ the LSODA solver~\cite{hindmarsh1983odepack, linda_automatic} to generate data $x(t)$ and $y(t)$, where $t$ ranges from 0 to 100 in increments of 0.01. The framework of identifying the dynamical system aligns with the one established in Section~\ref{section:exp_lorenz} on Lorenz system identification, including the sample generation and the simulation of empirical posterior distributions. For the ease of implementation, the corresponding matrices for gradient computations are presented as follows:

\begin{equation}    
\dot{\mathbf{ \mathbf{X}}} = \left[ {\begin{array}{*{20}{c}}
{\dot x({t_1})}&{\dot y({t_1})}\\
{\dot x({t_2})}&{\dot y({t_2})}\\
 \vdots & \vdots \\
{\dot x({t_m})}&{\dot y({t_m})}\\
 \vdots & \vdots 
\end{array}} \right], \ \ 
\hat{\mathbf{\bbeta}} = \left[ {\begin{array}{*{20}{c}}
\alpha &0\\
0&{ - \delta }\\
{ - {\beta} }&\gamma 
\end{array}} \right],
\ \ 
\mathbf{\Theta} (\mathbf{X}) = \left[ {\begin{array}{*{20}{c}}
{x({t_1})}&{y({t_1})}&{x({t_1})y({t_1})}\\
{x({t_2})}&{y({t_2})}&{x({t_2})y({t_2})}\\
 \vdots & \vdots & \vdots \\
{x({t_m})}&{y({t_m})}&{x({t_m})y({t_m})}\\
 \vdots & \vdots & \vdots 
\end{array}} \right],
\end{equation}
where we establish parameter values as follows: $\alpha = 1.0$, $\beta = 0.10$, $\delta = 1.50$, and $\gamma = 0.075$. The chosen batch size is 4,096, and the sampling process involves 60,000 iterations, with the initial 20,000 serving as burn-in samples. For SGLD and R-SGLD, the settings are the same as the ones in identifying the Lorenz system. In the case of cycSGLD and R-cycSGLD, the initial learning rate is 1e-5, implemented with a ten-cycle cosine learning rate schedule. For reSGLD and r2SGLD, learning rates are 1e-5 and 2e-6 respectively for the two chains, following a similar decay pattern post 10,000 iterations at 0.9999.

\begin{figure*}[htbp]
\centering

\subfigure[SGLD]{
\centering
\includegraphics[width=.15\columnwidth]{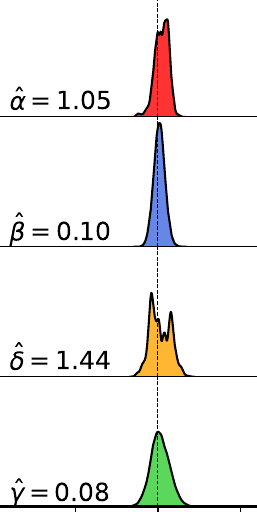}\label{fig:lotka_sgld}
}\hspace{-0mm}
\subfigure[R-SGLD]{
\centering
\includegraphics[width=.15\columnwidth]{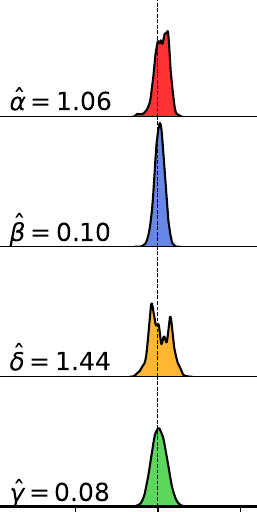}\label{fig:lotka_sgld_reflect}
}\hspace{-0mm}
\subfigure[CycSGLD]{
\centering
\includegraphics[width=.15\columnwidth]{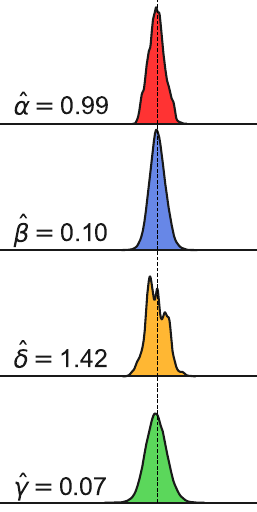}\label{fig:lotka_cycsgld}
}
\subfigure[R-cycSGLD]{
\centering
\includegraphics[width=.15\columnwidth]{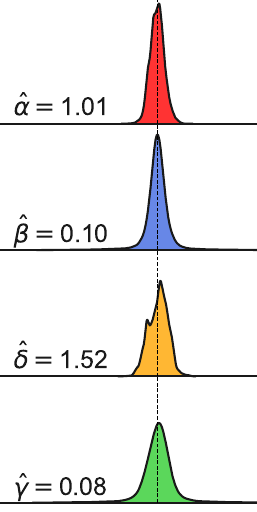}\label{fig:lotka_cycsgld_reflect}
}
\subfigure[reSGLD]{
\centering
\includegraphics[width=.15\columnwidth]{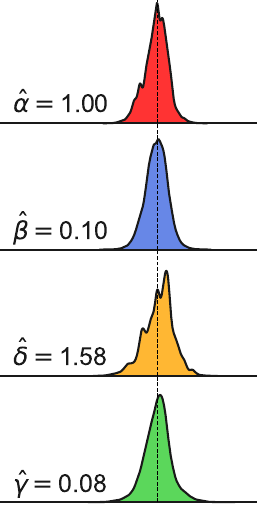}\label{fig:lotka_resgld}
}
\subfigure[r2SGLD]{
\centering
\includegraphics[width=.15\columnwidth]{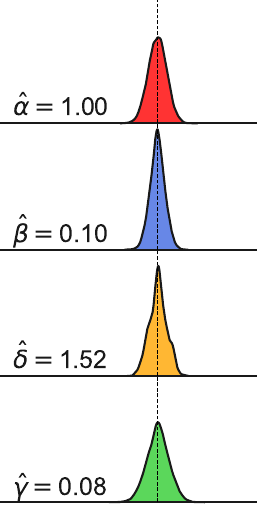}\label{fig:lotka_resgld_reflect}
}
\caption{Posterior distribution of model parameters in the Lotka-Volterra system.}
\label{fig:lv_posterior_appendix}
\end{figure*}

In Figure~\ref{fig:lv_posterior_appendix}, we observe the empirical posteriors of model parameters and their posterior modes. These results further confirm the beneficial impact of the reflection operation on parameter estimation. A comparative analysis of different algorithms for constrained sampling tasks reveals that R-SGLD (the blue lines) is observed to be the least effective, whereas R-cycSLGD (the green lines) and r2SGLD (the red lines) demonstrate comparable performance. In Figure~\ref{fig:lv_sim_appendix}, the parameters derived from these algorithms are employed to simulate the dynamics of the Lotka-Volterra system over a period from the 100th to the 200th day, as the first 100 days produced overlapping results. The simulations reveal that both R-cycSLGD and r2SGLD provide comparable outcomes, yet r2SGLD shows a marginally closer approximation to the true dynamics (the black lines).

\begin{figure*}[htbp]
\centering

\subfigure[Prey simulations.]{
\centering
\includegraphics[width=.9\columnwidth]{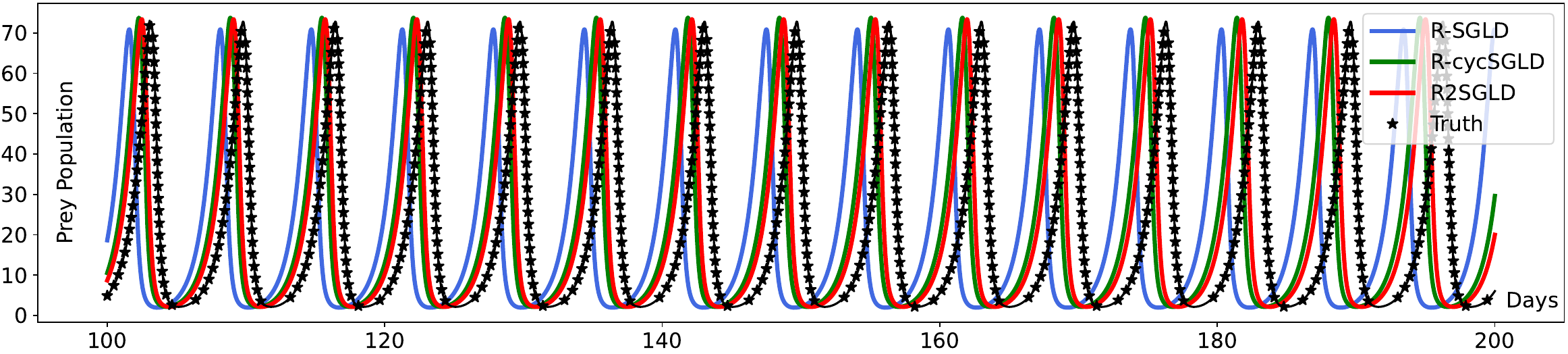}\label{fig:lv_sim1}
}
\subfigure[Predator simulations.]{
\centering
\includegraphics[width=.9\columnwidth]{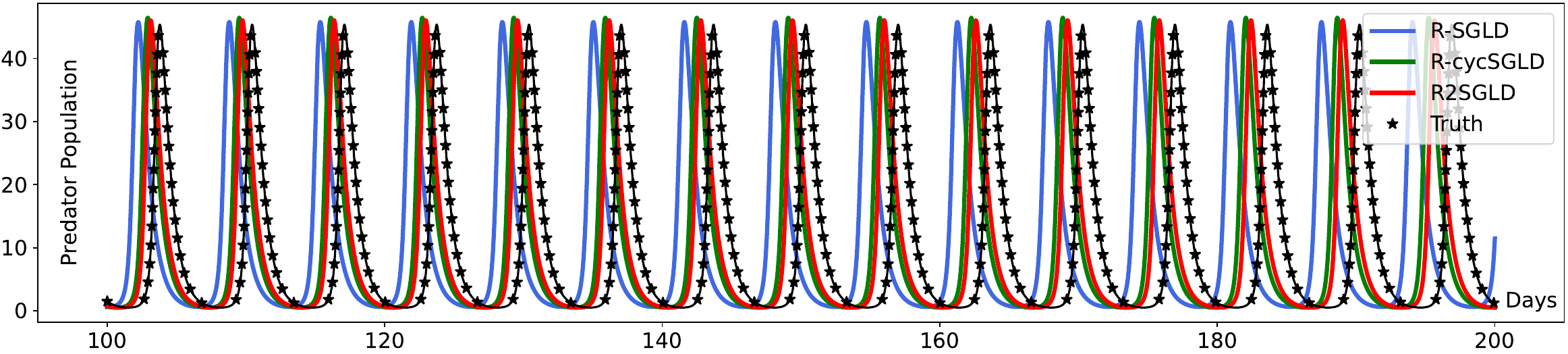}\label{fig:lv_sim2}
}
\caption{Simulation of the identified Lotka-Volterra system from 100 to 200 days.}
\label{fig:lv_sim_appendix}
\end{figure*}

\clearpage
\subsection{Constrained Sampling from Multi-mode Distributions}\label{sec:multimode_sup}

This section focuses on constrained sampling from a multi-mode distribution. 
\begin{figure*}[htbp]
\vskip 0.2in \centering
\subfigure[Truth]{
\centering
\includegraphics[width=.22\columnwidth]{./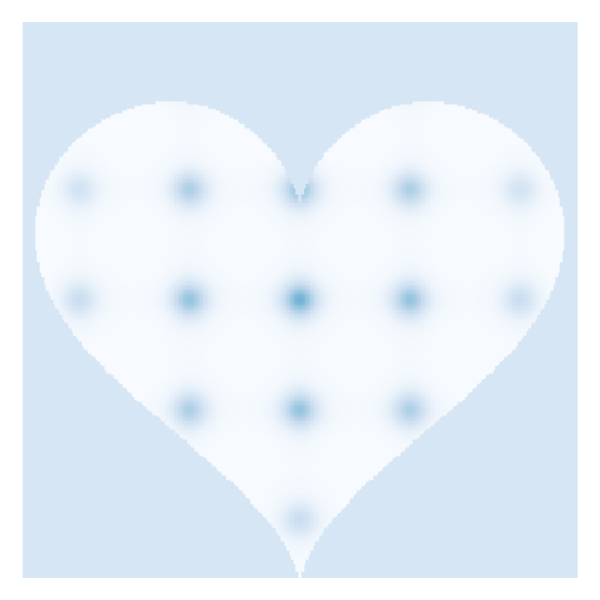}\label{fig:heart_truth}
}
\subfigure[r2SGLD]{
\centering
\includegraphics[width=.22\columnwidth]{./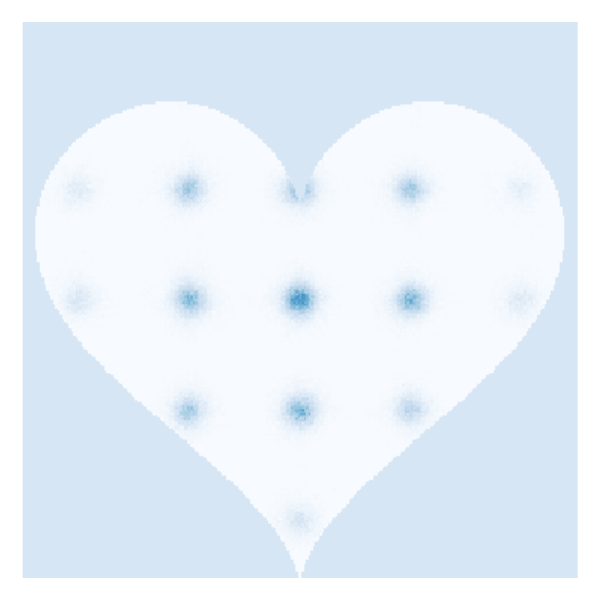}
}
\subfigure[R-cycSGLD]{
\centering
\includegraphics[width=.22\columnwidth]{./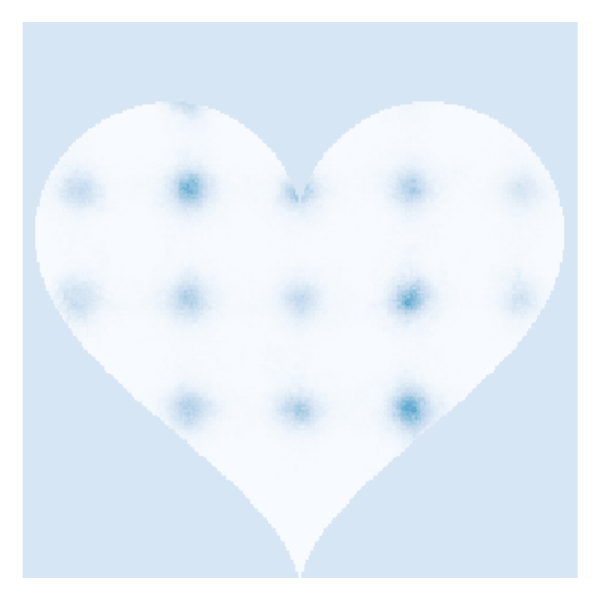}
}
\subfigure[R-SGLD]{
\centering
\includegraphics[width=.22\columnwidth]{./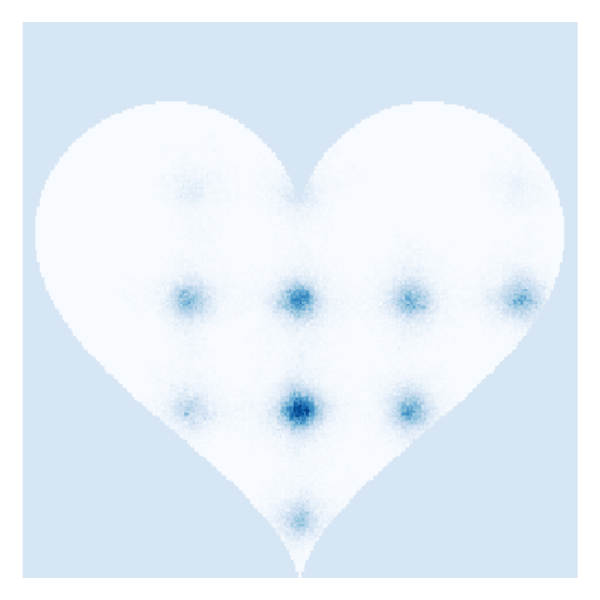}
}

\subfigure[Truth]{
\centering
\includegraphics[width=.22\columnwidth]{./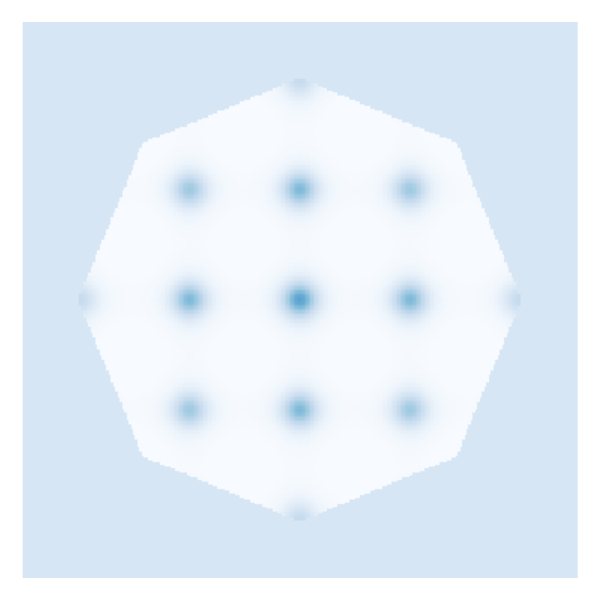}\label{fig:polygon_truth}
}
\subfigure[r2SGLD]{
\centering
\includegraphics[width=.22\columnwidth]{./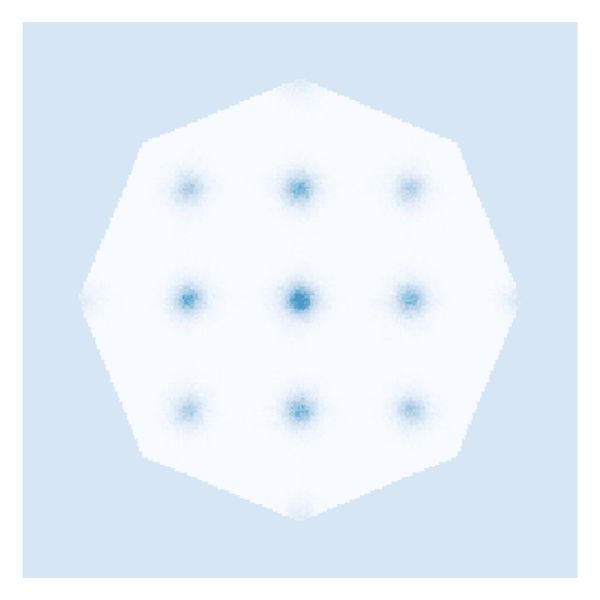}
}
\subfigure[R-cycSGLD]{
\centering
\includegraphics[width=.22\columnwidth]{./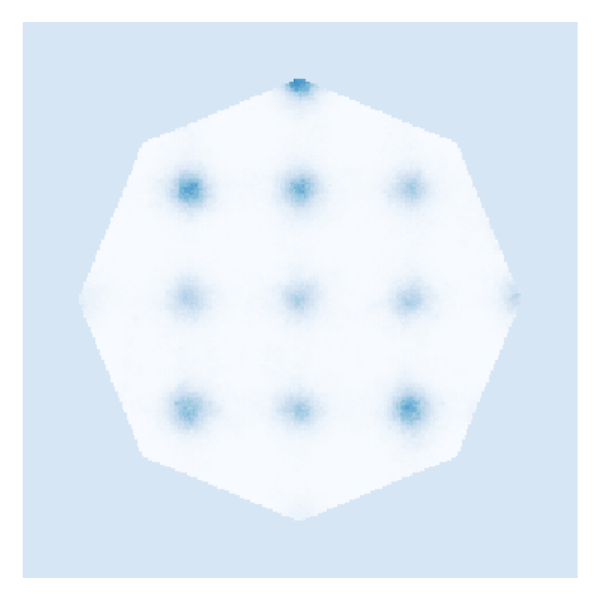}
}
\subfigure[R-SGLD]{
\centering
\includegraphics[width=.22\columnwidth]{./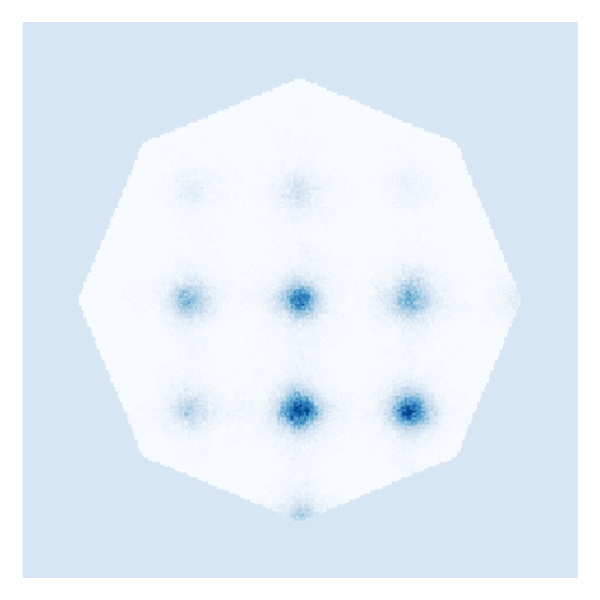}
}

\subfigure[Truth]{
\centering
\includegraphics[width=.22\columnwidth]{./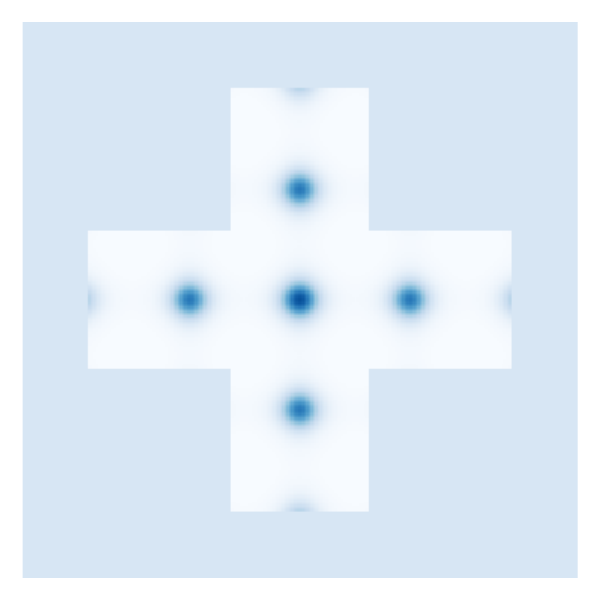}\label{fig:cross_truth}
}
\subfigure[r2SGLD]{
\centering
\includegraphics[width=.22\columnwidth]{./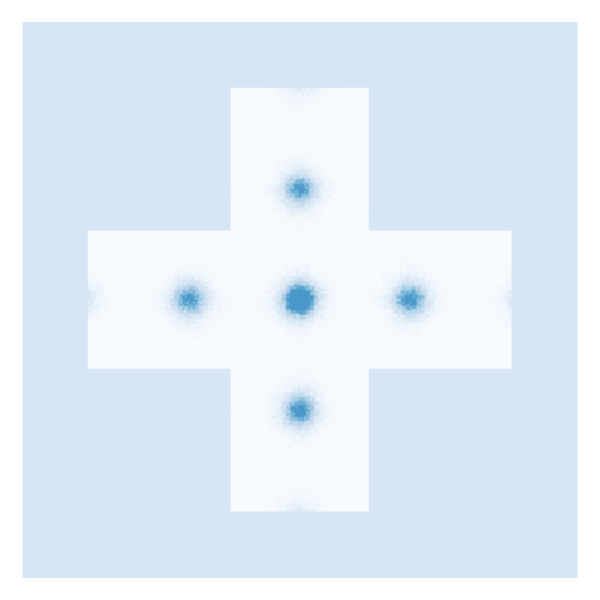}
}
\subfigure[R-cycSGLD]{
\centering
\includegraphics[width=.22\columnwidth]{./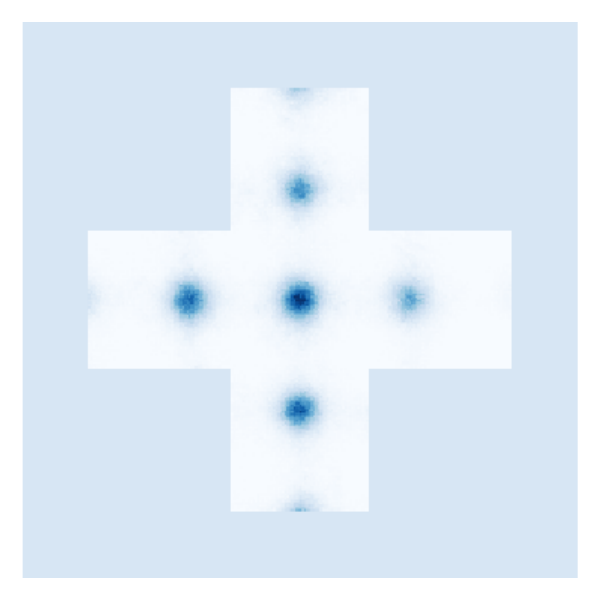}
}
\subfigure[R-SGLD]{
\centering
\includegraphics[width=.22\columnwidth]{./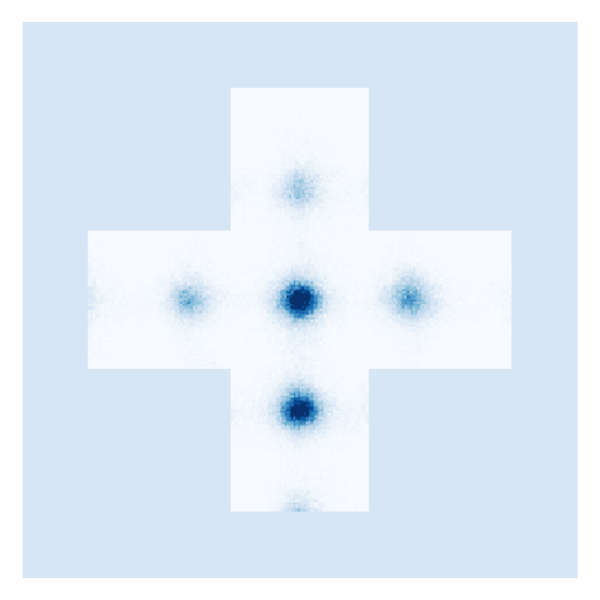}
}

\caption{Empirical behavior on multi-mode distributions with diverse boundaries.}
\label{fig:sim_multimode2}
\end{figure*}
We study a 2D mixture of 25 Gaussians within uniquely shaped domains: flower (Figure~\ref{fig:sim_multimode}), heart (Figure~\ref{fig:heart_truth}), 
polygon (Figure~\ref{fig:polygon_truth}), and cross (Figure~\ref{fig:cross_truth}). The heart domain in this simulation is defined by the following parametric equations:
\begin{equation*}
    \begin{aligned}
        x & = 16 \sin^3(2 \pi t),\\
        y & = 13 \cos(2 \pi t) - 5 \cos(4 \pi t) - 2 \cos(6 \pi t) - \cos(8 \pi t).
    \end{aligned}
\end{equation*}
Subsequently, the octagon domain is described by the equations:
\begin{equation*}
    \begin{aligned}
        r & = ct - \lfloor ct \rfloor,\\
        x & = (1-r) \cdot X_{\lfloor ct \rfloor} + r \cdot X_{\lfloor ct \rfloor + 1},\\
        y & = (1-r) \cdot Y_{\lfloor ct \rfloor} + r \cdot Y_{\lfloor ct \rfloor + 1},
    \end{aligned}
\end{equation*}
where $(X_i, Y_i)_{i=0}^C$ are the edge sequences with $(X_C, Y_C) = (X_0, Y_0)$ and $C=8$.

The probability densities are set to zero outside these shapes. For this study, we employ R-SGLD (initial rate: 5e-4), R-cycSGLD (initial rate: 1e-3 with a 5-cycle cosine schedule), and r2SGLD (two chains at 5e-4 and 1.5e-3, aiming for a 5-20\% swap rate). A total of 50,000 samples are generated, with the initial 10,000 as burn-in. The obtained empirical posteriors are shown in Figures~\ref{fig:sim_multimode} and \ref{fig:sim_multimode2}.

Upon examining the empirical posteriors, we note the limitations of R-SGLD, where the results often get trapped in local modes and fail to fully capture the target distributions. Conversely, R-cycSGLD demonstrates a more robust ability to capture most modes, albeit with a tendency to overstay in specific modes and underrepresent others, which more samples could improve. In comparison, our proposed algorithm shows significant improvements in accurately characterizing all modes, which provides the most precise empirical behavior in multi-mode distributions across different constrained domains.

\paragraph{Exploration of the Constrained Domain Diameter}

To empirically validate our theoretical analysis, we conduct additional experiments with r2SGLD, which focuses on how the diameters of the constrained domain affect mixing rates. The target distributions are selected as convex bounded domains (octagons) against a non-convex distribution of 25 Gaussian modes (refer to Figure \ref{fig:polygon_truth}). The domain diameters are set between 1.5 and 3.0 (the target distribution is $5.0\times 5.0$). We employ a dual-chain r2SGLD with learning rates of 2e-5 and 2e-4. For each diameter setting, we execute ten runs, where each generates 10,000 samples. The expected KL divergences and their 95\% confidence intervals are depicted in Figure \ref{fig:kl_diameter} for purposes of analysis. 

\begin{figure}[htbp]
\centering
\includegraphics[width=.45\columnwidth]{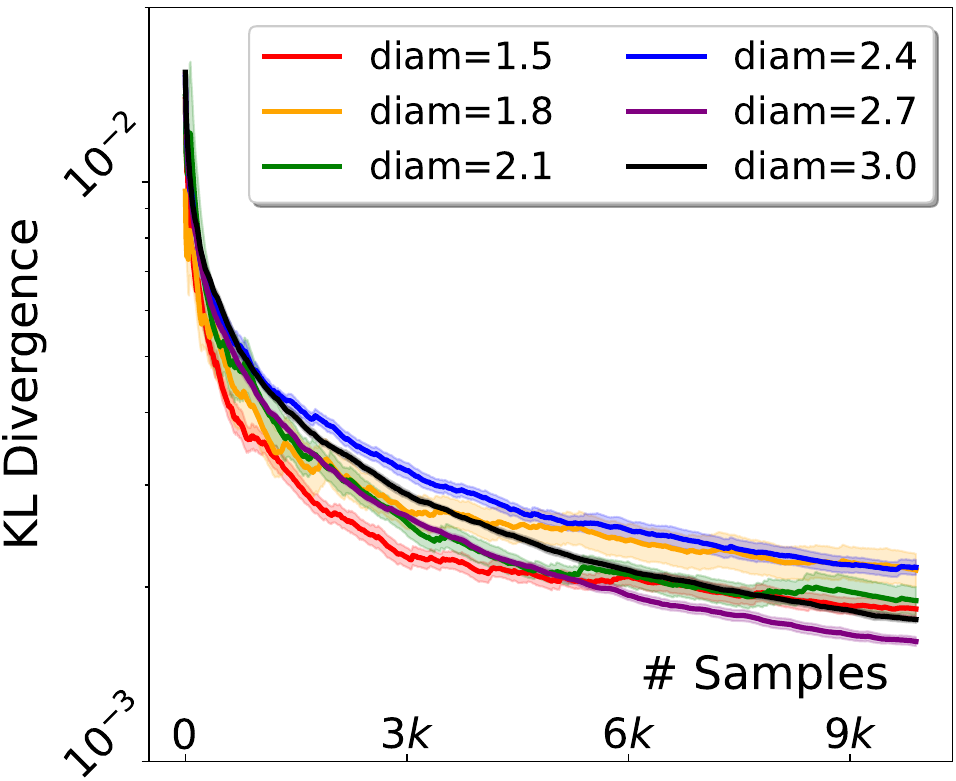}
\caption{KL divergence with respect to different domain diameter setts.}
\label{fig:kl_diameter}
\end{figure}

Illustrated by the corresponding figure, a discernible correlation emerges between the domain diameter and the convergence rate of the algorithm. Notably, the result demonstrates a trend where a reduction in domain diameter results in a more rapid decline in KL divergence. This further leads to an advanced mixing rate. This observation suggests that a smaller domain diameter facilitates the algorithm's expedited convergence to the target distribution, which also implies that constraining the parameter space informed by prior knowledge can markedly enhance the algorithm's efficiency in its convergence.

\subsection{Image Classification}\label{sec:classification_sup}

The following part is dedicated to an empirical evaluation of algorithm settings. Our objective is to the choices of hyper-parameters and demonstrate their effectiveness in the tests.

\paragraph{Learning Rate}\label{subsec:learn_rate}
This study acknowledges that while reflection operations in sampling methods can lead to more comprehensive exploration given unlimited computational resources, our experiments are conducted within the confines of 1,000 epochs. A critical component of these experiments is the determination of suitable initial learning rates to enhance model performance. 

In our study, we conduct a systematic exploration of learning rates for the single-chain R-SGHMC algorithm. We initiate this exploration by setting the learning rate at a standard value of 0.1 and incrementally increasing it to identify the point of optimal model performance. The effectiveness of various learning rates is assessed using two key metrics: BMA and NLL. These metrics, depicted in Figure~\ref{fig:exp_learn_increase}, guide our determination of suitable learning rates. Building upon these findings, we then adjust the learning rates of other baseline models to align with the identified optimal learning rate. Our results indicate that an initial learning rate of approximately 2.0 is effective for both the baseline and the proposed algorithms.

For consistency in our methodology, this initial learning rate (2.0) is applied to SGDM, R-SGDM, SGHMC, R-SGHMC, cycSGHMC, and R-cycSGHMC. For SGDM, R-SGDM, SGHMC, and R-SGHMC, this learning rate is consistently held during the initial $300$ epochs to facilitate exploration and subsequently decayed at a rate of 0.990 each epoch. In the case of cycSGHMC and R-cycSGHMC, we opt for a triple-cycle cosine learning rate schedule. For reSGHMC and r2SGHMC, which utilize four chains, the chain with the lowest learning rate begins at 1.0, scaling up to approximately 4.0 for the chain with the highest rate.

\begin{figure}[htbp]
\centering
\includegraphics[width=.6\columnwidth]{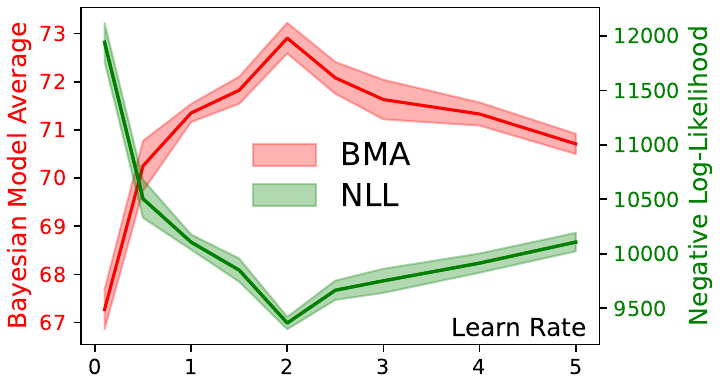}

\caption{Metric comparisons of CIFAR 100 with different initial learning rates.}
\label{fig:exp_learn_increase}
\end{figure}

\paragraph{Swap Efficiency}

In the realm of reSGLD and r2SGLD algorithms, the implementation of effective swapping schemes is crucial to enhance exploration efficiency. A fundamental challenge arises due to the minimal overlap between the distributions of the extremal temperatures, $\pi^{(1)}$ and $\pi^{(P)}$ ($P$ denotes the last chain or the number of chains), which hinders acceleration despite a large $\tau^{(P)}$. A viable solution involves introducing intermediate temperature particles $\left(\tau^{(2)}, \cdots, \tau^{(P-1)}\right)$ to facilitate ``tunnels" for more efficient chain swapping. Common strategies include all-pairs exchange (APE), adjacent (ADJ) pair swaps, and deterministic even-odd (DEO) schemes.

\begin{figure}[htbp]
\centering
\includegraphics[width=.95\columnwidth]{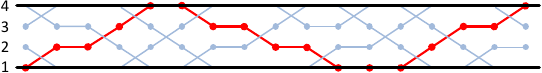}
\caption{Non-reversible chain swapping with the ODE scheme. The red line demonstrates a single chain's round-trip path.}
\label{fig:ode_scheme}
\end{figure}
The APE scheme attempts swaps between arbitrary chain pairs, offering thorough exploration \cite{Brenner07,Martin09}. However, its cubic swap time complexity ($O\left(P^3\right)$) and practical usability constraints limit its widespread adoption. The ADJ scheme, alternatively, swaps adjacent pairs in sequence, from $(1,2)$ to $(P-1, P)$ \cite{qi2018replica}. While this method is simpler and more intuitive, its sequential nature results in exchange information dependency and is less effective in multi-core or distributed environments, particularly with a larger number of chains. To address these limitations, the deterministic even-odd (DEO) scheme was introduced \cite{DEO,syed2022non,deng2023non}, which alternates swaps between even and odd pairs in successive iterations. The DEO scheme, characterized by its non-reversible and asymmetric nature, significantly increases swap rates and reduces round trip times, with its design detailed in Figure~\ref{fig:ode_scheme}.

In this research, we adopt the DEO scheme (as detailed in Algorithm~\ref{alg:r2sgld_modify}) for reSGLD and r2SGLD algorithms to optimize path efficiency in the deep learning task. The algorithm is structured to achieve a near-linear path and anticipate three to four round trips within a span of 1,000 epochs for increased swapping efficiency.

\begin{algorithm}[htbp]
   \caption{Reflected Replica Exchange Stochastic Gradient Langevin Dynamics with the DEO scheme.}
   \label{alg:r2sgld_modify}
{\textbf{Input} Window size $\mathcal W:=\left\lceil\frac{\log P+\log \log P}{-\log (1-\mathbb{S})}\right\rceil$.}\\
{\textbf{Input} Number of iterations $K$.}\\
{\textbf{Input} Number of chains $P$.}\\
{\textbf{Input} Target swap rate $\mathbb{S}$.}\\
{\textbf{Input} Step size $\gamma$.}
\begin{algorithmic}
\FOR{$k=1,2,\cdots, K$}
    \STATE{{Sampling with reflection} $\widetilde \bbeta_{k+1} = \mathcal{P}\bigg(\widetilde \bbeta_{k} - \eta\nabla {\widetilde U}(\widetilde \bbeta_k)+\sqrt{2\eta\tau_2} \bxi_k\bigg)$}
    \FOR{$p=1,2,\cdots, P-1$}
    \STATE{Update the swap indicator: $\mathcal{S}^{(p)}:=1_{\widetilde{U}\left(\boldsymbol{\beta}_{k+1}^{(p+1)}\right)+\mathcal{C}_k<\tilde{U}\left(\boldsymbol{\beta}_{k+1}^{(p)}\right)}$}
    \IF{$\mathrm{k} \bmod \mathcal{W}=0$}
        \STATE{Allow swaps $\mathcal{G}^{(p)}:=1$}
    \ENDIF
    \IF{$\mathcal{G}^{(p)}$ and $\mathcal{S}^{(p)}$}
        \STATE{Chain swap $\boldsymbol{\beta}_{k+1}^{(p)}$ and $\boldsymbol{\beta}_{k+1}^{(p+1)}$}
        \STATE{Refuse swaps $\mathcal{G}^{(p)}:=0$}
    \ENDIF
    \ENDFOR
    \STATE{Update correction $\mathcal{C}_{k+1}=\mathcal{C}_k+\gamma\left(\frac{1}{P-1} \sum_{p=1}^{P-1} \mathbf 1_{\widetilde{U}\left(\boldsymbol{\beta}_{k+1}^{(p+1)}\right)+\mathcal{C}_k-\widetilde{U}\left(\boldsymbol{\beta}_{k+1}^{(p)}\right)<0}-\mathbb{S}\right)$.}
\ENDFOR
    \vskip -1 in
\end{algorithmic}
{\textbf{Output} Parameters $\{\bbeta_k^{(1)} \}_{k=1}^K$.}
\end{algorithm}

The primary difference between Algorithm~\ref{alg:r2sgld_modify} and Algorithm~\ref{alg} is the consideration of a multiple-chain variant, which innovates chain swapping management. This is achieved through a carefully designed swap indicator $\mathcal{S}^{(p)}$ that dynamically adjusts based on the relative positions of the chains, thereby ensuring more effective and frequent exchanges. The algorithm incorporates a window-based approach, where swaps are allowed only at specific intervals, determined by the window size $\mathcal W$. This strategic regulation of swaps is further enhanced by combining with a correction term $\mathcal{C}_k$, which is updated adaptively: 
$$
\mathcal{C}_{k+1}=\mathcal{C}_k+\gamma\left(\frac{1}{P-1} \sum_{p=1}^{P-1} \mathbf 1_{\widetilde{U}\left(\boldsymbol{\beta}_{k+1}^{(p+1)}\right)+\mathcal{C}_k-\widetilde{U}\left(\boldsymbol{\beta}_{k+1}^{(p)}\right)<0}-\mathbb{S}\right).
$$
Here, $\gamma$ denotes the step size, $P=4$ represents the number of chains involved in the process, and $\mathbb{S}$ is the target swap rate among chains. It should be noted that as the iteration index $k$ trends toward infinity, it is anticipated that both the threshold and adaptive learning rates will converge toward their respective fixed points. This implementation of a uniform $\mathcal{C}$ ensures that swaps occur with a frequency that optimizes the overall convergence towards the desired target distribution. This balance between exploration and exploitation in the r2SGLD algorithm is crucial for tasks involving complex datasets like CIFAR 100, where traditional methods might struggle with efficient parameter space navigation.

\clearpage
\section{Continuous Convergence Analysis}\label{sec:appendix_continuous}

\begin{proof}[Proof of Lemma~\ref{lemma:reversibility}]
First we show the reversibility of $\pi$ under the semigroup $\{e^{t\mathcal{L}}\}_{t\ge 0}$: 
\begin{equation}
  \int_{\Omega\times\Omega}g\mathcal{L} f\,\dd \pi(x_1, x_2)=\int_{\Omega\times\Omega}f\mathcal{L} g\,\dd \pi(x_1, x_2), \qquad \text{for all $f, g\in \mathcal{D}(\mathcal{L})$.}
  \label{eqn:symmetry}
\end{equation}
Recall in \eqref{eqn:generator} that $\mathcal{L}=\mathcal{L}^{(1)}+\mathcal{L}^{(2)}+\mathcal{L}^{(s)}$. We have
  \begin{equation}
  	 \begin{aligned}
  		&\int_{\Omega}\int_\Omega g\cL^{(1)}f \,\dd \pi(x_1, x_2)\\
  		&=\frac{1}{Z}\int_\Omega \int_\Omega g(x_1, x_2) \left[ -\langle \nabla_{x_1} f(x_1, x_2), \nabla_{x_1}U(x_1) \rangle+\tau_1 \Delta_{x_1}f(x_1, x_2) \right] e^{-\frac{U(x_1)}{\tau_1}} \dd x_1 e^{-\frac{U(x_2)}{\tau_2}}\dd x_2\\
  		&=\frac{1}{Z}\int_{\Omega}\int_{\Omega}g(x_1, x_2)\nabla\cdot \left( \tau_1 \nabla_{x_1} f(x_1, x_2) e^{-\frac{U(x_1)}{\tau_1}} \right)\dd x_1 e^{-\frac{U(x_2)}{\tau_2}}\,\dd x_2\\
  		&=-\int_{\Omega}\int_{\Omega}\tau_1\nabla_{x_1}g(x_1, x_2)\cdot \nabla_{x_1}f(x_1, x_2) \dd \pi(x_1, x_2)\qquad \text{(Integration by parts with $f$ satisfying \eqref{eqn:bdy}$_1$.)}\\
  		&=\int_{\Omega}\int_{\Omega}f\cL^{(1)}g\,\dd \pi(x_1, x_2)
  	\end{aligned}
  	\label{eqn:L1exchange}
  \end{equation}
A similar calculation with $f$ satisfying \eqref{eqn:bdy}$_2$ yields that
\begin{equation}
	\begin{aligned}
		\int_\Omega\int_\Omega g\cL^{(2)}f \,\dd \pi(x_1, x_2)&=\int_\Omega\int_\Omega f\cL^{(2)} g\,\dd \pi(x_1, x_2)=-\int_\Omega\int_\Omega\tau_2 \nabla_{x_2}f(x_1, x_2)\cdot \nabla_{x_2}g(x_1, x_2)\,\dd \pi(x_1, x_2).
	\end{aligned}\label{eqn:L2exchange}
\end{equation}
By the same argument as in \citet{chen2018accelerating} (Lemma 3.2), we can also derive
\begin{equation}
	\int_\Omega\int_\Omega g\cL^{(s)}f \,\dd \pi(x_1, x_2)=\int_{\Omega}\int_{\Omega}f\cL^{(s)}g \,\dd \pi(x_1, x_2).\label{eqn:Lsexchange}
\end{equation}
Combining \eqref{eqn:L1exchange}, \eqref{eqn:L2exchange} and \eqref{eqn:Lsexchange}, we obtain \eqref{eqn:symmetry}. Now by choosing $g=1\in \mathcal{D}(\mathcal{L})$ in \eqref{eqn:symmetry}, since $\mathcal{L} 1=0$, we obtain $$\int_{\Omega\times\Omega}\mathcal{L}f\,\dd \pi=\int_{\Omega\times\Omega}f\mathcal{L}1\,\dd \pi=0.$$
Thus $\pi$ is invariant under $\{e^{t\mathcal{L}}\}_{t\ge 0}$.
\end{proof}
Next, we introduce a Lyapunov function subject to the homogeneous Neumman boundary condition, this is an extension of the Lyapunov function introduced in \cite{Cattiaux2010notes}.
\begin{lemma}
  Let $\cL^{(0)}=\cL^{(1)}+\cL^{(2)}$, then  
 there exists a Lyapunov function $V(x_1, x_2)\ge 1$ such that 
  satisfies
  \begin{equation}
    \left\{
    \begin{array}{ll}
      \cL^{(0)}V(x_1, x_2)\le -\lambda V(x_1, x_2)+b, &(x_1, x_2)\in\Omega\times \Omega, \\
      \nabla_{x_1} V(x_1, x_2)\cdot \nu(x_1)=0, & (x_1, x_2)\in \pa \Omega\times \Omega, \\
      \nabla_{x_2} V(x_1, x_2)\cdot \nu(x_2)=0,& (x_1, x_2)\in \Omega\times \pa \Omega.
    \end{array}
    \right.
    \label{eqn:Lyapunov}
  \end{equation}
  for some constants $\lambda>0$, $b\ge 0$. 
  \label{lemma:Lyapunov}
\end{lemma}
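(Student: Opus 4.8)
The plan is to construct $V$ one coordinate at a time. Since the swap term is absent from $\cL^{(0)}=\cL^{(1)}+\cL^{(2)}$ and each piece $\cL^{(i)}f=-\langle\nabla_{x_i}f,\nabla U(x_i)\rangle+\tau_i\Delta_{x_i}f$ acts only on its own variable, it suffices to exhibit a single function $h\in C^2(\bar\Omega)$ with $h\ge 1$, $\nabla h\cdot\nu=0$ on $\partial\Omega$, and a one-coordinate drift bound $\cL^{(i)}h\le -\lambda_i h+b_i$ on $\Omega$ for $i=1,2$, where $\cL^{(i)}h$ now denotes the one-dimensional operator $-\langle\nabla U,\nabla h\rangle+\tau_i\Delta h$. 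Setting $V(x_1,x_2):=h(x_1)+h(x_2)$ then gives $V\ge 2\ge 1$; the two Neumann identities in \eqref{eqn:Lyapunov} hold because $\nabla_{x_1}V(x_1,x_2)=\nabla h(x_1)$ and $\nabla_{x_2}V(x_1,x_2)=\nabla h(x_2)$; and, using $h\ge 0$, $\cL^{(0)}V=\cL^{(1)}h(x_1)+\cL^{(2)}h(x_2)\le -\lambda_1 h(x_1)-\lambda_2 h(x_2)+b_1+b_2\le -\lambda V+b$ with $\lambda:=\min\{\lambda_1,\lambda_2\}>0$ and $b:=b_1+b_2\ge 0$.

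The remaining task is to produce $h$. Because $\bar\Omega$ is compact and $U\in C^2(\bar\Omega)$ (Assumption~\ref{assu:ULip}), for any smooth bounded $h\ge 1$ both $\cL^{(i)}h$ and $h$ are bounded on $\bar\Omega$, so the drift inequality $\cL^{(i)}h\le -\lambda_i h+b_i$ holds for an arbitrary fixed $\lambda_i>0$ as soon as $b_i\ge\sup_{\Omega}\big(\cL^{(i)}h+\lambda_i h\big)<\infty$; hence the only genuine requirement is that $h$ be $C^2$, at least $1$, and satisfy the Neumann condition, and already the degenerate choice $h\equiv 1$ proves the lemma. To obtain a non-degenerate Lyapunov function carrying quantitative information, as in the unbounded-domain treatment of \citet{Cattiaux2010notes}, I would instead take $h=e^{\gamma\widehat U}$, where, using the collar neighborhood of $\partial\Omega$ (available since $\partial\Omega$ is $C^2$ by Assumption~\ref{assu:domainLip}) and the distance function $r(x)=\dist(x,\partial\Omega)$, the modified potential $\widehat U$ is chosen to agree with $U$ outside a thin boundary layer and to be flattened in the normal direction inside it so that $\nabla\widehat U\cdot\nu=0$ on $\partial\Omega$; then $\cL^{(i)}h/h=\gamma\tau_i\big(\Delta\widehat U+\gamma\|\nabla\widehat U\|^2\big)-\gamma\langle\nabla\widehat U,\nabla U\rangle$, and one tunes $\gamma$ to make this quantity strictly negative outside a compact core of $\Omega$.

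The main obstacle is the interface between global $C^2$ regularity and the homogeneous Neumann condition: the distance function $r$ is only $C^2$ on a collar whose width is controlled by the reach of $\Omega$, so $\widehat U$ (equivalently $h$) must be glued from a boundary-layer piece built out of $r$ and an interior piece through a cutoff supported strictly inside that collar, and one must check that the curvature terms generated in the layer stay bounded — in particular that $\Delta r$, which equals the mean curvature of the level sets of $r$ and is therefore governed by the lower bound $\kappa$ on the second fundamental form from Assumption~\ref{assu:domainLip}, does not blow up. Once this boundary bookkeeping is in place, the drift inequality itself and the passage from $h$ to $V$ are immediate from compactness of $\Omega\times\Omega$, so no further estimates are required.
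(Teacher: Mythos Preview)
Your proposal is correct. In particular, your observation that on a compact domain the drift inequality $\cL^{(0)}V\le -\lambda V+b$ is automatic for any bounded $C^2$ function $V$ (just take $b=\sup_{\overline{\Omega\times\Omega}}(\cL^{(0)}V+\lambda V)\vee 0$) is exactly the mechanism the paper uses as well, and your remark that the constant choice $h\equiv 1$, hence $V\equiv 2$, already proves the lemma as stated is valid and is the shortest possible argument.

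The paper's route differs in the structure of $V$ and in how the Neumann condition is achieved. Instead of an additive form $V(x_1,x_2)=h(x_1)+h(x_2)$, the paper takes a multiplicative one, $V(x_1,x_2)=\exp\!\big[\rho(x_1)^2+\rho(x_2)^2\big]$, where $\rho$ is a $C^\infty$ \emph{regularized distance} to $\partial\Omega$ (equivalent to $\dist(\cdot,\partial\Omega)$, with $|D^k\rho|\le C_k\rho^{1-k}$). The boundary condition then follows because $\nabla_{x_i}V$ carries a factor $\rho(x_i)$ that vanishes on $\partial\Omega$; there is no need to flatten $U$ in a collar or to glue through a cutoff as in your non-degenerate construction. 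The paper then, like you, simply picks $\lambda>0$ arbitrarily and absorbs everything into $b$. What the paper's choice buys is a single explicit $V$ that is non-trivial and depends on the geometry of $\Omega$ through $\rho$, which is presumably meant to feed into the subsequent Poincar\'e argument; your additive $h\equiv 1$ choice proves the lemma but is useless downstream, while your sketched $e^{\gamma\widehat U}$ construction would work but requires the collar bookkeeping you flag. In short: same compactness trick for the drift bound, different device for the Neumann condition.
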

\begin{proof} Let $d(x):={\rm dist} (x, \partial \Omega)$ be the distance function between $x$ and the boundary $\partial \Omega$. 
 There exists an infinitely differentiable \emph{regularized distance function} which is equivalent to $d$ \cite{Kufner1985weightedSobolev, lieberman1985regularized}. More precisely,  there exists a positive function $\rho(x)\in C^\infty(\Omega)$ and positive constants $C_k$, $k=0, 1, 2, \cdots, $  such that for all $x\in \Omega$, 
$$\frac{1}{C_0}d(x)\le \rho(x)\le C_0 d(x),$$ and for $k\ge 1$, $$|D^k \rho(x)|\le C_k \rho(x)^{1-k}.$$

Now we defined the Lyapunov function $V$ by $\rho$:
$$V(x_1, x_2):=\exp[\rho(x_1)^2+\rho(x_2)^2] \ge 1.$$
We can compute the Hessian $ \nabla_{x_1 x_1}^2 V(x_1, x_2)$:
$$\begin{aligned}
\nabla_{x_1 x_1}^2 V(x_1, x_2)&=2 \nabla_{x_1}\rho(x_1)\otimes \nabla_{x_1}\rho(x_1) V(x_1, x_2)\\
&+2\rho(x_1) \nabla_{x_1 x_1}^2 \rho(x_1) V(x_1, x_2) + 4\rho(x_1)^2 \nabla_{x_1}\rho(x_1)\otimes \nabla_{x_1}\rho(x_1) V(x_1, x_2),
\end{aligned}$$
where $\otimes$ is the tensor product. Using the estimates for $\rho$, we obtain the following bound:
$$|\nabla_{x_1 x_1}^2
V(x_1, x_2)| \le [2C_1^2+2C_2+4C_0^2{\rm diam}(\Omega)^2]\exp[2C_0^2{\rm
diam}(\Omega)^2].$$ 
Given any $\lambda>0$,  we can take $b=\sup\{\cL^{(0)}V(x_1, x_2)+\lambda V(x_1, x_2):(x_1, x_2)\in \overline{\Omega}\times\overline{\Omega}\}\vee 0<\infty$. Then \eqref{eqn:Lyapunov}$_1$ is satisfied. 
Furthermore, we notice that since 
$$|\nabla_{x_1}V(x_1,
x_2)|\le 2\rho(x_1)|\nabla_{x_1} \rho(x_1)| V(x_1, x_2)\le 2 C_0 d(x_1)\cdot
C_1\cdot \exp[2 C_0^2 {\rm diam}(\Omega)^2]\to 0 \text{ as } x_1\to \partial
\Omega,$$
thus the homogeneous Neumann boundary condition \eqref{eqn:Lyapunov}$_2$ with respect to $x_1$ is also true for $V$. Similarly, we can get \eqref{eqn:Lyapunov}$_3$ also holds. 
\end{proof}
\begin{proof}
    [Proof of Lemma~\ref{lemma:PoincareIne}]
Next, we show the Poincar'{e} inequality using the Lyapunov function $V$ in Lemma~\ref{lemma:Lyapunov}. Suppose $f\in C^1(\Omega\times\Omega)$, and $u$ is an arbitrary constant.  Multiplying both sides of \eqref{eqn:Lyapunov}$_1$ with $(f-u)^2/\lambda V$ and integrating over $\Omega\times \Omega$ against $\pi(x_1, x_2)$ gives
\begin{equation}
	\begin{aligned}
		\int_\Omega\int_\Omega (f-u)^2 \,\dd\pi(x_1, x_2)&\le \int_\Omega\int_\Omega -\frac{\cL^{(0)}V}{\lambda V} (f-u)^2 \,\dd \pi(x_1, x_2)+b\int_\Omega\int_\Omega \frac{(f-u)^2}{\lambda V}\,\dd \pi(x_1, x_2).
	\end{aligned}\label{eqn:term}
\end{equation}
Notice that 
\begin{equation*}
    \begin{aligned}
  &\int_\Omega\int_\Omega -\frac{\cL^{(1)}V}{\lambda V}(f-u)^2 \dd \pi(x_1, x_2)\\
  &=\int_\Omega\int_\Omega -\frac{(f-u)^2}{\lambda V}\nabla_{x_1}\cdot\left(\tau_1 \nabla_{x_1} V e^{-\frac{U(x_1)}{\tau_1}} \right) \dd x_1 e^{-\frac{U(x_2)}{\tau_2}}\dd x_2\\
  &=\int_\Omega\int_\Omega\tau_1 \nabla_{x_1}\left[ \frac{(f-u)^2}{\lambda V}  \right]\cdot \nabla_{x_1} V e^{-\frac{U(x_1)}{\tau_1}}\dd x_1 e^{-\frac{U(x_2)}{\tau_2}}\dd x_2 \quad \text{(Integration by parts with $V$ satisfying \eqref{eqn:Lyapunov}$_2$.)}\\
  &=\tau_1\int_\Omega\int_\Omega \nabla_{x_1}\left[ \frac{(f-u)^2}{\lambda V} \right]\cdot \nabla_{x_1} V \dd \pi(x_1, x_2)\\
  &=\tau_1\int_\Omega\int_\Omega \frac{2(f-u)}{\lambda V}\nabla_{x_1}(f-u)\cdot \nabla_{x_1} V -(f-u)^2 \frac{|\nabla_{x_1} V|^2}{\lambda V^2} \dd\pi(x_1, x_2)\\
  &=\frac{\tau_1}{\lambda}\int_\Omega\int_\Omega\left( |\nabla_{x_1}(f-u)|^2 -\left|\nabla_{x_1}(f-u)-\frac{f-u}{V}\nabla_{x_1}V\right|^2\right)\dd \pi(x_1, x_2)\\
  &\le \frac{\tau_1}{\lambda} \int_\Omega\int_\Omega |\nabla_{x_1}(f-u)|^2 \dd \pi(x_1, x_2).
    \end{aligned}
\end{equation*}

Similar computation works for $\cL^{(2)}$, we reach that 
\begin{equation}
  \int_\Omega\int_\Omega-\frac{\cL^{(0)}V}{\lambda V}(f-u)^2 \,\dd \pi(x_1, x_2)\le \frac{1}{\lambda}\int_\Omega\int_\Omega\left( \tau_1|\nabla_{x_1}(f-u)|^2+\tau_2|\nabla_{x_2}(f-u)|^2 \right)\,\dd \pi(x_1, x_2).
  \label{eqn:Lterm}
\end{equation}
On the other hand, we choose $u^*$ such that $\int_\Omega\int_\Omega (f-u^*)\,\dd\pi(x_1, x_2)=0$, then 
\begin{equation}
	\begin{aligned}
		&b\int_\Omega\int_\Omega \frac{(f-u^*)^2}{\lambda V}\,\dd \pi(x_1, x_2)=\frac{b}{\lambda}\int_\Omega\int_\Omega (f-u^*)^2 \,\dd \pi(x_1, x_2)\le \frac{b}{\lambda}c_{\rm P}\int_\Omega\int_\Omega |\nabla f|^2 \,\dd \pi(x_1, x_2).
	\end{aligned}\label{eqn:bterm}
\end{equation}
Combining \eqref{eqn:term}, \eqref{eqn:Lterm} and \eqref{eqn:bterm} we obtain the Poincar\'{e} inequality \eqref{eqn:PoineIneq}. 
\end{proof}
Now we have all the ingredients to show the accelerated exponential convergence of $\mu_t$.

\begin{proof}
    [Proof of Theorem~\ref{thm:chiconvergence}] 
 Using the generator $\mathcal{L}$, we  have $\dd\mu_t/\dd \pi=e^{t\mathcal{L}}(\dd \mu_0/\dd \pi)$. In other words, given that $\dd\mu_0/\dd\pi$ satisfies \eqref{eqn:bdy}, $f_t:=\dd\mu_t/\dd \pi$ satisfies $\pa_t f_t=\cL f_t$ and \eqref{eqn:bdy} for all $t>0$. Then using the integration by parts we can show
\begin{equation}
\begin{aligned}
  \frac{\dd}{\dd t}\chi^2(\mu_t\|\pi)&=\frac{\dd}{\dd t}\int_{\Omega\times\Omega} \left[e^{t\mathcal{L}}\left(\frac{\dd \mu_0}{\dd \pi}\right)\right]^2\dd\pi
  =2\int_{\Omega\times \Omega}e^{t\mathcal{L}}\left( \frac{\dd \mu_0}{\dd \pi} \right)\frac{\dd}{\dd t}\left[ e^{t\mathcal{L}}\left( \frac{\dd \mu_0}{\dd \pi} \right) \right]\dd \pi\\
  &=2\int_{\Omega\times \Omega}e^{t\mathcal{L}}\left( \frac{\dd \mu_0}{\dd \pi} \right) \mathcal{L}\left[ e^{t\mathcal{L}}\left( \frac{\dd \mu_0}{\dd \pi} \right) \right]\dd\pi\\ &=-2\mathcal{E}_S\left( e^{t\mathcal{L}}\left( \frac{\dd \mu_0}{\dd \pi} \right) \right)
  =-2\cE_S\left( \frac{\dd\mu_t}{\dd \pi} \right). 
  \end{aligned}
  \label{eqn:ddtchi2}
\end{equation}
By Lemma~\ref{lemma:PoincareIne}, we have following estimate
\begin{equation}
\chi^2(\mu_t\|\pi)\le C_{\rm P} \cE\left( \frac{\dd \mu_t}{\dd \pi} \right)\le C_{\rm P}(1+\eta_S)^{-1}\cE_S \left( \frac{\dd \mu_t}{\dd \pi} \right).\label{eqn:poincarmut}
\end{equation}
Combining \eqref{eqn:ddtchi2} and \eqref{eqn:poincarmut} yields
\begin{equation*}
     \frac{\dd}{\dd t}\chi^2(\mu_t\|\pi)\le -2C_{\rm P}^{-1}(1+\eta_S)\chi^2(\mu_t\|\pi).
    \end{equation*}
According to Gronwall's inequality, we conclude that 
    \begin{equation*}
        \chi^2(\mu_t\|\pi)\le \chi^2(\mu_0\|\pi)\exp\{-2(C_{\rm P}^{-1}(1+\eta_S))\}.
    \end{equation*}
\end{proof}

\begin{proof}
[Proof of Theorem \ref{thm:W2conv}] Given that $\dd\mu_0/\dd\pi\ge0$ and satisfying \eqref{eqn:bdy}, we have that $f_t$ satisfies $\pa_t f_t=\cL f_t$, $f_t\ge 0$ and \eqref{eqn:bdy} for all $t>0$. Then again we use the integration by parts to derive
\begin{equation}
    \begin{aligned}
      \frac{\dd}{\dd t}D(\mu_t\|\pi)&=\frac{\dd}{\dd t}\int_{\Omega\times\Omega}e^{t\mathcal{L}}\left( \frac{\dd \mu_0}{\dd \pi} \right)\ln\left[ e^{t\mathcal{L}}\left( \frac{\dd \mu_0}{\dd \pi} \right) \right] \dd \pi\\
      &=\int_{\Omega\times \Omega}\left[ 1+\ln \left( e^{t\mathcal{L}}\left( \frac{\dd \mu_0}{\dd \pi} \right) \right) \right]\mathcal{L}\left[ e^{t\mathcal{L}}\left( \frac{\dd \mu_0}{\dd \pi} \right) \right]\dd \pi\\
      &=\int_{\Omega\times\Omega}\ln\left(\frac{\dd \mu_t}{\dd \pi}\right)\mathcal{L}\left(\frac{\dd \mu_t}{\dd \pi}\right)\dd \pi\le -2\mathcal{E}_S\left(\sqrt{\frac{\dd\mu_t}{\dd \pi}}\right).
    \end{aligned}\label{eqn:W2cal1}
\end{equation}
Now we apply the Log-Sobolev inequality \eqref{eqn:LogSob} to get
\begin{equation}
  D(\mu_t\|\pi)\le C_{\rm LS} \mathcal{E} \left(\sqrt{\frac{\dd\mu_t}{\dd \pi}}\right)\le C_{\rm LS}(1+\delta_S)^{-1}\mathcal{E}_S\left(\sqrt{\frac{\dd \mu_t}{\dd \pi}}\right).\label{eqn:W2cal2}
\end{equation}
Combining \eqref{eqn:W2cal1} and \eqref{eqn:W2cal2} together, we arrive at
\begin{equation*}
    \frac{\dd}{\dd t}D(\mu_t\|\pi)\le -2 C_{\rm LS}^{-1}(1+\delta_S) D(\mu_t\|\pi). 
\end{equation*}
Then we apply the Gronwall inequality to get
\begin{equation*}
    D(\mu_t\|\pi)\le D(\mu_0\|\pi)\exp\{-2(C_{\rm LS}^{-1}(1+\delta_S))\}.
\end{equation*}
By the Otto--Villani Theorem \cite{Bakry20142} (Theorem 9.6.1), we have the following quadratic transportation cost inequality
\begin{equation*}
    \mathcal{W}_2(\mu_t, \pi)\le \sqrt{2C_{\rm LS}D(\mu_t\|\pi)}\le \sqrt{2C_{\rm LS}D(\mu_0\|\pi)}\exp\{-tC_{\rm LS}^{-1}(1+\delta_S)\}
\end{equation*}
\end{proof}

\clearpage
\section{Discretization Analysis}
\label{sec:appendix_discrete}
To better formulate the discretized error, we rewrite the \textbf{Reflected Replica exchange Langevin diffusion} as below. For any fixed learning rate $\eta>0$, we define
\begin{equation}  
\small
\label{reLD_main}
\left\{  
             \begin{array}{lr}  
             d\bbeta_t=-\nabla G(\bbeta_t)dt+\Si(\alpha_t)d\bW_t+\nu_tL(dt),  \\  
              & \\
              \mathbb{P}\left(\alpha(t)=j|\alpha(t-dt)=l,  \bbeta(\lfloor t/\eta \rfloor \eta)=\bbeta\right)=rS(\bbeta) \eta \mathbf{1}_{\{t=\lfloor t/\eta \rfloor \eta\}} +o(dt),~~\text{for}~~  l\neq j,
             \end{array}  
\right.  
\end{equation} 
where we denote $\{ \bbeta_t\}_{t\ge 0 }:=\left\{\begin{pmatrix}{}
\bbeta_t^{(1)}\\
\bbeta_t^{(2)}
\end{pmatrix}\right\}_{t\ge 0 }$, $\nabla G(\bbeta):=\begin{pmatrix}{}
\nabla U(\bbeta^{(1)})\\
\nabla U(\bbeta^{(2)})
\end{pmatrix}$, and
$
\nu_tL(dt)=\begin{pmatrix}\nu^{(1)}_t &0\\
0& \nu_t^{(2)}
\end{pmatrix}\begin{pmatrix}
L^{(1)}(dt)\\
L^{(2)}(dt)
\end{pmatrix}$, where the local time $\nu_tL(dt)$ ensures the process $\bbeta_t$ stays in the domain $\Omega\times\Omega$.  We denote $\mathbf 1_{t=\lfloor t/\eta\rfloor\eta}$ as the indicator function, i.e. for every $t=i\eta$ with $i\in \mathbb N^+$, given $\bbeta(i\eta)=\bbeta$, we have  $\mathbb{P}\left(\alpha(t)=j|\alpha(t-dt)=l\right)=rS(\bbeta)\eta$, where $S(\bbeta)$ is defined as $\min\{1, S(\bbeta^{(1)}, \bbeta^{(2)})\}$ and $S(\bbeta^{(1)}, \bbeta^{(2)})$ is defined in (\ref{S_exact}). 
In this case, the Markov Chain $\alpha(t)$ is a constant on the time interval $[\lfloor t/\eta\rfloor\eta,\lfloor t/\eta\rfloor\eta+\eta)$ with some state in the finite-state space $\{0,1\}$ and the generator matrix $Q$ follows
\begin{equation*}
    Q=\begin{pmatrix}
-rS(\bbeta) \eta \delta(t-\lfloor t/\eta \rfloor \eta)&rS(\bbeta) \eta \delta(t-\lfloor t/\eta \rfloor \eta)\\
rS(\bbeta) \eta \delta(t-\lfloor t/\eta \rfloor \eta)&-rS(\bbeta) \eta \delta(t-\lfloor t/\eta \rfloor \eta)
\end{pmatrix},
\end{equation*}
where $\delta(\cdot)$ is a Dirac delta function. The diffusion matrix $\Si(\alpha_t)$
is thus defined as $(\Si(0), \Si(1) ):=\left\{\begin{pmatrix}{}
\sqrt{2\tau^{(1)}}\mathbf I_d&0\\
0&\sqrt{2\tau^{(2)}}\mathbf I_d
\end{pmatrix}, \begin{pmatrix}{}
\sqrt{2\tau^{(2)}}\mathbf I_d&0\\
0&\sqrt{2\tau^{(1)}}\mathbf I_d
\end{pmatrix}\right\}$. The process $\bbeta_t$ is the unique solution to the Skorokhod problem for the jump process defined by:
\begin{equation}  
\small
\label{reLD}
\left\{  
             \begin{array}{lr}  
             d \by_t=-\nabla G(\bbeta_t)dt+\Si(\alpha_t)d\bW_t,  \\  
              & \\
              \mathbb{P}\left(\alpha(t)=j|\alpha(t-dt)=l,  \bbeta(\lfloor t/\eta \rfloor \eta)=\bbeta\right)=rS(\bbeta) \eta \mathbf{1}_{\{t=\lfloor t/\eta \rfloor \eta\}} +o(dt),~~\text{for}~~  l\neq j.
             \end{array}  
\right.  
\end{equation} 
The existence of the unique solution for the Skorokhod problem for $\by$ follows from the original proof in \cite{tanaka1979} (see also \cite{Menaldi1985}) since the process $\by_t$ has continuous paths. The switching happens after the reflection at each time $i\eta<T$.  We should think of the process as concatenated on the time interval $[i\eta,(i+1)\eta)$ up to time horizon $T$. Similarly, we consider the following \textbf{Reflected Replica exchange stochastic gradient Langevin diffusion}, for the same learning rate $\eta>0$ as above, we have
\begin{equation}  
\small
\label{reSGLD1}
\left\{  
             \begin{array}{lr}  
             d\widetilde \bbeta_t^{\eta}=-\nabla\widetilde G(\widetilde \bbeta^{\eta}_{\lfloor t/\eta \rfloor \eta})dt+\Si(\widetilde \alpha_{\lfloor t/\eta \rfloor \eta})d\bW_t+\widetilde \nu_t \widetilde L(dt),  \\ 
              & \\
             \mathbb{P}\left(\widetilde \alpha(t)=j|\widetilde\alpha(t-dt)=l, \widetilde\bbeta(\lfloor t/\eta \rfloor \eta)=\widetilde\bbeta \right)=r\widetilde S(\widetilde \bbeta)\eta\mathbf 1_{\{t=\lfloor t/\eta\rfloor\eta\}}+o(dt),~~\text{for}~~  l\neq j,
             \end{array}  
\right.  
\end{equation} 
where $\nabla \widetilde G(\bbeta):=\begin{pmatrix}{}
\nabla \widetilde U(\bbeta^{(1)})\\
\nabla \widetilde U(\bbeta^{(2)})
\end{pmatrix}$ and $\widetilde S(\widetilde\bbeta) = \min\{1, \widetilde S(\widetilde\bbeta^{(1)},\widetilde\bbeta^{(2)})\}$ and $\widetilde S(\widetilde\bbeta^{(1)},\widetilde\bbeta^{(2)})$ is defined in Section \ref{tilde S}. Here $\int_0^t \widetilde\nu_s \widetilde L(ds)=\int_0^t\begin{pmatrix}\widetilde \nu^{(1)}_s &0\\
0&\widetilde \nu_s^{(2)}
\end{pmatrix}\begin{pmatrix}\widetilde
L^{(1)}(ds)\\\widetilde
L^{(2)}(ds)
\end{pmatrix}$denotes the bounded variation reflection process ensuring $\widetilde\bbeta_t^{\eta}$ stays inside the domain $\Omega\times\Omega$.

The reflection process $\widetilde\beta^{\eta}_t$ is then the unique solution to the Skorokhod problem for the following process,

\begin{equation}  
\small
\label{reSGLD2}
\left\{  
             \begin{array}{lr}  
             d\widetilde \by_t^{\eta}=-\nabla\widetilde G(\widetilde \bbeta^{\eta}_{\lfloor t/\eta \rfloor \eta})dt+\Si(\widetilde \alpha_{\lfloor t/\eta \rfloor \eta})d\bW_t,  \\ 
              & \\
             \mathbb{P}\left(\widetilde \alpha(t)=j|\widetilde\alpha(t-dt)=l, \widetilde\bbeta(\lfloor t/\eta \rfloor \eta)=\widetilde\bbeta \right)=r\widetilde S(\widetilde \bbeta)\eta\mathbf 1_{\{t=\lfloor t/\eta\rfloor\eta\}}+o(dt),~~\text{for}~~  l\neq j,
             \end{array}  
\right.  
\end{equation} 
which is the continuous interpolation of the replica exchange stochastic gradient Langevin dynamics. 

We then show the following proof of Theorem~\ref{theorem:discretization}. 

\begin{proof}[Proof of Theorem~\ref{theorem:discretization}]
According to the definition of the \textbf{Reflected Replica exchange Langevin diffusion}  $\{\bbeta_t\}_{t\ge 0}$ and the \textbf{Reflected Replica exchange stochastic gradient Langevin diffusion} $\{\widetilde\bbeta_t^{\eta} \}_{t\ge 0}$, we can write the difference of the two process $\bbeta_t-\widetilde\bbeta_t^{\eta}$ in the following form, 
\begin{equation}
\bbeta_t-\widetilde\bbeta_t^{\eta}=\by_t-\widetilde \by_t+\int_0^t\nu_sL(ds)-\int_0^t\widetilde\nu_s\widetilde L(ds).
\end{equation}
Since both $\by_t$ and $\widetilde\by_t$ are piece wise continuous paths, and $(\bbeta_t, \int_0^t\nu_sL(ds))$ and $(\widetilde\bbeta_t,\int_0^t\nu_sL(ds))$ solve the Skorokhod problems for $\by_t$ and $\widetilde\by_t$, respectively. Following \cite{tanaka1979} (Lemma 2.2), we have
the following estimates,
\begin{equation}\label{est 1}
\begin{split}
\|\bbeta_t-\widetilde\bbeta_t^{\eta}\|^2&\le \|\by_t-\widetilde\by_t^{\eta}\|^2+\underbrace{2\int_0^t\Big(\by_t-\widetilde\by_t^{\eta}-\by_s+\widetilde\by_s^{\eta}\Big)^{\intercal}\Big(\nu_sL(ds)- \widetilde\nu_s\widetilde L(ds) \Big)}_{\mathcal K}.
\end{split}
\end{equation}
Denote $h_{\Omega\times\Omega}$ as the support function of $\Omega\times\Omega$, which is defined by $h_{\Omega\times\Omega}(y)=\sup\{\langle x, y\rangle;x\in\Omega\times\Omega \}$, for $y\in\Omega\times\Omega$. For simplification, we assume that the origin is inside the convex domain $\Omega$. Thus the Euclidean norm $\|\cdot\|$ and the norm $\|\cdot\|_{\Omega\times\Omega}$ are equivalent, up to an constant $C$, by taking $\Omega$ as $d$-dimensional ball. Following the estimates \cite{sebastien_bubeck} (Proposition 1) and the equivalent norm in our setting, for the inner normal vectors $\nu_s$ and $\widetilde \nu_s$, we have
\begin{equation}\label{est K}
\begin{split}
\sup_{0\le t\le T}\sqrt{ \mathcal K}&\le \sup_{0\le t\le T}\Big\{ 2C ( \|\by_t-\widetilde\by_t^{\eta}\|+\sup_{0\le s\le t}\|\by_s-\widetilde\by_s^{\eta} \| )\int_0^th_{\Omega\times\Omega}( -\nu_s) L(ds)\\
&+ 2C ( \|\by_t-\widetilde\by_t^{\eta}\|+\sup_{0\le s\le t}\|\by_s-\widetilde\by_s^{\eta} \|) \int_0^th_{\Omega\times\Omega}(-\widetilde \nu_s) \widetilde L(ds)\Big\}^{1/2}\\
&\le 2C\sqrt{ \sup_{0\le t\le T} \|\by_t-\widetilde\by_t^{\eta}\| \Big(\int_0^Th_{\Omega\times\Omega}(-\nu_s)  L(ds)+\int_0^Th_{\Omega\times\Omega}(-\widetilde \nu_s) \widetilde L(ds)\Big)},
\end{split}
\end{equation}
where the last inequality follows from the fact $L_t$ ($\widetilde L_t$ resp.) is increasing. 
Taking square root, $\sup_{0\le t\le T}$ and expectation $\mathbb E[\cdot]$ on both sides of \eqref{est 1}, together with Cauchy-Schwartz inequality, applying \eqref{est K}, we get
\begin{equation}\label{est sup norm}
\begin{split}
\mathbb E\Big[\sup_{0\le t\le T} \|\bbeta_t-\widetilde\bbeta_t^{\eta}\|\Big]&\le \mathbb E\Big[\sup_{0\le t\le T} \|\by_t-\widetilde\by_t^{\eta}\|\Big]\cdots\mathcal I\\
&+\underbrace{\sqrt{\mathbb E\Big[4C\sup_{0\le t\le T} \|\by_t-\widetilde\by_t^{\eta}\|\Big]} \sqrt{\mathbb E\Big[ \Big(\int_0^Th_{\Omega\times\Omega}(-\nu_s)  L(ds)+\int_0^Th_{\Omega\times\Omega}(-\widetilde \nu_s) \widetilde L(ds)\Big)\Big]}}_{\mathcal J}.
\end{split}
\end{equation}
\noindent \textbf{Estimates of $\mathcal I$:} The estimates of the first term $\mathcal I$ is similar to \cite{deng2020} (Lemma 1). We follow the similar steps here and highlight the major differences coming from the fact that $\bbeta_t$ and $\widetilde\bbeta_t^{\eta}$ are reflected processes in the current setting. 
Plugging in the dynamics for $\by_t$ and $\widetilde \by_t$, we first have 
\begin{equation}\label{by - tilde by}
\by_t-\widetilde\by_t^{\eta}=\int_0^t \nabla G(\bbeta_s)-\nabla\widetilde G(\widetilde \bbeta^{\eta}_{\lfloor s/\eta \rfloor\eta}) ds + \int_0^t (\Si_s-\widetilde \Si^{\eta}_{\lfloor s/\eta \rfloor\eta})d\bW_s.
\end{equation}
Taking $\sup_{0\le t\le T}$, and expectation of both sides of \eqref{by - tilde by},
then applying Burkholder-Davis-Gundy inequality and  Cauchy-Schwarz inequality, we have 
\begin{equation*}\label{sup norm estimate 1}
	\begin{split}
	\mathbb E\Big[\sup_{0\le t\le T} \|\by_t-\widetilde\by_t^{\eta}\|\Big]&\le \mathbb E\left[\int_0^T\|\nabla G(\bbeta_s)-\nabla\widetilde G(\widetilde \bbeta^{\eta}_{\lfloor s/\eta \rfloor\eta})\|ds+\sup_{0\le t\le T}\left\|\int_0^t (\Si_s-\widetilde \Si^{\eta}_{\lfloor s/\eta \rfloor\eta})d\bW_s\right\|\right]\\
	&\le \underbrace{\mathbb E\left[\int_0^T\|\nabla G(\bbeta_s)-\nabla\widetilde G(\widetilde \bbeta^{\eta}_{\lfloor s/\eta \rfloor\eta})\|ds \right]}_{\mathcal I_1}+\underbrace{C\sqrt{ \mathbb E\left[ \int_0^T\|\Si_s-\widetilde \Si^{\eta}_{\lfloor s/\eta \rfloor\eta}\|^2 ds\right]}}_{\mathcal I_2}.
	\end{split}
\end{equation*}
The estimate of $\mathcal I_2$ follows the same as in \cite{deng2020} (Lemma 1, estimates (15), (16), (17)), which gives us
\begin{equation}\label{I2 est}
	\begin{split}
	(\mathcal I_2)^2&=C\mathbb E\left[\int_{0}^{T} \|\Sigma_s-\widetilde \Sigma_{{\lfloor s/\eta \rfloor}\eta}\|^2 ds\right] \\
	& \leq C\sum_{j=1}^{2d}\sum_{k=0}^{{\lfloor T/\eta \rfloor}}\int_{k \eta}^{(k+1)\eta}\mathbb E\left[\|\Sigma_s(j)-\widetilde \Sigma^{\eta}_{k \eta}(j)\|^2\right]ds \\
	& \leq C\sum_{j=1}^{2d}\sum_{k=0}^{{\lfloor T/\eta \rfloor}}\int_{k \eta}^{(k+1)\eta}\mathbb E\left[\|\Sigma_s(j)- \Sigma_{k \eta}^{\eta}(j)+\Sigma_{k \eta}^{\eta}(j)-\widetilde \Sigma_{k \eta}^{\eta}(j)\|^2\right]ds \\
	& \leq C\sum_{j=1}^{2d}\sum_{k=0}^{{\lfloor T/\eta \rfloor}}\left[\int_{k \eta}^{(k+1)\eta}\mathbb E\left[\|\Sigma_s(j)- \Sigma_{k \eta}^{\eta}(j)\|^2\right]ds+\int_{k \eta}^{(k+1)\eta}\mathbb E\left[\|\Sigma_{k \eta}^{\eta}(j)-\widetilde \Sigma_{k \eta}^{\eta}(j)\|^2\right]ds\right] \\
	&\le Cd(1+T)\tilde \delta_2(\tau_1,\tau_2)\left(\eta+\max_{k}\sqrt{\mathbb E\left[|\psi_{k}|^2\right]}\right).
	\end{split}.
\end{equation}
where $\tilde \delta_2(\tau_1,\tau_2)=4(\sqrt{\tau_2}-\sqrt{\tau_1})^2$, and $\psi_{k}$ is the noise in the swapping rate. For convenience, the constant $C$ may vary from line to line. We mainly focus on showing the estimates for the term $\mathcal I_1$. Applying the inequality $\|a+b+c\|^2\le 3(\|a\|^2+\|b\|^2+\|c\|^2 )$, we get

\begin{equation}
	\begin{split}
	\mathcal I_1	\le& \underbrace{\mathbb E\left[\int_0^T\|\nabla G(\bbeta_s)-\nabla  G(\widetilde \bbeta^{\eta}_s)\|ds\right]}_{\mathcal I_{11}}+\underbrace{\Big(T\mathbb E\left[\int_0^T\|\nabla  G(\widetilde \bbeta^{\eta}_s)-\nabla  G(\widetilde \bbeta^{\eta}_{\lfloor s/\eta \rfloor\eta})\|^2ds\right]\Big)^{1/2}}_{\mathcal I_{12}}\\
		&+\underbrace{\Big(T\mathbb E\left[\int_0^T\|\nabla  G(\widetilde \bbeta^{\eta}_{\lfloor s/\eta \rfloor\eta})-\nabla \widetilde G(\widetilde \bbeta^{\eta}_{\lfloor s/\eta \rfloor\eta})\|^2ds\right]\Big)^{1/2}}_{\mathcal I_{13}}\\
		\le& \mathcal I_{11}+\mathcal I_{12}+\mathcal I_{13}.
	\end{split}
\end{equation}
By using the smoothness assumption \eqref{assu:ULip}, we first get 
\begin{equation*}
    \mathcal I_{11} \le L\mathbb E\left[\int_0^T\|\bbeta_s-\widetilde \bbeta^{\eta}_s\|ds\right].
\end{equation*}

Following \cite{deng2020} (Lemma 1, estimates (10)), we next get
\begin{equation}\label{est cI2}
    \begin{split}
      (  \mathcal I_{12})^2&\le TL^2\mathbb E\left[\int_0^T\|\widetilde \bbeta^{\eta}_s-\widetilde \bbeta^{\eta}_{\lfloor s/\eta \rfloor\eta}\|^2ds\right]\\
        &\le TL^2\sum_{k=0}^{\lfloor T/\eta\rfloor} \mathbb E\left[\int_{k\eta}^{(k+1)\eta}\|\widetilde \bbeta^{\eta}_s-\widetilde \bbeta^{\eta}_{\lfloor s/\eta \rfloor\eta} \|^2ds \right]\\
        &\le TL^2\sum_{k=0}^{\lfloor T/\eta\rfloor} \int_{k\eta}^{(k+1)\eta}\mathbb E\left[\sup_{k\eta\le s<(k+1)\eta}\|\widetilde \bbeta^{\eta}_s-\widetilde \bbeta^{\eta}_{\lfloor s/\eta \rfloor\eta} \|^2\right]ds. 
    \end{split}{}
\end{equation}{}
For $\forall~ k\in\mathbb N$ and $s\in [k\eta,(k+1)\eta)$, since $\widetilde \bbeta^{\eta}_{\lfloor s/\eta \rfloor\eta}$ stays constant  on $[k\eta,(k+1)\eta)]$, which does not involve reflection, we thus have
\begin{equation*}
    \begin{split}
        \widetilde \bbeta^{\eta}_s-\widetilde \bbeta^{\eta}_{\lfloor s/\eta \rfloor\eta}=\widetilde \bbeta^{\eta}_s-\widetilde \bbeta^{\eta}_{k\eta}=-\nabla\widetilde G(\widetilde \bbeta^{\eta}_{k\eta})\cdot(s-k\eta)+\widetilde\Si^{\eta}_{k\eta}\int_{k\eta}^sd\bW_r+\int_{k\eta}^s\widetilde \nu_r\widetilde L(dr),
    \end{split}{}
\end{equation*}
where $\int_{k\eta}^s\widetilde \nu_r\widetilde L(dr)$ ensures the process $ \widetilde \bbeta^{\eta}_s$ stays inside the domain $\Omega\times\Omega$. Since there is no swap for the diffusion matrix $\Si(\widetilde \alpha_{\lfloor s/\eta \rfloor \eta})$ for $s\in[k\eta,(k+1)\eta)$, the existence of $\int_{k\eta}^s\widetilde \nu_r\widetilde L(dr)$ follows from the Skorohod problem for a drift-diffusion process \cite{tanaka1979}. Applying It\^o's formula to $\| \widetilde \bbeta^{\eta}_s-\widetilde \bbeta^{\eta}_{\lfloor s/\eta \rfloor\eta}\|^2$, we have  
\begin{equation*}
\begin{split}
\| \widetilde \bbeta^{\eta}_s-\widetilde \bbeta^{\eta}_{\lfloor s/\eta \rfloor\eta}\|^2&= 2\int_{k\eta}^s \langle \widetilde\bbeta_r^{\eta}-\widetilde \bbeta^{\eta}_{\lfloor r/\eta \rfloor\eta}, \Si(\widetilde \alpha_{\lfloor r/\eta \rfloor \eta})d\bW_r\rangle -2\int_{k\eta}^s \langle \widetilde\bbeta_r^{\eta}-\widetilde \bbeta^{\eta}_{\lfloor r/\eta \rfloor\eta}, \nabla\widetilde G(\widetilde \bbeta^{\eta}_{\lfloor r/\eta \rfloor \eta})\rangle dr\\
&+2\int_{k\eta}^s \langle \widetilde\bbeta_r^{\eta}-\widetilde \bbeta^{\eta}_{\lfloor r/\eta \rfloor\eta}, \widetilde\nu_r\rangle \widetilde L(dr)\\
&\le 2\int_{k\eta}^s \langle \widetilde\bbeta_r^{\eta}-\widetilde \bbeta^{\eta}_{\lfloor r/\eta \rfloor\eta}, \Si(\widetilde \alpha_{\lfloor r/\eta \rfloor \eta})d\bW_r\rangle -2\int_{k\eta}^s \langle \widetilde\bbeta_r^{\eta}-\widetilde \bbeta^{\eta}_{\lfloor r/\eta \rfloor\eta}, \nabla\widetilde G(\widetilde \bbeta^{\eta}_{\lfloor r/\eta \rfloor \eta})\rangle dr,
\end{split}
\end{equation*}
where the last inequality follows from the fact that both $ \widetilde\bbeta_r^{\eta}$ and $\widetilde \bbeta^{\eta}_{\lfloor r/\eta \rfloor\eta}$ are inside the domain, $\widetilde\nu_r$ is a inner normal vector at $\widetilde\bbeta_s^{\eta}$, and $\widetilde L$ is non-negative measure, thus $ \langle \widetilde\bbeta_r^{\eta}-\widetilde \bbeta^{\eta}_{\lfloor r/\eta \rfloor\eta}, \widetilde \nu_r\rangle\le 0$, since the domain is convex. Taking the $\sup_{k\eta\le s\le (k+1)\eta}$ and the expectation $\mathbb E$ on both sides, and applying the Burkholder-Davis-Gundy inequality, Cauchy-Schwarz inequality and the Young inequality (i.e. $2ab\le \varepsilon a^2+b^2/\varepsilon$), we have
\begin{equation*}
\begin{split}
\mathbb E\Big[\sup_{k\eta\le s\le (k+1)\eta}\| \widetilde \bbeta^{\eta}_s-\widetilde \bbeta^{\eta}_{\lfloor s/\eta \rfloor\eta}\|^2\Big]&\le 2\mathbb E\Big[\|\sup_{k\eta\le s\le (k+1)\eta}\int_{k\eta}^s \langle \widetilde\bbeta_r^{\eta}-\widetilde \bbeta^{\eta}_{\lfloor r/\eta \rfloor\eta}, \Si(\widetilde \alpha_{k \eta})d\bW_r\rangle\|\Big]\\
& +2\mathbb E\Big[  \sup_{k\eta\le s\le (k+1)\eta}\int_{k\eta}^s\langle \widetilde\bbeta_r^{\eta}-\widetilde \bbeta^{\eta}_{\lfloor r/\eta \rfloor\eta} , \nabla\widetilde G(\widetilde \bbeta^{\eta}_{k \eta}) dr\rangle \Big]\\
&\le 2C \mathbb E\Big[ \Big(\int_{k\eta}^{(k+1)\eta} \|\langle \widetilde\bbeta_r^{\eta}-\widetilde \bbeta^{\eta}_{\lfloor r/\eta \rfloor\eta}, \Si(\widetilde \alpha_{k \eta})\rangle\|^2 dr\Big)^{1/2}\Big]\\
&+\mathbb E\Big[\sup_{k\eta\le s\le (k+1)\eta}\int_{k\eta}^s \| \widetilde\bbeta_r^{\eta}-\widetilde \bbeta^{\eta}_{\lfloor r/\eta \rfloor\eta}\|^2dr +\eta^2\|  \nabla\widetilde G(\widetilde \bbeta^{\eta}_{k \eta})\|^2\Big]\\
&\le C \mathbb E\Big[ \sup_{k\eta\le s\le (k+1)\eta} \|\langle \widetilde\bbeta_s^{\eta}-\widetilde \bbeta^{\eta}_{\lfloor s/\eta \rfloor\eta}\|^2\varepsilon+\frac{1}{\varepsilon}\tau_2^2\eta\Big]\\
&+\int_{k\eta}^{(k+1)\eta}\mathbb E\Big[\sup_{k\eta\le r\le (k+1)\eta} \| \widetilde\bbeta_r^{\eta}-\widetilde \bbeta^{\eta}_{\lfloor r/\eta \rfloor\eta}\|^2\Big]dr +\mathbb E[\eta^2\|  \nabla\widetilde G(\widetilde \bbeta^{\eta}_{k \eta})\|^2],
\end{split}
\end{equation*}
where $0<\varepsilon<1$ is a small constant. Pick $\varepsilon$ such that $C\varepsilon<1$, we end up with
\begin{equation*}
\begin{split}
\mathbb E\Big[\sup_{k\eta\le s\le (k+1)\eta}\| \widetilde \bbeta^{\eta}_s-\widetilde \bbeta^{\eta}_{\lfloor s/\eta \rfloor\eta}\|^2\Big]
&\le \frac{1}{1-C\varepsilon}\Big(   \frac{1}{\varepsilon}\tau_2^2\eta +\mathbb E[\eta^2\|  \nabla\widetilde G(\widetilde \bbeta^{\eta}_{k \eta})\|^2]\Big)\\
&+\frac{1}{1-C\varepsilon}\int_{k\eta}^{(k+1)\eta}\mathbb E\Big[\sup_{k\eta\le r\le (k+1)\eta} \| \widetilde\bbeta_r^{\eta}-\widetilde \bbeta^{\eta}_{\lfloor r/\eta \rfloor\eta}\|^2\Big]dr,
\end{split}
\end{equation*}
We further have the following estimates, 
\begin{equation*}
    \begin{split}
  \eta^2 \mathbb E[ \|\nabla \widetilde G(\widetilde \bbeta^{\eta}_{k\eta})\|^2]&=        \eta^2\mathbb E[ \|(\nabla G(\widetilde \bbeta^{\eta}_{k\eta})+\bphi_k)\|^2 ]\\
  &\le 2\eta^2\mathbb E[ \|\nabla  G(\widetilde \bbeta^{\eta}_{k\eta})-\nabla  G(\bbeta^*) \|^2+\|\bphi_k\|^2 ]\\
   &\le 4C^2\eta^2\mathbb E[\|\widetilde \bbeta^{\eta}_{k\eta}\|^2+\|\bbeta^*\|^2]+2\eta^2\mathbb E[\|\bphi_k\|^2 ],
    \end{split}{}
\end{equation*}
where the first inequality follows from the separation of the noise from the stochastic gradient and the choice of stationary point $\bbeta^*$ of $G(\cdot)$ with $\nabla G(\bbeta^*)=0$, and $\bphi_k$ is the stochastic noise in the gradient at step $k$. Thus, combining the above two parts with Grownall's inequality, we get 
\begin{equation*}
\begin{split}
&\mathbb E\Big[\sup_{k\eta\le s\le (k+1)\eta} \| \widetilde \bbeta^{\eta}_s-\widetilde \bbeta^{\eta}_{\lfloor s/\eta \rfloor\eta}\|^2\Big]\\
&\le \frac{1}{1-C\varepsilon} \Big(4L^2(1+\eta)\eta^2\mathbb E[\|\widetilde \bbeta^{\eta}_{k\eta}\|^2+\|\bbeta^*\|^2]+4(1+\eta)\eta^2\mathbb E[\|\bphi_k\|^2 ]+\frac{1}{\varepsilon}\tau_2^2\eta(1+\eta)\Big).
\end{split}
\end{equation*}
Plugging the above estimates back in \eqref{est cI2}, we get 
\begin{equation*}
    \begin{split}
        (\mathcal I_{12} )^2       &\le TL^2(1+T /\eta)\Big[4L^2(1+\eta)\eta^3\sup_{k\ge 0}\mathbb E[\|\widetilde \bbeta^{\eta}_{k\eta}\|^2+\|\bbeta^*\|^2]+4\eta^3(1+\eta)\max_{k\ge 0}\mathbb E[\|\bphi_k\|^2 ]+\frac{1}{\varepsilon}\tau_2^2\eta^2(1+\eta)\Big]\\
        &\le \tilde \delta_1(d, \tau_2, T, C,\varepsilon) \eta^2 + 4TL^2(1+T)\eta^2\max_{k}\mathbb E[\|\bphi_k\|^2],
    \end{split}
\end{equation*}
where $\tilde \delta_1(d, \tau_2, T, C,\varepsilon)$ is a constant depending on $d, \tau_2,  T, C,$ and $\varepsilon$. Note that the above inequality requires a result on the bounded second moment of   $\sup_{k\ge 0}\mathbb E[ \|\widetilde \bbeta_{k\eta}^{\eta}\|^2]$, which follows from the bounded domain assumption. We are now left to estimate the term $\mathcal I_{13}$ and we have
\begin{equation}
    \begin{split}
  ( \mathcal I_{13} )^2 &\le  T\sum_{k=0}^{\lfloor T/\eta \rfloor} \mathbb E\left[\int_{k\eta}^{(k+1)\eta}\|\nabla  G(\widetilde \bbeta^{\eta}_{k\eta})-\nabla \widetilde G(\widetilde \bbeta^{\eta}_{k\eta})\|^2ds \right]\\
   &\le T (1+T/\eta)\max_{k}\mathbb E[ \|\bphi_k\|^2]\eta\\
   &\le T (1+T)\max_{k}\mathbb E[ \|\bphi_k\|^2].
    \end{split}{}
\end{equation}

Combing all the estimates of $\mathcal I_{11},\mathcal I_{12}$ and $\mathcal I_{13}$, we obtain
\begin{equation}\label{I1 est}
    \begin{split}
        \mathcal I_1\le& \underbrace{L\int_0^T\mathbb E\left[\sup_{0\le s\le T} \|\bbeta_s-\widetilde \bbeta^{\eta}_s\|\right]ds}_{\mathcal I_{11}}+\underbrace{\Big( \tilde \delta_1(d, \tau_2, T, C, \varepsilon) \eta^2+ 4TL^2(1+T)\eta^2\max_{k}\mathbb E[\|\bphi_k\|^2]\Big)^{1/2}}_{\mathcal I_{12}}\\
        &+\underbrace{\Big(T (1+T)\max_{k}\mathbb E[ \|\bphi_k\|^2]\Big)^{1/2}}_{\mathcal I_{13}}.\\
    \end{split}{}
\end{equation}{}
Combining the estimates in $\mathcal I_1$ and $\mathcal I_2$, we have 
\begin{equation}\label{est by}
    \begin{split}
       \mathbb E\Big[\sup_{0\le t\le T} \|\by_t-\widetilde\by_t^{\eta}\|\Big]\le& \underbrace{L\int_0^T\mathbb E\left[\sup_{0\le t\le T} \|\bbeta_t-\widetilde \bbeta^{\eta}_t\|\right]dt}_{\mathcal I_{11}}\\
       &+\underbrace{\Big(\tilde \delta_1(d, \tau_2, T, C, \varepsilon) \eta^2+ 4TL^2(1+T) \eta^2\max_{k}\mathbb E[\|\bphi_k\|^2]\Big)^{1/2}}_{\mathcal I_{12}}\\
        &+\underbrace{\Big(T (1+T)\max_{k}\mathbb E[ \|\bphi_k\|^2]\Big)^{1/2}}_{\mathcal I_{13}}+\underbrace{\sqrt{ Cd(1+T)\tilde \delta_2(\tau_1,\tau_2)\left(\eta+\max_{k}\sqrt{\mathbb E\left[|\psi_{k}|^2\right]}\right)}}_{\mathcal I_2}
    \end{split}{}
\end{equation}{}

\noindent \textbf{Estimates of $\mathcal J$:}
we first observe that, 
\begin{equation*}
\begin{split}
\mathcal J&= \sqrt{\mathbb E\Big[4C\sup_{0\le t\le T} \|\by_t-\widetilde\by_t^{\eta}\|\Big]} \sqrt{\mathbb E\Big[ \Big(\int_0^Th_{\Omega\times\Omega}(\nu_s)  L(ds)+\int_0^Th_{\Omega\times\Omega}(\widetilde \nu_s) \widetilde L(ds)\Big)\Big]}\\
& \le \frac{1}{2}\Big(\frac{1}{\eta^{1/4}}\mathbb E\Big[4C\sup_{0\le t\le T} \|\by_t-\widetilde\by_t^{\eta}\|\Big] +\eta^{1/4}\mathbb E\Big[ \Big(\int_0^Th_{\Omega\times\Omega}(\nu_s)  L(ds)+\int_0^Th_{\Omega\times\Omega}(\widetilde \nu_s) \widetilde L(ds)\Big)\Big] \Big).
\end{split}
\end{equation*}
The above inequality follows from Young's inequality $2ab\le a^2\frac{1}{\eta^c}+b^2 \eta^c.$ The choice of $0<c<1/2$ will affect the rate in \eqref{sup norm est}. 
The first term follows from \eqref{est by}. We are left to estimate the second term.
For $s\in[k\eta,(k+1)\eta)$, $\bbeta_s$ and $\widetilde\bbeta_s$ are reflected diffusion process without swap, by It\^o's formula, we have 
\begin{equation}
\begin{split}
 2\int_{k\eta}^s \langle \widetilde\bbeta^{\eta}_r, -\widetilde\nu_r \rangle\widetilde L(dr)=& 2\int_{k\eta}^s \langle \widetilde\bbeta_r^{\eta}, \Si(\widetilde \alpha_{\lfloor r/\eta \rfloor \eta})d\bW_r\rangle -2\int_{k\eta}^s \langle \widetilde\bbeta_r^{\eta}, \nabla\widetilde G(\widetilde \bbeta^{\eta}_{\lfloor r/\eta \rfloor \eta})dr\rangle\\
 &+\|\widetilde \bbeta_{k\eta}^{\eta}\|^2-\|\widetilde \bbeta_s^{\eta}\|^2+2d(s-k\eta).
 \end{split}
\end{equation}
Take expectation on both sides, since the domain is convex, and $\widetilde \nu_s$ is an inner normal vector, similar to \cite{sebastien_bubeck} (Lemma 9), for the support function $h_{\Omega\times\Omega}(\cdot)$, we have 
\begin{equation}
\begin{split}
 \mathbb E[2\int_{k\eta}^s h_{\Omega\times\Omega}(-\widetilde\nu_r)\widetilde L(dr)]&\le \mathbb E[\int_{k\eta}^s \|\langle \widetilde\bbeta_r^{\eta}, \nabla\widetilde G(\widetilde \bbeta^{\eta}_{\lfloor r/\eta \rfloor \eta})\rangle\|dr]+2d(s-k\eta)\\
&\le \frac{(2d+R\widetilde L)(s-k\eta)}{2}\le \frac{(2d+R\widetilde L)\eta}{2},
\end{split}
\end{equation}
where $R$ is the upper bound of the diameter of the domain, and $\widetilde L$ is the upper bound of the gradient $\nabla \widetilde G$. Summing over the estimates $\int_{k\eta}^{(k+1)\eta} $ for $k=0,\cdots, \lfloor T/\eta \rfloor $, we get the bound,
\begin{equation}
\int_0^Th_{\Omega\times\Omega}(\widetilde \nu_s) \widetilde L(ds)\le \frac{(2d+R\widetilde L)T}{2}.
\end{equation}
The same estimate also holds true for $\int_0^Th_{\Omega\times\Omega}(\nu_s) L(ds)$, which gives us 
\begin{equation}\label{est J}
\mathcal J\le \frac{2C}{\eta^{1/4}}\mathbb E\Big[\sup_{0\le t\le T} \|\by_t-\widetilde\by_t^{\eta}\|\Big] +(2d+R\widetilde L)T\eta^{1/4}.
\end{equation}

\noindent\textbf{Estimates of \eqref{est sup norm}:}
Plugging the estimates for $\mathcal I$ \eqref{est by} and $\mathcal J$ \eqref{est J} into \eqref{est sup norm}, we have 
\begin{equation*}
\begin{split}
\mathbb E\Big[\sup_{0\le t\le T} \|\bbeta_t-\widetilde\bbeta_t^{\eta}\|\Big]&\le (1+\frac{2C}{\eta^{1/4}})\mathbb E\Big[\sup_{0\le t\le T} \|\by_t-\widetilde\by_t^{\eta}\|\Big] +(2d+R\widetilde L)T\eta^{1/4}\\
&\le \frac{C}{\eta^{1/4}}\mathbb E\Big[\sup_{0\le t\le T} \|\by_t-\widetilde\by_t^{\eta}\|\Big] +(2d+R\widetilde L)T\eta^{1/4}\\
&\le \frac{C}{\eta^{1/4}}\int_0^T\mathbb E\left[\sup_{0\le t\le T} \|\bbeta_t-\widetilde \bbeta^{\eta}_t\|\right]dt +(2d+R\widetilde L)T\eta^{1/4}\\
       &+\frac{C}{\eta^{1/4}}\Big(\tilde \delta_1(d, \tau_2, T, C, \varepsilon) \eta^2+ 4TL^2(1+T) \eta^2\max_{k}\mathbb E[\|\bphi_k\|^2]\Big)^{1/2}\\
        &+\frac{C}{\eta^{1/4}}\Big(T (1+T)\max_{k}\mathbb E[ \|\bphi_k\|^2]\Big)^{1/2}+\frac{C}{\eta^{1/4}}\Big(Cd(1+T)\tilde \delta_2(\tau_1,\tau_2)\left(\eta+\max_{k}\sqrt{\mathbb E\left[|\psi_{k}|^2\right]}\right)\Big)^{1/2}.
    \end{split}
\end{equation*}
Here the constant $C$ varies from line to line. Applying Grownall's inequality, we have 
\begin{equation}\label{sup norm est}
\mathbb E\Big[\sup_{0\le t\le T} \|\bbeta_t-\widetilde\bbeta_t^{\eta}\|\Big]\le \delta_1 \eta^{1/4} + \delta_2 \sqrt{\max_{k}\mathbb E[\|\bphi_k\|^2]}+\delta_3 \sqrt{\eta^{-1/2}\max_{k}\sqrt{\mathbb E\left[|\psi_{k}|^2\right]}},
\end{equation}
where $\delta_1$ is a constant depending on $\tau_1,\tau_2,d, T, C,R,L, \widetilde L$, $\varepsilon$; $\delta_2$ depends on $T$, and $C$; $\delta_3$ depends on $d, T$, and $C$. By the definition of $\mathcal{W}_1$ distance, we get the following convergence rate in $\mathcal{W}_1$ distance,
\begin{equation}
    \mathcal W_1(\mathcal L(\bbeta_T),\mathcal L(\widetilde\bbeta_T))\le \mathbb E[\|\bbeta_T-\widetilde\bbeta_T^{\eta} \|
    ]\le E\Big[\sup_{0\le t\le T} \|\bbeta_t-\widetilde\bbeta_t^{\eta}\|\Big].
\end{equation}
\end{proof}

\end{document}